\theoremstyle{plain}
\newtheorem{theorem}{Theorem}[section]
\newtheorem{proposition}[theorem]{Proposition}
\newtheorem{lemma}[theorem]{Lemma}
\theoremstyle{definition}
\newtheorem{definition}[theorem]{Definition}
\theoremstyle{remark}
\DeclareMathOperator{\indic}{\mathbbm{1}}
\DeclareMathOperator{\NN}{\mathbb{N}}
\DeclareMathOperator{\RR}{\mathbb{R}}
\let\P\relax
\DeclareMathOperator{\P}{\mathbb{P}}
\DeclareMathOperator{\Q}{\mathbb{Q}}
\DeclareMathOperator{\E}{\mathbb{E}}
\DeclareMathOperator{\B}{\mathcal{B}}
\DeclareMathOperator{\F}{\mathcal{F}}
\DeclareMathOperator{\G}{\mathcal{G}}
\DeclareMathOperator{\Prob}{Prob}
\DeclareMathOperator{\subg}{subG}
\newcommand{\DV}{\tilde{R}}
\newcommand{\DVnm}{\tilde{R}_{n,m}}
\newcolumntype{M}[1]{>{\centering\arraybackslash}m{#1}}
\newcommand{\defaultfontsize}{\f@size pt}
\newcommand{\semitransp}[2][35]{\textcolor{black!#1}{\footnotesize\selectfont #2}}
\definecolor{darkpastelgreen}{rgb}{0.01, 0.75, 0.24}
\icmltitlerunning{REMEDI: Corrective Transformations for Improved Neural Entropy Estimation}
\begin{document}

\twocolumn[
%\icmltitle{Mixture Model Corrective Transformations for Improved Neural Entropy Estimation}
\icmltitle{REMEDI: Corrective Transformations for Improved Neural Entropy Estimation}

% It is OKAY to include author information, even for blind
% submissions: the style file will automatically remove it for you
% unless you've provided the [accepted] option to the icml2024
% package.

% List of affiliations: The first argument should be a (short)
% identifier you will use later to specify author affiliations
% Academic affiliations should list Department, University, City, Region, Country
% Industry affiliations should list Company, City, Region, Country

% You can specify symbols, otherwise they are numbered in order.
% Ideally, you should not use this facility. Affiliations will be numbered
% in order of appearance and this is the preferred way.
\icmlsetsymbol{equal}{*}

\begin{icmlauthorlist}
\icmlauthor{Viktor Nilsson}{equal,kth}
\icmlauthor{Anirban Samaddar}{equal,argonne}
\icmlauthor{Sandeep Madireddy}{argonne}
\icmlauthor{Pierre Nyquist}{kth,chalmers}
\end{icmlauthorlist}

\icmlaffiliation{kth}{Department of Mathematics, KTH Royal Institute of Technology, Stockholm, Sweden}
\icmlaffiliation{argonne}{Mathematics and Computer Science Division, Argonne National Laboratory, Chicago IL, USA}
\icmlaffiliation{chalmers}{Department of Mathematical Sciences, Chalmers University of Technology and University of Gothenburg, Gothenburg, Sweden}

\icmlcorrespondingauthor{Viktor Nilsson}{vikn@kth.se}

% You may provide any keywords that you
% find helpful for describing your paper; these are used to populate
% the "keywords" metadata in the PDF but will not be shown in the document
\icmlkeywords{Machine Learning, ICML}

\vskip 0.3in
]

% this must go after the closing bracket ] following \twocolumn[ ...

% This command actually creates the footnote in the first column
% listing the affiliations and the copyright notice.
% The command takes one argument, which is text to display at the start of the footnote.
% The \icmlEqualContribution command is standard text for equal contribution.
% Remove it (just {}) if you do not need this facility.

%\printAffiliationsAndNotice{}  % leave blank if no need to mention equal contribution
\printAffiliationsAndNotice{\icmlEqualContribution} % otherwise use the standard text.

\begin{abstract}
    Information theoretic quantities play a central role in machine learning. 
    The recent surge in the complexity of data and models has increased the demand for accurate estimation of these quantities. 
    However, as the dimension grows the estimation presents significant challenges, with existing methods struggling already in relatively low dimensions. 
    To address this issue, in this work, we introduce \texttt{REMEDI} for efficient and accurate estimation of differential entropy, a fundamental information theoretic quantity.
    The approach combines the minimization of the cross-entropy for simple, adaptive base models and the estimation of their deviation, in terms of the relative entropy, from the data density.
    Our approach demonstrates improvement across a broad spectrum of estimation tasks, encompassing entropy estimation on both synthetic and natural data. 
    Further, we extend important theoretical consistency results to a more generalized setting required by our approach.
    We illustrate how the framework can be naturally extended to information theoretic supervised learning models, with a specific focus on the Information Bottleneck approach.
    It is demonstrated that the method delivers better accuracy compared to the existing methods in Information Bottleneck. In addition, we explore a natural connection between \texttt{REMEDI} and generative modeling using rejection sampling and Langevin dynamics.
\end{abstract}

\section{Introduction}

%\AS{Need a more conceptual first paragraph.}
Information theoretic quantities such as entropy, cross-entropy, mutual information, relative entropy (Kullback-Leibler divergence), and conditional entropy are abundant in machine learning.
Many learning algorithms are derived from such quantities, and recent advances have revealed that they can provide learning objectives on their own \cite{tishby2015deep, alemi2016deep}, or in combination with other terms \cite{sarra2021renormalized, kingma2013auto}.
In some settings, an information theoretic objective may reduce to a simple expression in practical machine learning algorithms; for example: minimizing the forward relative entropy $R(\P || \Q)$ with respect to $\Q$, having samples $\{x_i\} \sim \P$, can be achieved by minimizing the negative log-likelihood. 
However, these quantities are usually difficult to estimate even in moderately high dimensions \cite{gao2018demystifying}.

In this work, we turn our attention toward the estimation of the differential entropy (DE).
This quantity appears in many places throughout machine learning, such as reinforcement learning \cite{haarnoja2017reinforcement}, unsupervised learning \cite{sarra2021renormalized}, the Information Bottleneck method \cite{alemi2016deep, kolchinsky2019nonlinear}, and dimensionality reduction \cite{faivishevsky2008ica}.
Often, differential entropy serves the role of something to be maximized, perhaps under some constraints, such as in the maximum entropy approaches \cite{jaynes1957information}, or the Information Bottleneck method \cite{tishby2000information, tishby2015deep}.
Therefore, it is advantageous that an estimator is differentiable with respect to the data, something not true for many estimators, e.g., $k$-nearest neighbor-based estimators.
There exist kernel-based plug-in estimators that are differentiable, however, they are prohibitively data inefficient in dimensions as low as 10 (see Chapter~20.3 in \cite{wasserman2004all}).

Recent works \cite{schraudolph2004gradient, pichler2022differential}, have introduced gradient-based learning objectives as upper bounds to the differential entropy, in an extension of classical kernel density estimation techniques \cite{rosenblatt1956remarks, parzen1962estimation, ahmad1976nonparametric}.
However, these estimators still lie in the class of plug-in %\todo{define or highlight the first time we mention plug-in} 
estimators and are affected by the data inefficiency in large dimensions, see  Sec.~\ref{sec:limitofentropyestimators} and Appendix~\ref{sec:inefficiency-of-knife}.

To this end, our contributions are as follows ---
\vspace{-\topsep} 
\begin{itemize}
    \item  We introduce the \textit{Relative Entropy MixturE moDel corrective transformatIon} (\texttt{REMEDI}) approach that combines the strengths of recent advances in the estimation of information theoretic quantities
    \cite{pichler2022differential, belghazi2018mutual} to improve DE estimation.
    The approach takes modern plug-in entropy estimators and refines their estimates with corrections obtained via the Donsker-Varadhan formula.
    \item We present theoretical results proving the consistency of the proposed estimator, under the assumption that the data is sub-Gaussian, whereas existing related results put compactness assumptions on the support of the data.
    \item  We demonstrate the limitations of the current state-of-the-art plug-in entropy estimators in moderately high dimensions. We show that applying \texttt{REMEDI} corrections to the existing differential entropy estimators significantly improves entropy estimation in  
    benchmark datasets.
    \item  We discuss the application of our approach in supervised learning with the Information Bottleneck framework, to
    better estimate the mutual information between inputs and the latent space. On classification tasks with the MNIST, CIFAR-10, and ImageNet datasets, we show that \texttt{REMEDI} achieves better classification accuracy than state-of-the-art approaches.
    \item  We explore the generative modeling aspect of our approach by highlighting the connections of \texttt{REMEDI} with (i) rejection sampling, and (ii) stochastic differential equations. 
\end{itemize}

\section{Related works}

Estimators of differential entropy are usually classified as \textit{plug-in} estimates, \textit{sample-spacings} based estimates, and \textit{nearest neighbor} based estimates \cite{beirlant1997nonparametric}.
A classical estimator for the differential entropy is Parzen-window estimation, which is provably consistent under weak assumptions \cite{ahmad1976nonparametric}.
It is known that Parzen-window approximation is data-inefficient, see \cite{wasserman2004all}. 
Modern extensions of \cite{ahmad1976nonparametric} come from \cite{schraudolph2004gradient}, where each kernel has its precision matrix parametrized by its lower Cholesky factors, with respect to which it is also differentiable, which enables gradient-based learning.
In \cite{pichler2022differential}, this is further extended to involve parametrized means and weights for each kernel.
The three above-mentioned approaches are all plug-in estimators \cite{beirlant1997nonparametric}, where the latter two involve a training step, whereby a mixture of multivariate normal distributions are fitted to the data using gradient-based training.

In recent studies \cite{belghazi2018mutual}, a new method has been introduced for computing the mutual information $\operatorname{MI}(X; Y)$ between two random variables $X, Y$, based on neural networks and the Donsker-Varadhan representation formula.
The method exploits that the mutual information is the same as the relative entropy between $\P_{XY}$ and $\P_X \otimes \P_Y$, which makes the Donsker-Varadhan formula a lower bound for the mutual information.
However, recently \cite{mcallester2020formal} it has been shown that such bounds are statistically difficult to approximate when $\P_{XY}$ and $\P_X \otimes \P_Y$ are too different.
Using a base distribution, the same type of bounds on the relative entropy can be exploited for computing the differential entropy, then appearing as upper bounds, see Sec.~\ref{sec:method}. Recent studies
\cite{park2021deep, aharoni2022density} have taken this direction to neural differential entropy estimation.
However, the authors only consider very simple base distributions, which implies poor sample efficiency, likely introducing the problems presented in \cite{mcallester2020formal}.

In this work, we explore the combination of a more flexible mixture of Gaussian base distribution with the Donsker-Varadhan type bound to propose an improved neural density estimator. 
In addition, we theoretically prove that our estimator is consistent, under weaker assumptions on the data distribution compared to previous works \cite{belghazi2018mutual}.

\section{Method}\label{sec:method}
In this section, we present the \texttt{REMEDI} approach for entropy estimation. We state the limitations of the existing entropy estimation approaches and discuss the ways \texttt{REMEDI} addresses these limitations. Furthermore, we present the optimization algorithm for solving the loss function using finite data samples.

\subsection{Preliminary concepts}
In this section, we define the necessary concepts that will be used throughout the paper.

Let $\P$ and $\Q$ be two probability measures, such that $\P \ll \Q$, on a Polish space $\mathcal{X}$ and its Borel $\sigma$-algebra $\Sigma$.
$\mathcal{X}$ may be taken to be $\RR^d$ or a (closed) subset thereof.

The relative entropy, or Kullback-Leibler divergence, from $\P$ to $\Q$ is defined as
\begin{equation}
    R(\P \parallel \Q) = \E^{\P}\left[\log\left(\frac{d\P}{d\Q}\right)\right] = \int \log\left(\frac{d\P}{d\Q}\right) d\P
\end{equation}

\noindent The Donsker-Varadhan representation for the relative entropy states that
\begin{equation}\label{eq:DV-repr}
    R(\P \parallel \Q) = \sup_{T: \mathcal{X} \to \RR} \E^{\P}[T] - \log\E^{\Q}[e^T],
\end{equation}
where the supremum is taken over the class of continuous bounded functions, or the class of Borel-measurable functions from $\mathcal{X}$ to $\RR$ \cite{donsker1983asymptotic, budhiraja2019analysis}.

Assume that $\P, \Q \ll \lambda$%\todo{$\Q$ too?}
, where $\lambda$ is the Lebesgue measure.
Then they have densities $p, q$.
The definition of the differential entropy of $\P$ is
\begin{equation}
    H(\P) = - \E^{\P}\left[\log \frac{d\P}{d\lambda}\right] = - \E^{\P}\left[\log p \right].
\end{equation}

\subsection{Limitations of existing entropy estimation approaches}
\label{sec:limitofentropyestimators}

Given $n$ samples $x_1, x_2, ..., x_n$ from the data distribution $p(x)$ an oracle estimator of entropy is the Monte Carlo estimator: $\hat{H}_{Oracle}(x) = n^{-1} \sum_{i=1}^{n} - \log p(x_i)$. However, in practice, we rarely know the true data density $p(x)$. Therefore, a large body of literature focuses on finding an accurate plug-in estimator of the density $\hat{p}(x)$ which can replace $p(x)$ in the oracle estimator.

The Parzen window density estimation for $p(x)$ amounts to specifying a bandwidth $h$ and then letting $\hat{p}(x) = \frac{1}{nh}\sum_{i=1}^n \phi(\frac{x - x_i}{h})$, where $\phi$ is the standard isotropic normal density in $\RR^d$.
In \cite{schraudolph2004gradient} this is improved by parametrizing the positive definite covariance matrices of the individual kernels by their lower Cholesky factors, allowing for gradient-based learning of them.

\begin{figure}
  \begin{subfigure}{0.2375\textwidth}
    \centering
    \includegraphics[width=\linewidth]{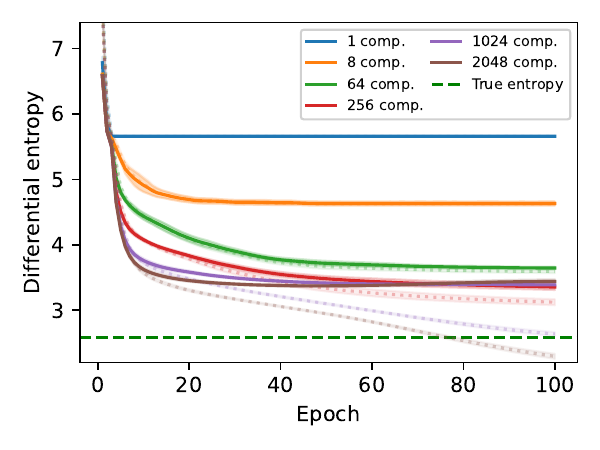}
    \caption{Triangle}
    \label{fig:triangle_8d_varcomps}
  \end{subfigure}%
  \begin{subfigure}{0.2375\textwidth}
    \centering
    \includegraphics[width=\linewidth]{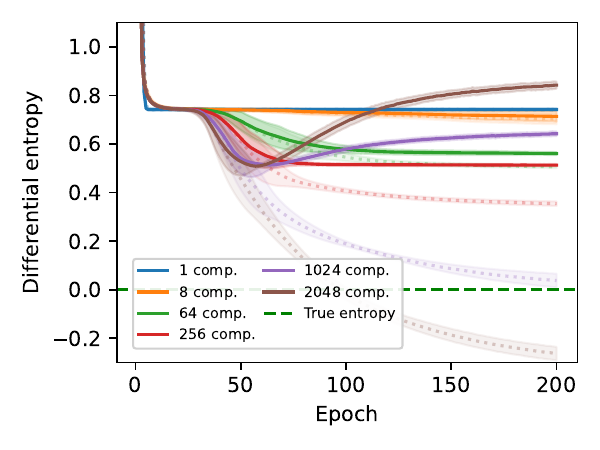}
    \caption{Ball}
    \label{fig:ball_8d_varcomps_losses}
  \end{subfigure}
  \caption{KNIFE training curves with error bars on $8$-dimensional triangle and uniform ball datasets. It is observed that increasing the number of components $M$ for KNIFE leads to overfitting in both datasets.}
  \label{fig:8d_varcomps_losses}
\end{figure}

In \cite{pichler2022differential}, the authors propose a plug-in density estimator, KNIFE, that is a learnable mixture of $M$ multivariate Gaussian kernels. 
However, the number of mixture components, $M$ used in KNIFE, is treated as a hyperparameter to be optimized using discrete grid search. 
In our experiments, we found that training KNIFE requires delicate tuning of the number of components $M$. 
In the training, KNIFE minimizes the cross-entropy loss function which is an upper bound to the true entropy i.e. $H(\mathbb{P}) \leq \mathcal{L}_{\mathrm{KNIFE}}$ (Eq.~6 in \cite{pichler2022differential}). 
In Fig.~\ref{fig:8d_varcomps_losses}, we present the validation (solid) and training (dotted) loss curves by fitting KNIFE on $8$-dimensional triangle and uniform ball (see Sec.~\ref{sec:datasets}) datasets. 
We observe that increasing $M$ leads to overfitting on both datasets. 
In addition, the best estimate of the entropy on the validation set lies significantly away from the true entropy (green dotted line). The estimation errors are even worse when the data dimension $20$ (see Fig.~\ref{fig:20d_varcomps_losses} in Appendix~\ref{sec:additional-synthetic-benchmarks}).
This behavior is in line with the data inefficiency of the simpler kernel density estimators in moderately large dimensions \cite{wasserman2004all}.
To this end, we propose \texttt{REMEDI} that applies a correction to any simple learnable base density. 
We show that the \texttt{REMEDI} estimator is theoretically consistent and is tractable using existing optimization tools.

\subsection{The \texttt{REMEDI} approach}

In this section, we derive a bound that is tight to the true entropy $H(\mathbb{P})$ and can be parametrized for efficient optimization. Using Eq.~\eqref{eq:DV-repr}, we have,
\vspace{-\topsep}
\begin{equation}\label{eq:entropy-est-DV}
\begin{split}
    H(\P) &= -\E^{\P}\left[\log p \right] \\
    %&= -\E^{\P}\left[\log q \right] - \E^{\P}\left[\log \left(\frac{p}{q}\right) \right] \\
    &= -\E^{\P}\left[\log q \right] - \E^{\P}\left[\log \left(p/q\right) \right] \\
    &= -\E^{\P}\left[\log q \right] - R(\P \parallel \Q) \\
    &= -\E^{\P}\left[\log q \right] - \sup_{T: \mathcal{X} \to \RR} (\E^{\P}[T]  - \log\E^{\Q}[e^T]) \\
\end{split}
\end{equation}
\noindent The first term in Eq.~\eqref{eq:entropy-est-DV} is known as the cross-entropy from $\P$ to $\Q$.
Working with Eq.~\eqref{eq:entropy-est-DV}, we have,
\vspace{-\topsep}
\begin{equation}\label{eq:entropy-est-DV-2}
    H(\P) \leq -\E^{\P}\left[\log q \right] - \left(\E^{\P}[T] - \log\E^{\Q}[e^T]\right) %\\
   \eqcolon \mathcal{L}_{\texttt{REMEDI}}
\end{equation}
where equality in Eq.~\eqref{eq:entropy-est-DV-2} holds when taking the infimum on the right-hand side with respect to $T$. 
It can be shown that minimizing Eq.~\eqref{eq:entropy-est-DV-2} is equivalent to minimizing a cross-entropy loss between the Gibbs density $\tilde{p}$ induced by $T$ and true density $p$. 
Proposition~\ref{Thm:DV_loss} in the appendix provides more insight into this bound in Eq.~\eqref{eq:entropy-est-DV-2}.

Proposition~\ref{Thm:DV_loss} implies that, (i) for any choice of $\Q$ optimizing RHS of Eq.~\ref{eq:entropy-est-DV-2} is equivalent to optimizing a cross-entropy loss between $\tilde{p}$ and true density $p$, and (ii) the optimal solution is reached when the associated density is equal to the true density. Therefore, we use Eq.~\eqref{eq:entropy-est-DV-2} as a loss function, denoted by $\mathcal{L}_{\texttt{REMEDI}}$, which we minimize to estimate $H(\mathbb{P})$.

\subsection{Algorithm}

Given a set of $n$ samples $\{x_i\}_{i=1}^n$ from the data distribution $\P$ and $m$ independent samples $\{\tilde{x}_j\}_{j=1}^m$ from $\Q$, we can minimize $\mathcal{L}_{\texttt{REMEDI}}$ with standard gradient-based optimization tools, by considering its empirical counterpart: 
\begin{equation}\label{eq:emploss}
\begin{split}
    \hat{\mathcal{L}}_\texttt{REMEDI} &= \underbrace{\frac{1}{n} \sum_{i=1}^{n} - \log q(x_i)}_{\hat{\mathcal{L}}_{\mathrm{KNIFE}}}  \\
    &- \underbrace{\left(\frac{1}{n} \sum_{i=1}^{n} T(x_i) - \log\left(\frac{1}{m} \sum_{i=1}^{m} e^{T(\tilde{x}_i)}\right)\right)}_{\hat{\mathcal{L}}_{\mathrm{DV}}}. \\
\end{split}
\end{equation}
The loss function $\hat{\mathcal{L}}_{\texttt{REMEDI}}$ has two components, (i) the cross-entropy loss $\hat{\mathcal{L}}_{\mathrm{KNIFE}}$ for training a KNIFE base distribution, and (ii) the Donsker-Varadhan loss $\hat{\mathcal{L}}_{\mathrm{DV}}$. 
Here, $\tilde{x}_1,..., \tilde{x}_n$ represents $n$ samples from the base distribution. 
Although we chose KNIFE as the base distribution $\Q$ it can be replaced by any distribution with a tractable likelihood and efficient sampling scheme. 
In our experiments, we found that KNIFE with a few components is a good candidate for the base distribution. 
Following \cite{belghazi2018mutual} we parametrize $T$ as a neural network, which provides a flexible class for function approximation \cite{hornik1989multilayer}. 

We present the implementation details of \texttt{REMEDI} in Algorithm~\ref{REMEDI}. The parameters $\theta_{\text{KNIFE}},\phi_{T}$ denote the parameters of KNIFE and the neural network $T$ respectively. Note that, optimizing $\hat{\mathcal{L}}_{\mathrm{DV}}$ using naive gradient routines in, e.g., Pytorch \cite{paszke2019pytorch} introduces bias in the stochastic gradient estimation of the log of expectation term in the DV loss (see Eq.12 in \cite{belghazi2018mutual}).
To alleviate this we use large batch-sizes and exponential moving average \cite{belghazi2018mutual} in our experiments.

\begin{algorithm}
\caption{\texttt{REMEDI} algorithm}\label{REMEDI}
\begin{algorithmic}[1]
\STATE Draw $n$ minibatch sample $(x_1, ..., x_n)$ from $p(x)$  
\STATE Initialize $(\theta_{\text{KNIFE}},\phi_{T})$
\FOR{$k_1$ epochs}%\Comment{KNIFE step}
\STATE Update $\theta_{\text{KNIFE}}$ optimizing cross-entropy loss using the samples 
\ENDFOR
\STATE Draw $m$ samples $(\Tilde{x}_1, ..., \Tilde{x}_m)$ from KNIFE
\FOR{$k_2$ epochs}
\STATE Use $(x_1, ..., x_n)$ and $(\Tilde{x}_1, ..., \Tilde{x}_m)$
to update $\phi_{T}$ by optimizing the loss $\hat{\mathcal{L}}_{\mathrm{DV}}$ 
\ENDFOR
\STATE \textbf{return} $\hat{\theta}_{\text{KNIFE}},\hat{\phi}_{T}$
\end{algorithmic}
\end{algorithm}

\subsection{Theoretical results}

To assess the validity of the \texttt{REMEDI} approach to entropy estimation, we show in Appendix~\ref{sec:consistency} that, under weak conditions on $\P$ and $\Q$, the estimator satisfies an appropriate form of consistency.
This requires considering the limit of increasing both the amount of samples $n$ taken from $\P$ and $m$, taken from $\Q$.
By the law of large numbers, $\hat{\mathcal{L}}_{\mathrm{KNIFE}} = n^{-1} \sum_{i=1}^n -\log q(x_i) \to C(\P || \Q)$, and what needs to be shown for consistency is that, under appropriate conditions, $\hat{\mathcal{L}}_{DV} \to R(\P || \Q)$.
This is the done in Theorem~\ref{thm:remedi-consistency-non-compact}.
Then $\hat{\mathcal{L}}_{\mathrm{REMEDI}}$, by Eq.~\eqref{eq:emploss}, converges to $H(\P)$.

An analogous result for mutual information estimation is stated and proved in \cite{belghazi2018mutual}.
However, they require that $\mathcal{X}$ be compact.
This is much too strong for our purposes, since any linear combination or convolution of Gaussians has infinite tails.
For example (see Sec.~\ref{sec:experiments}), the two moons dataset has this property and, even more importantly, a common modeling choice for the latent space conditional distribution in the Information Bottleneck framework is to use a multivariate normal.%\todo{Anirban, check.}

\section{Experiments}\label{sec:experiments}

In this section, we perform experiments to evaluate \texttt{REMEDI} in various unsupervised and supervised tasks ranging from differential entropy estimation, deep latent variable models, and generative models. 
We demonstrate using synthetic and natural datasets that \texttt{REMEDI} can easily be applicable in these tasks.

\subsection{Entropy estimation}

Entropy estimation is an important step in many real-world datasets. However, complex intrinsic features of the data distribution, e.g. high-dimensionality, and multi-modality can make the task challenging. 
In this section, we apply \texttt{REMEDI} for the entropy estimation of two such complex datasets: (1) Two moons, and (2) Triangle dataset \cite{pichler2022differential} (see Appendix~\ref{sec:additional-synthetic-benchmarks} for additional benchmarks). 
On these datasets, we compare \texttt{REMEDI} with KNIFE \cite{pichler2022differential} the current state-of-the-art approach for entropy estimation. 
Furthermore, we analyze the inner workings of how \texttt{REMEDI} corrects the base distribution. 

\subsubsection{Two moons}

We apply our method to the popular two moons dataset from Scikit-learn \cite{scikit-learn}.
Here for illustration, we use 8 components for the KNIFE base-distribution.
In Fig.~\ref{fig:two-moons-loss} we see that the training process for KNIFE planes out around its final estimate at around $0.45$.
Here, the \texttt{REMEDI} takes over and manages to push the estimate down below $0.30$. 
This is close to the true entropy of the dataset; here, the entropy offers no closed-form expression but an oracle computation, using kernel density estimation on one million samples, yields a value close to $0.29$, see Appendix~\ref{sec:two-moons-dataset}.
\begin{figure*}[!ht]
	\centering
	\begin{subfigure}{0.30\textwidth}
    	\centering
        \includegraphics[width=\linewidth]{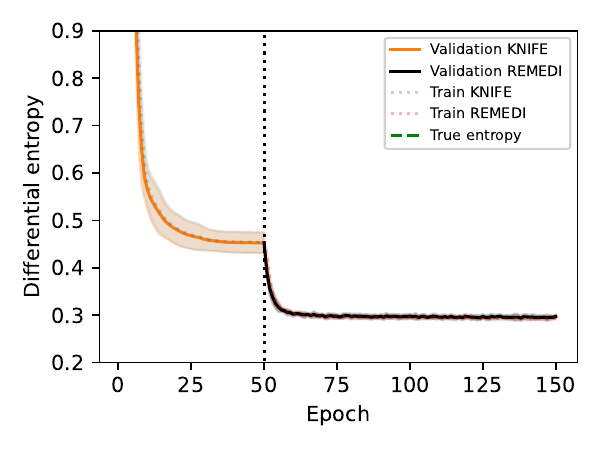}
    	\caption{Training curve}
        \label{fig:two-moons-loss}
	\end{subfigure}
	\begin{subfigure}{0.33\textwidth}
    	\centering
        \includegraphics[width=\linewidth]{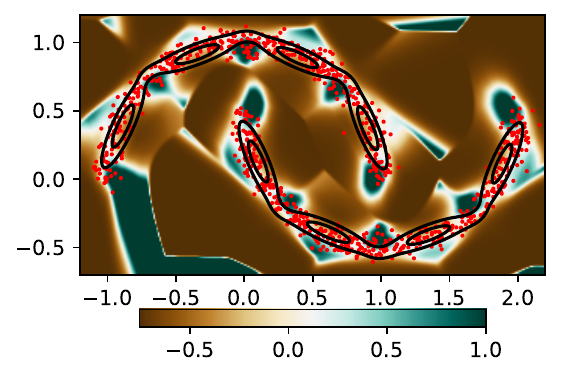}
    	\caption{$T$ (image) vs. $q$ (contours).}
        \label{fig:two-moons-correction}
    \end{subfigure}
    \begin{subfigure}{0.33\textwidth}
    	\centering
        \includegraphics[width=\linewidth]{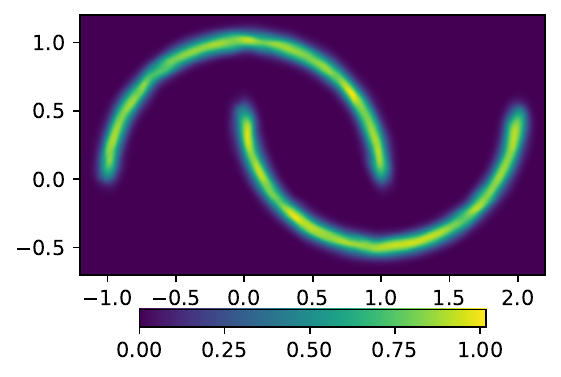}
    	\caption{Unnormalized density $q e^{T}(x)$}
        \label{fig:two-moons-qT}
    \end{subfigure}
    \caption{Results on two moons dataset. In the middle we see what direction (positive or negative) \texttt{REMEDI} affects the base distribution. To the right is the unnormalized distribution implied by $q(x) e^{T(x)}$.}
    \label{fig:two-moons}
\end{figure*}

To facilitate a better understanding of the learning mechanism of \texttt{REMEDI}, Fig.~\ref{fig:two-moons-correction} shows a visualization of the correction given by the method, compared to the contours of KNIFE, i.e. $\Q$, and samples from $\P$.
We can see that \texttt{REMEDI} reinforces low-probability regions of the data, with respect to $\Q$, by putting higher relative corrections there.
This is in line with the interpretation of $T$ as learning the (log) unnormalized Radon-Nikodym derivative $\frac{d \P}{d \Q}$.
Exploiting this notion, we provide the corresponding learned density plot in Fig.~\ref{fig:two-moons-qT}.

\subsubsection{Triangle mixture}
In this section, we compare \texttt{REMEDI} to KNIFE on the triangle mixture dataset \cite{pichler2022differential}. 
This dataset consists of samples from a mixture of triangle distributions resulting in many modes. 
Applying \texttt{REMEDI} to the one-dimensional version of the dataset yields Fig.~\ref{fig:loss-triangle-1d}.
Since the normalizing constant is easily computed in one dimension, the normalized learned distribution along with the true density, is shown in Fig.~\ref{fig:pdf-triangle-1d}.
\begin{figure}[]
	\centering
	\begin{subfigure}{0.4\textwidth}
    	\centering
        \hspace{-1.5em}
        \includegraphics[width=\linewidth]{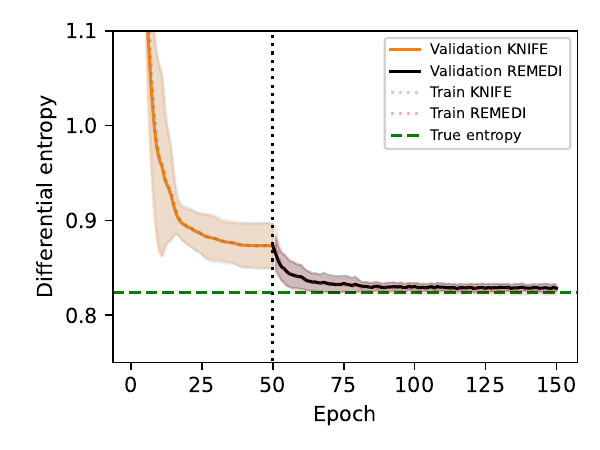}
        %\captionsetup{width=10cm}
        \caption{Training curve with standard deviations from $10$ repetitions.}
        \label{fig:loss-triangle-1d}
    \end{subfigure}
	\begin{subfigure}{0.4\textwidth}
    	\centering
        \includegraphics[width=\linewidth]{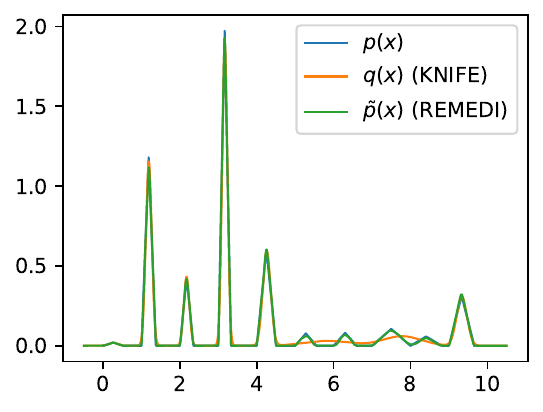}
        %\captionsetup{width=10cm}
        \caption{Comparison of \texttt{REMEDI} estimated pdf with KNIFE.}
        \label{fig:pdf-triangle-1d}
	\end{subfigure}
    \caption{Results on one-dimensional triangle dataset. On the bottom is the data distribution, compared to the density that KNIFE and \texttt{REMEDI} (up to a constant) has learned.}
    \label{fig:triangle-1d}
\end{figure}
The high-dimensional versions of this dataset constitute challenging targets for our model.
In \cite{pichler2022differential}, the authors showcase decent results in 8 dimensions, although still missing the target by several integer points.
This is the eight-fold product distribution of one-dimensional two-component distributions, resulting in a $2^8$-modal distribution, see Appendix~\ref{sec:datasets}.
The true entropy if this distribution is $2.585$.
Applying a 16-component KNIFE estimator to this task results in an estimates of around $4.36$, where the improvement stops.
Adding \texttt{REMEDI} on top of this model improves this estimate considerably to $3.08$, %\SM{mention the percentage improvement},
see Fig.~\ref{fig:loss-triangle-8d} for the full behavior.
Both estimates can be further improved by adding more components to the KNIFE-based $\Q$.
In Appendix~\ref{sec:additional-synthetic-benchmarks}, %\SM{appendix?} 
we show that KNIFE with an increasing number of components easily fails before reaching a good estimate, due to overfitting issues, while using fewer with a \texttt{REMEDI} correction is much more efficient.

\begin{figure}[]
    \centering
    \includegraphics[width=0.64\linewidth]{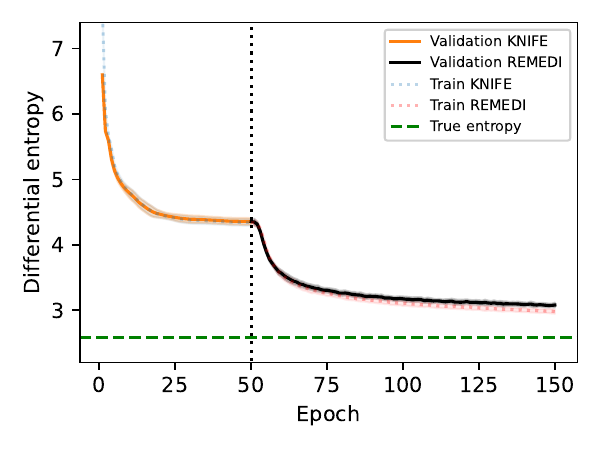}
    \caption{Training curve from the 8-dimensional triangle dataset. The horizontal dashed line indicates where the KNIFE training phase ends and \texttt{REMEDI} takes over.}
    \label{fig:loss-triangle-8d}
\end{figure}
\begin{table}[]
    \centering
    \begin{tabular}{|l|l|l|}
    \hline
        Methods & Entropy estimate \\
        \hline
        KNIFE &  $4.3563 \pm 0.0528$ \\
        \hline
        \texttt{REMEDI} & $3.0798 \pm 0.0368$ \\
        \hline
        True & $2.5852$ \\ 
        \hline
    \end{tabular}
    \caption{Estimates on 8-dimensional triangle dataset.}
    \label{tab:estimates-triangle-8d}
\end{table}

\subsection{Application to Information Bottleneck}

Information Bottleneck (IB) \cite{tishby2000information} is a popular latent variable model that aims to learn a representation $Z$ from the input $X$ that is maximally compressive of $X$ and maximally predictive of the output $Y$. In practice, the learning problem is posed as a maximization problem \cite{tishby2015deep} of the IB loss function:
\begin{equation}
   \mathcal{L}_{IB}(Z) = \mathrm{MI}(Z; Y) - \beta \ \mathrm{MI}(X; Z)
  \label{eq:IB_lagrangian}
\end{equation}
The loss function in Eq.~\eqref{eq:IB_lagrangian} is in terms of the two mutual information quantities. $\operatorname{MI}(Z;Y)$ measures the predictive information contained in $Z$ and $\operatorname{MI}(X;Z)$ measures the information about $X$ contained in $Z$. Maximizing %\SM{Equation} 
Eq.~\eqref{eq:IB_lagrangian} implies compressing the inputs while simultaneously maximizing the predictive information in the representation $Z$. The Lagrange multiplier $\beta$ serves to control the amount of compression.  

The Information Bottleneck has been applied in deep learning \cite{alemi2016deep, achille2018information} to learn compressed representation from high-dimensional inputs e.g. images and texts that are highly predictive of the low-dimensional targets e.g. labels. The usual practice is to parametrize the representation $Z$ by a stochastic encoder $p_\psi(z|x)$. Following \cite{alemi2016deep}, we assume $p_\psi(z|x)$ to be a multivariate Gaussian distribution with mean $\mu(X)$ and a diagonal covariance matrix $\Sigma(X)$. However, it is infeasible to calculate the mutual information $\operatorname{MI}(X; Z)$, especially for complex datasets.
Therefore, there are many methods proposed in the literature for accurate estimation of $\operatorname{MI}(X; Z)$ ranging from parametric \cite{alemi2016deep}, non-parametric (kernel density based) \cite{kolchinsky2019nonlinear,pichler2022differential}, and adversarial f-divergences \cite{belghazi2018mutual,zhai2022adversarial}.  

The application of \texttt{REMEDI} to estimate $\operatorname{MI}(X;Z)$ is straightforward. From the decomposition $\operatorname{MI}(X;Z) = H(Z) - H(Z|X)$ and the fact that we can analytically derive $H(Z|X)$, estimation of the mutual information boils down to accurately estimating the entropy $H(Z)$. We follow the Algorithm~\ref{REMEDI} to apply \texttt{REMEDI} by replacing $p(x)$ with the stochastic encoder $p_\psi(z|x)$. In applying \texttt{REMEDI}, we assume a coordinate-wise independent isotropic Gaussian and a 10-component KNIFE as the choices for the base distribution. Our choice for the number of components in KNIFE is based on the number of classes in the data (see Fig.2 in \cite{alemi2016deep}) and available GPU memory to fit the parameters. Additional details about the implementation are provided in the Appendix~\ref{sec:additionaldetailsexperiments}.

\begin{figure*}
    \centering
	\centering
	\begin{subfigure}{0.28\textwidth}
	\centering
	\includegraphics[width=\linewidth]{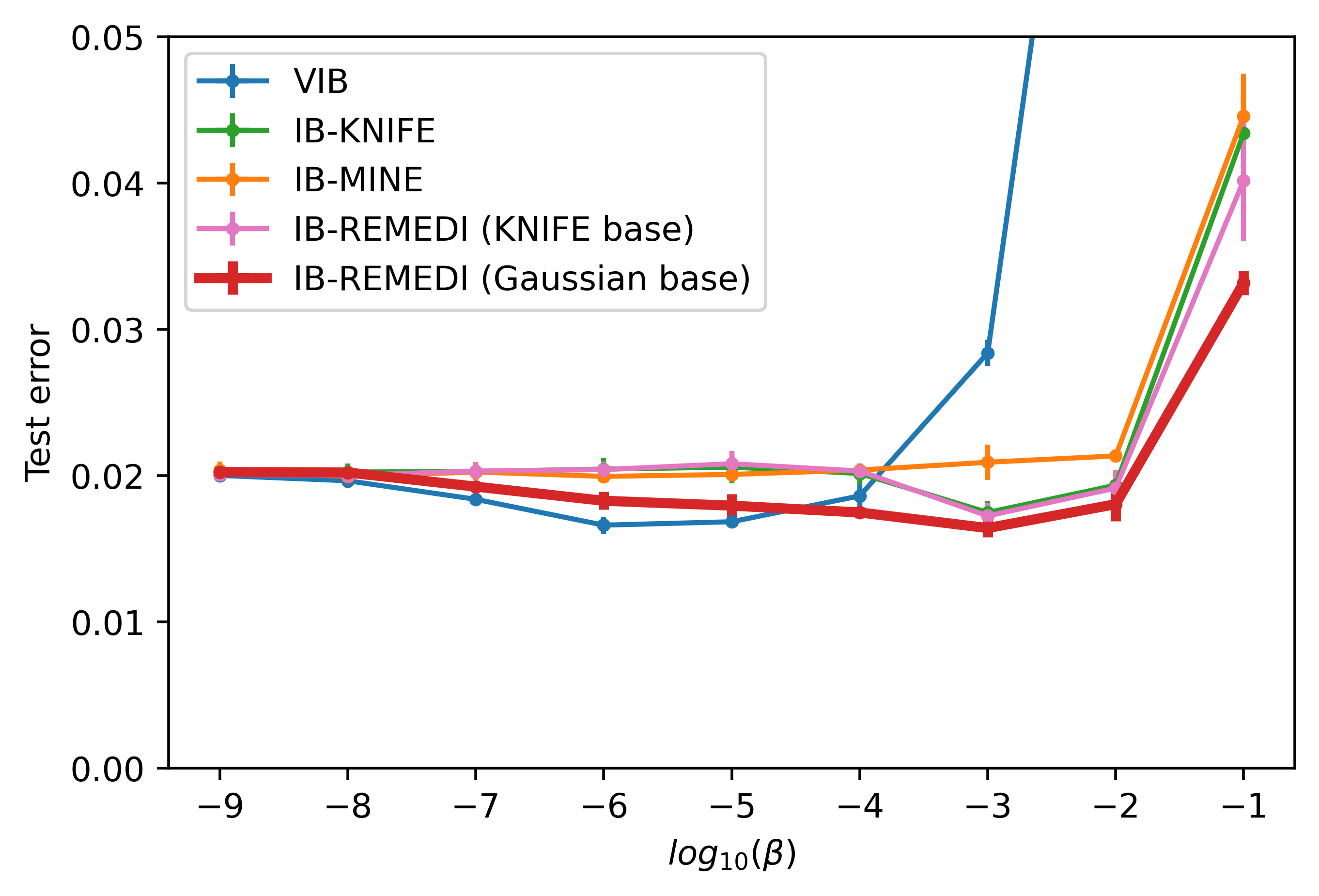}
	\caption{MNIST}
	\end{subfigure}
        \hspace{1cm}
	\begin{subfigure}{0.28\textwidth}
	\centering
	\includegraphics[width=\linewidth]{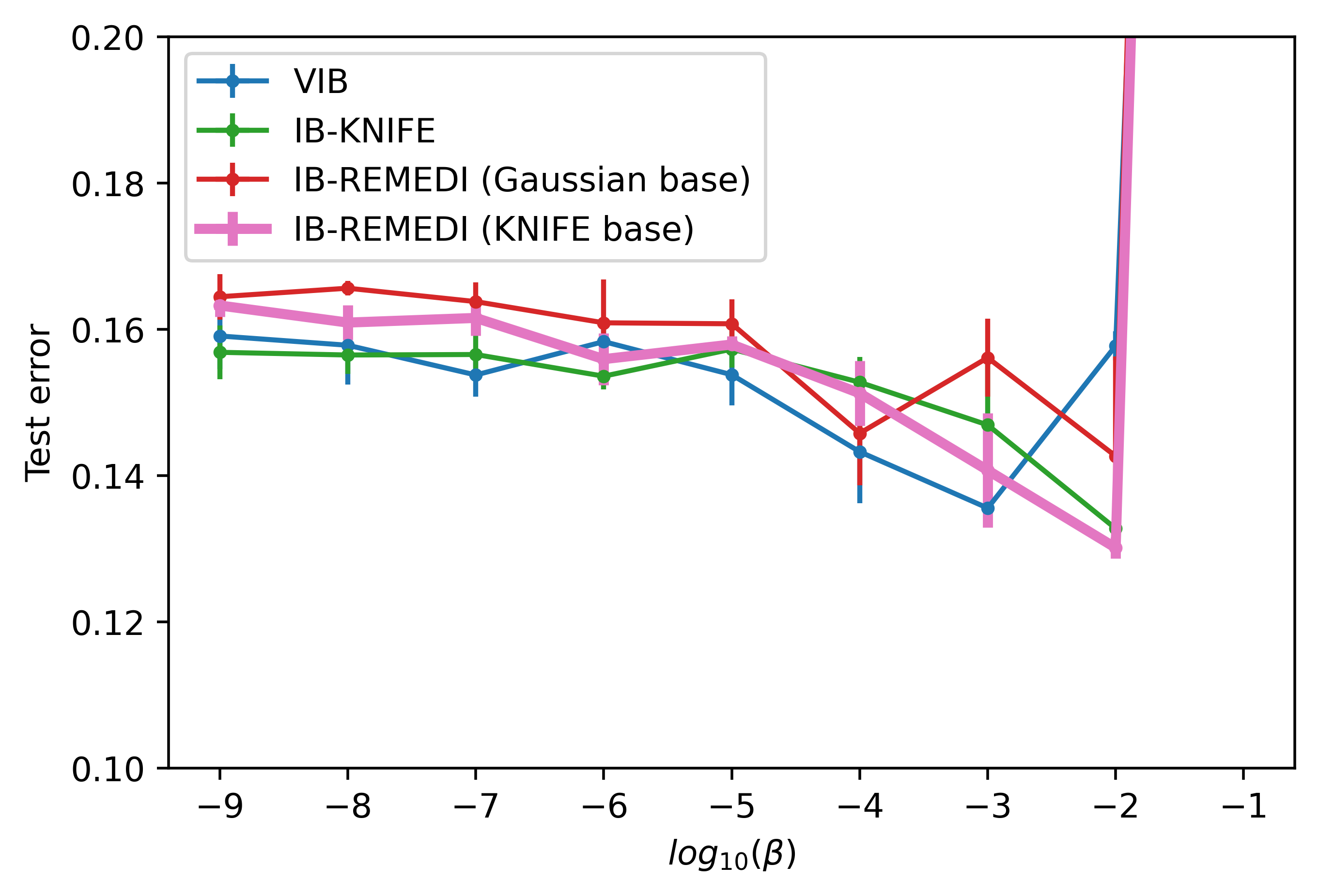}
	\caption{CIFAR10}
	\end{subfigure}
        \hspace{1cm}
	\begin{subfigure}{0.28\textwidth}
	\centering
	\includegraphics[width=\linewidth]{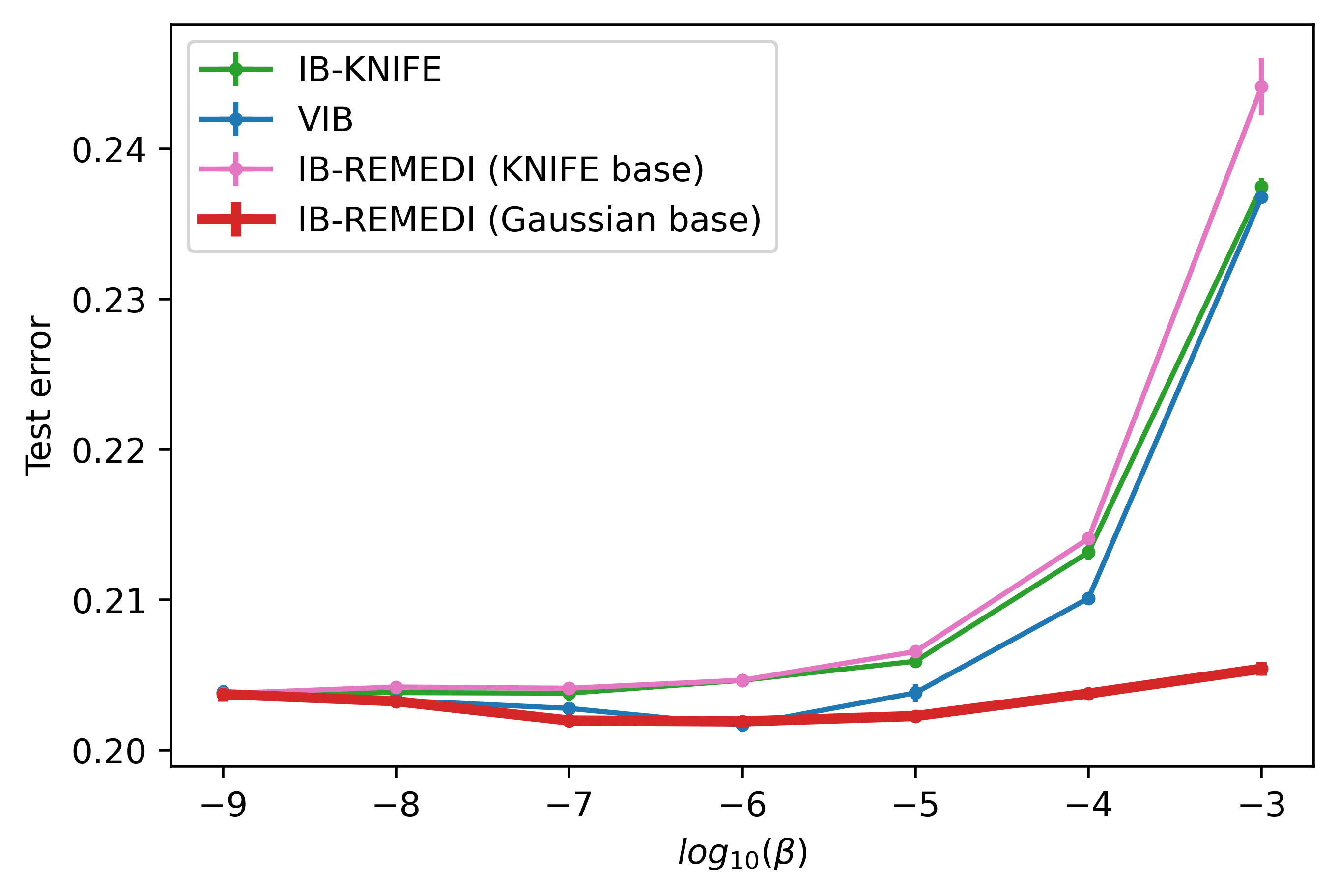}
	\caption{ImageNet}
	\end{subfigure}
    \caption{Plot showing test error of the Information Bottleneck methods vs $\beta$ on benchmark image classification datasets (error bars represent standard deviations). For most $\beta$ values, consistently \texttt{REMEDI} performs better than other methods on MNIST and ImageNet. On CIFAR10, the classification errors are similar for all the methods. However, \texttt{REMEDI} exhibits the lowest classification error across the $\beta$ values.}
    \label{fig:classification}
\end{figure*}

We perform experiments to compare \texttt{REMEDI} against state-of-the-art mutual information estimation approaches: VIB \cite{alemi2016deep}, KNIFE \cite{pichler2022differential}, and MINE \cite{belghazi2018mutual}. In our experiments, we followed the open-source implementation of these approaches (the code for our implementation is provided in the supplementary materials). We evaluate \texttt{REMEDI} and other approaches in image classification tasks on MNIST, CIFAR-10, and ImageNet datasets. In our experiments, we use the standard training and test splits for these datasets and closely follow the network architectures from \cite{pmlr-v206-samaddar23a}. We run all methods for three different seeds. See Appendix~\ref{sec:additionaldetailsexperiments} for additional details.

\paragraph{Classification accuracy:} In this section, we compare the classification accuracy of each method on the test splits of the three datasets. For evaluation metrics, we compute the test set classification error and the log-likelihood ($\propto \operatorname{MI}(Z;Y)$). We present the "1 shot eval" from \cite{alemi2016deep} where we take one sample from the encoder and pass it to the decoder for prediction.

In Fig.~\ref{fig:classification}, we plot the classification errors against the Lagrange multiplier $\beta$ for the three datasets. On MNIST and ImageNet, we observe that \texttt{REMEDI} consistently exhibits the lowest classification errors for most of the $\beta$ values. On CIFAR10, all methods are seen to perform similarly, however, \texttt{REMEDI} exhibits the lowest classification error across the $\beta$ values. Across the datasets, we observe improvements in accuracy by \texttt{REMEDI}, especially around the values where the classification errors start to increase. This region is interesting because it contains the \textit{minimum necessary information} (MNI) point \cite{fischer2020conditional} where the model retains the necessary information from the inputs to predict the target and minimizes the redundant information from the inputs. Additionally, we present similar plots based on log-likelihood in Appendix~\ref{sec:results_loglike} where the conclusions remain unchanged.  

In Table~\ref{tab:test_MNISTCIFAR}, we present the best test accuracy and corresponding log-likelihood across $\beta$ for all the methods on the three datasets. On MNIST and CIFAR10, \texttt{REMEDI} performs better than all methods in terms of the metrics. On ImageNet, \texttt{REMEDI} performs similarly to the VIB method. However, it learns significantly less information about the inputs measured (160.67 bits) by $\widehat{\operatorname{MI}}(X;Z)$ than the VIB (237.71 bits). We note that on CIFAR10 and ImageNet we found stability issues with the MINE implementation perhaps due to the exponential term in the Donsker-Varadhan lower bound \cite{mcallester2020formal}.

In these experiments, we present results of two different base distributions for \texttt{REMEDI}. We found that choosing an independent isotropic Gaussian base distribution is computationally more efficient than choosing a trainable base distribution such as KNIFE. Additionally, our results indicate that choosing a Gaussian base distribution increases the classification accuracy of \texttt{REMEDI} especially on MNIST and ImageNet.

\begin{table*}
\centering
\resizebox{\textwidth}{!}{   
\begin{tabular}{|c|c|c|c|c|c|c|}
   \hline
   \multirow{2}{1.5cm}{\textbf{Methods}} & \multicolumn{2}{c|}{\textbf{MNIST}} & \multicolumn{2}{c|}{\textbf{CIFAR-10}} &
   \multicolumn{2}{c|}{\textbf{ImageNet}}\\
   \cline{2-7}
   & \textbf{Acc} \% & \textbf{LL} & \textbf{Acc} \% & \textbf{LL} & \textbf{Acc} \% & \textbf{LL}\\
	\hline
       KNIFE \cite{pichler2022differential} & 98.25 \semitransp{(0.093)} & 3.22 \semitransp{(0.005)}  & 86.73 \semitransp{(0.394)}  & 2.57 \semitransp{(0.007)}   & 79.62 \semitransp{(0.053)} & 8.55 \semitransp{(0.006)} \\
       MINE \cite{belghazi2018mutual} & 98.01 \semitransp{(0.040)}  & 3.22 \semitransp{(0.001)}  & - & - & - & -\\
	VIB \cite{alemi2016deep} & 98.34 \semitransp{(0.070)} & \textbf{3.24} \semitransp{(0.0003)}  & 86.45 \semitransp{(0.135)}  & 2.60 \semitransp{(0.013)}  & \textbf{79.83} \semitransp{(0.061)} & \textbf{8.65} \semitransp{(0.003)} \\
        \hline
        \texttt{REMEDI} (Gaussian base) & \textbf{98.36} \semitransp{(0.076)}  &  \textbf{3.24} \semitransp{(0.003)}   & 85.74 \semitransp{(1.671)}  &  \textbf{2.61} \semitransp{(0.056)} & 79.81 \semitransp{(0.047)} & 8.63 \semitransp{(0.002)}  \\
	\texttt{REMEDI} (KNIFE base) & 98.28 \semitransp{(0.105)}  & 3.22 \semitransp{(0.004)}   & \textbf{86.99} \semitransp{(0.176)}  & 2.57 \semitransp{(0.005)}  & 79.61 \semitransp{(0.015)} & 8.55 \semitransp{(0.006)} \\
   \hline
   \end{tabular}
   }
   \caption{Comparison of the best test accuracy and corresponding log-likelihood across the $\beta$ values for MNIST, CIFAR-10, and ImageNet (standard deviations in the parenthesis). In terms of the classification accuracy, on ImageNet \texttt{REMEDI} performs close to the best performing method. However, on MNIST and CIFAR10 \texttt{REMEDI} shows improvement over the state-of-the-art IB method.}
    \label{tab:test_MNISTCIFAR}
\end{table*}

\paragraph{Analysis of the latent space}
To understand the learning mechanism of \texttt{REMEDI}, we try to visualize the Information Bottleneck latent space. For the sake of visualization, we train an IB model using \texttt{REMEDI} with 2-d latent space on MNIST.
Training using Algorithm~\ref{REMEDI} involves learning the KNIFE base distribution and the corrections applied by the \texttt{REMEDI}. Therefore, we analyze how each of the two components captures the marginal distribution induced by the IB latent space.
\begin{figure}[H]
    \centering
	\centering
	\begin{subfigure}{0.15\textwidth}
	\centering
	\includegraphics[width=\linewidth]{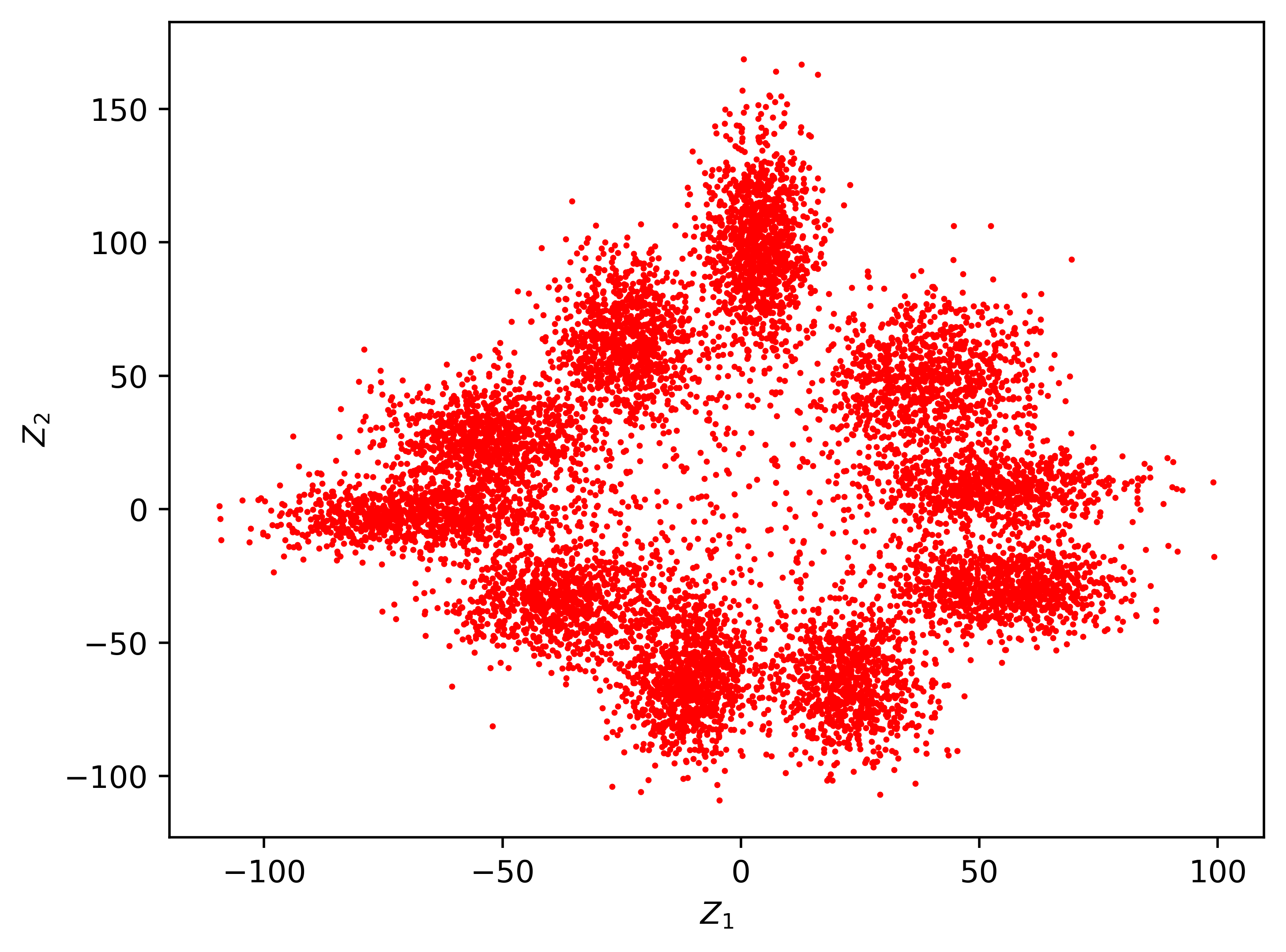}
	\caption{Encoder samples}
        \label{fig:encsamples}
	\end{subfigure}
	\begin{subfigure}{0.16\textwidth}
	\centering
	\includegraphics[width=\linewidth]{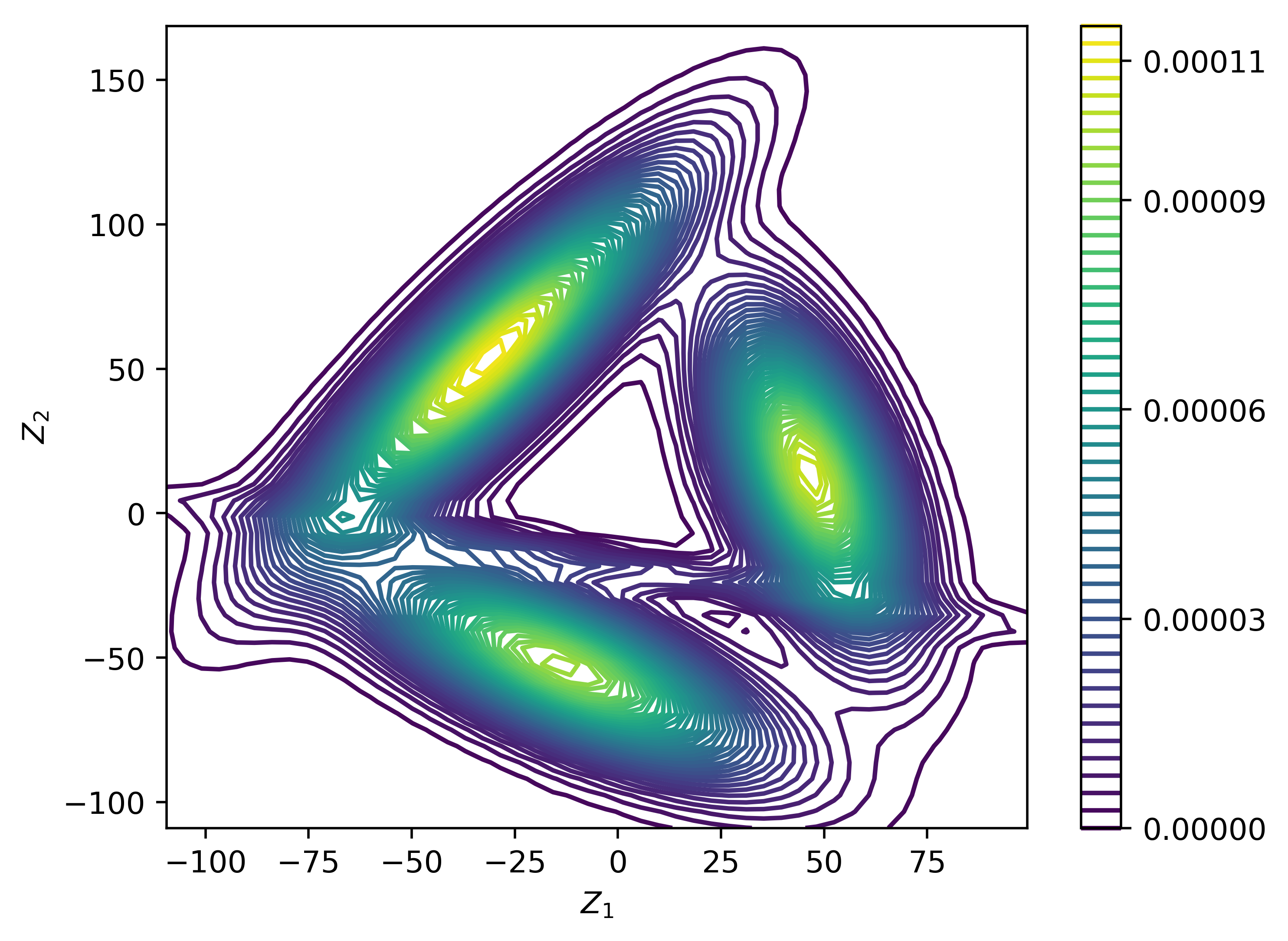}
	\caption{KNIFE}
        \label{fig:knifedensity}
	\end{subfigure}
	\begin{subfigure}{0.16\textwidth}
	\centering
	\includegraphics[width=\linewidth]{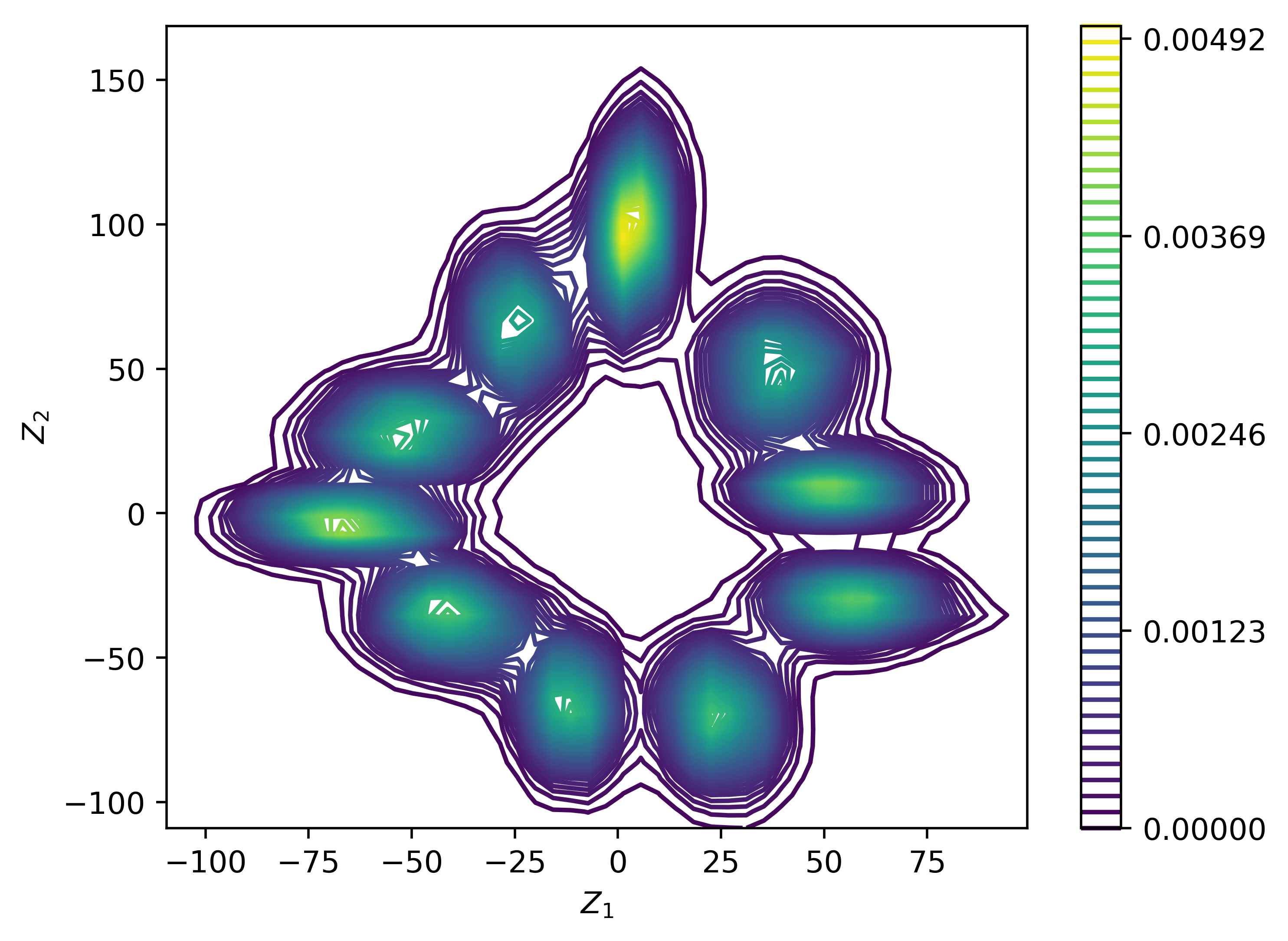}
	\caption{\texttt{REMEDI}}
        \label{fig:REMEDI}
	\end{subfigure}
    \caption{\texttt{REMEDI} marginal distribution of 2-d latent space on MNIST.}
    \label{fig:latentspace}
\end{figure}
We observe that the samples from the marginal distribution of $Z$ exhibit a clustering pattern in Fig.~\ref{fig:encsamples} where we can identify 10 cluster components. These clusters represent the 10 MNIST digits \cite{alemi2016deep}. The KNIFE base distribution, although having 10 mixture components, learns mixture distribution in Fig.~\ref{fig:knifedensity} with three identifiable modes. We note that KNIFE struggles with identifying clusters that are overlapping. In Fig.~\ref{fig:REMEDI}, we observe that applying the \texttt{REMEDI} based corrections helped improve the density estimation significantly. Accurate marginal density estimation in Fig.~\ref{fig:REMEDI} implies the latent space is properly regularized. In Appendix~\ref{sec:latentspace_evolution}, we perform this analysis throughout the training process from which the conclusions were similar. A similar analysis is also presented for CIFAR-10 in the Appendix~\ref{sec:latentcifar10}. This perhaps explains the improvement in accuracy shown by \texttt{REMEDI} over KNIFE in classification tasks on MNIST and CIFAR-10.

\subsection{Generative Models}
One useful by-product of fitting a neural network $T$ is that we have implicitly defined a density $\tilde{p}(x) \coloneq \frac{1}{C} q(x) e^{T(x)}$, with an unknown $C$, that approximates $p(x)$. This can be used in two common strategies for Monte Carlo sampling: rejection sampling and sampling based on Langevin dynamics. These two strategies and corresponding experiments are briefly explained in the following subsections.

\subsubsection{Rejection sampling}

Rejection sampling to sample from $\tilde p$, using the density $q$ of $\Q$ for comparisons, amounts to the following procedure. First, draw a sample $X$ from $\Q$. This sample is then accepted or rejected based on comparing $\tilde p (X)$ and $q(X)$. More concretely, if there is a constant $\tilde C \in (1, \infty)$ such that $\tilde p(x) / \tilde C q(x) \leq 1 $ for all $x$ such that $\tilde p(x) > 0$, then $X$ is accepted with probability  $\tilde p (X) / \tilde C q(X)$. 

From the specific form of $\tilde p$, we see that rejection sampling here requires a constant $\tilde{C}$ such that $e^{T(x)} \leq \tilde{C}$. For such a constant, a sample $X$ from $\mathbb{Q}$ is accepted with probability $\phi (X)$, where $\phi(x) \coloneqq \frac{e^{T(x)}}{\tilde{C}}$.

Revisiting the two moons dataset, using the learned $T$, rejection sampling is performed by taking $\tilde{C}$ as the maximum of $e^T$ over the dataset, plus a small margin.
In the experimental run illustrated in Fig.~\ref{fig:two-moons-rejection-sampling}, $10000$ samples from $\Q$ were used, out of which $2012$ samples were accepted. 

\begin{figure}[H]
	\centering
	\begin{subfigure}{0.2375\textwidth}
    	\centering
        \includegraphics[width=\linewidth]{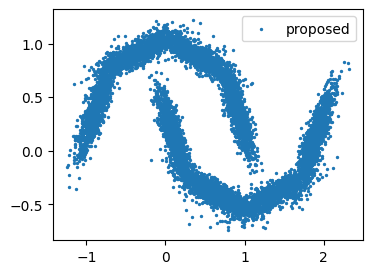}
    \end{subfigure}
	\begin{subfigure}{0.2375\textwidth}
    	\centering
        \includegraphics[width=\linewidth]{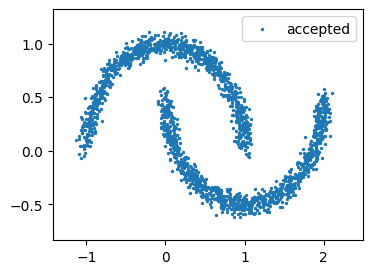}
	\end{subfigure}
    \caption{Left: $10000$ proposals from $\Q$. Right: $2012$ accepted samples.}
    \label{fig:two-moons-rejection-sampling}
\end{figure}

\subsubsection{Stochastic Differential Equations}

\begin{figure}[H]
    \hspace{-0.6em}
    \includegraphics[width=0.5\textwidth]{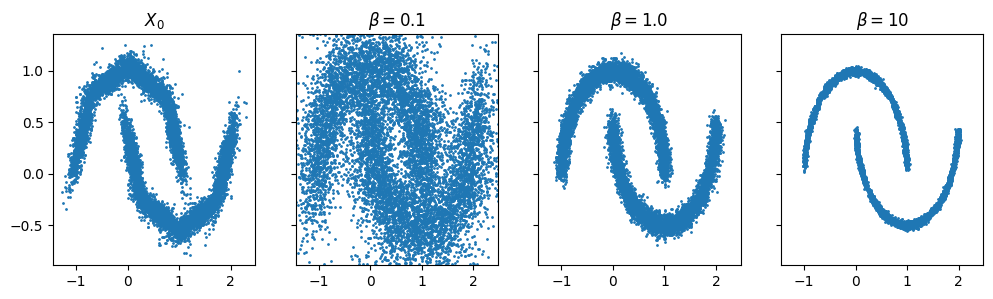}
    \caption{$\Q$-samples $X_0$ (leftmost) and $X_{t_H}$ after simulating Eq.~\eqref{eq:langevin} with different $\beta$.}
    \label{fig:two-moons-sde-samples}
\end{figure}
An alternative to rejection sampling, that does not require sampling a large number of random variables from $\mathbb{Q}$, is to use Langevin dynamics based on $q$ and $T$ to sample from $\tilde p$. This strategy is based on the stochastic differential equation
\begin{equation}\label{eq:langevin}
    dX_t = - \nabla V (X_t) dt + \sqrt{2 \beta^{-1}} dW_t, \quad X_0 = x_0,
\end{equation}
where the drift is defined by $V(x) = - \left(\log q (x) + T(x) \right)$; $\beta \in (0, \infty)$ is referred to as the inverse temperature. It is straightforward to show that the invariant distribution of $X = \{ X_t \} _{t \geq 0}$ is given by
\begin{equation}\label{eq:gibbs}
    \frac{1}{Z} e^{- \beta V(x)},
\end{equation}
where $Z$ is the normalizing constant. Note that for the choice $\beta = 1$, the exponent is precisely $- V(x) = \left(\log q (x) + T(x) \right)$ and the density of the invariant distribution corresponds precisely to $\tilde{p} $. Based on this, the Langevin dynamics Eq.~\eqref{eq:langevin} can be used to obtain samples from $\tilde p$. 

Fig.~\ref{fig:two-moons-sde-samples} illustrates the outcome of the following experiment: Taking $X_0 \sim \Q$, we set the time horizon to $ t_H = 0.1$, and simulate Eq.~\eqref{eq:langevin} using the Euler-Maruyama method with a discretization parameter $\Delta t = 0.001$ (i.e., 100 time steps).

\section{Conclusion / Future work}
In this paper, we introduce \texttt{REMEDI}, a mixture model corrective transformation approach that combines recent plug-in based entropy estimators with Donsker-Varadhan based objectives, to improve the estimation of information theoretic quantities.
We demonstrate the applicability of our approach to a variety of tasks ranging from entropy estimation, supervised learning, to generative models. We theoretically show the consistency of the \texttt{REMEDI} estimator under non-compactly supported data distributions, which is required by our framework. 
Using a range of benchmark datasets, we show that \texttt{REMEDI} outperforms the state-of-the-art entropy estimation approaches.
We discuss the application of \texttt{REMEDI} to the Information Bottleneck. 
We show that using this approach improves the performance of Information Bottleneck in classification tasks on MNIST, CIFAR-10, and ImageNet compared to the current state-of-the-art.
In addition, we show proof-of-concept that the \texttt{REMEDI} framework can be applied to generative tasks, using approaches such as rejection sampling and Langevin diffusion sampling.

It remains to be seen how this method performs in very high dimension, for example, what results can be acquired when performing entropy estimation on image data such as MNIST.
When using a uniform base distribution, \cite{park2021deep} finds that their method has some interpretable results on MNIST, but it seems that it only weakly learns to distinguish between in-distribution and out-of-distribution data.
Most likely, the sample efficiency is much too low to learn much about the dataset.
In these settings, instead using Gaussian mixture models such as KNIFE, copulas, or even normalizing flows may lead to base distributions that provide samples close to the low-dimensional data manifold, while also having tractable density functions.
Also, following the successful application of the Langevin diffusion approach presented here, the connection to the recently popularized score-based diffusion models \cite{song2020score} should be explored.

\section*{Acknowledgements}

The computations were enabled by the Berzelius resource provided by the Knut and Alice Wallenberg Foundation at the National Supercomputer Centre, and the computational resources of the Argonne Leadership Computing Facility, which is a DOE Office of Science User Facility supported under Contract DE-AC02-06CH11357, Laboratory Computing Resource Center (LCRC) at the Argonne National Laboratory. AS and SM were supported by the U.S. Department of Energy, Office of Science, Office of Fusion Energy Sciences, under contract DE-AC02-06CH11357 and Advanced Scientific Computing Research, through the SciDAC-RAPIDS2 institute under Contract DE-AC02-06CH11357. VN and PN were supported by the Wallenberg AI, Autonomous Systems and Software Program (WASP) funded by the Knut and Alice Wallenberg Foundation. PN was also supporrted in part by the Swedish Research Council (VR-2018-07050, VR-2023-03484).

\section*{Impact Statement}

This paper presents work whose goal is to advance the field of Machine Learning.
There are many potential societal consequences of our work, none which we feel must be specifically highlighted here.

\bibliography{ms}
\bibliographystyle{icml2024}

%%%%%%%%%%%%%%%%%%%%%%%%%%%%%%%%%%%%%%%%%%%%%%%%%%%%%%%%%%%%%%%%%%%%%%%%%%%%%%%
%%%%%%%%%%%%%%%%%%%%%%%%%%%%%%%%%%%%%%%%%%%%%%%%%%%%%%%%%%%%%%%%%%%%%%%%%%%%%%%
% APPENDIX
%%%%%%%%%%%%%%%%%%%%%%%%%%%%%%%%%%%%%%%%%%%%%%%%%%%%%%%%%%%%%%%%%%%%%%%%%%%%%%%
%%%%%%%%%%%%%%%%%%%%%%%%%%%%%%%%%%%%%%%%%%%%%%%%%%%%%%%%%%%%%%%%%%%%%%%%%%%%%%%
\newpage
\appendix
\onecolumn
\section{Consistency of \texttt{REMEDI}}\label{sec:consistency}

\noindent Techniques such as Parzen-window estimation and KNIFE are weaker than \texttt{REMEDI} but are known to be consistent under rather weak assumptions \cite{ahmad1976nonparametric}.
This requires letting bandwidths go to zero and the number of mixture components to infinity, sometimes quickly becoming intractable in even moderate dimensions \cite{wasserman2004all}.
To justify the method of letting ReLU-networks take some of the load and doing away with the infinitely many components, it is important to understand whether such a consistency result can be restated with the growing complexity of the function class of networks taking its place.

In this section, we will show that REMEDI is consistent, in the sense that it can approximate the relative entropy arbitrarily well with many samples.
\cite{belghazi2018mutual} show consistency of their mutual information estimator, under the (strong) assumption that the data and distributions are compactly supported.
Such an assumption is undesirable in our case, since the data $\P$ may have infinite tails (e.g. two moons) and the base distributions $\Q$ are Gaussian (mixtures). 

It is possible to adapt the results of \cite{belghazi2018mutual}, to our entropy estimation setting.
However, the generalization to non-compact supports is non-trivial, and we provide it here for a specific class of neural networks, very similar to the ones that we have used.

Assume that $\P$ and $\Q$ are probability measures on $\RR^d$ with the Borel $\sigma$-algebra $\B(\RR^d)$, such that $\P \ll \Q$.
Recall that the \textit{relative entropy}, or KL-divergence, between $\P$ and $\Q$ is defined as

\begin{equation}
    R(\P \parallel \Q) \coloneq \E^{\P}\left[\log \frac{d\P}{d\Q}\right].
\end{equation}

\noindent For a measurable function $T: \RR^d \to \RR$, define the functional

\begin{equation}\label{eq:R-functional}
    \DV(T) \coloneq \E^{\P}[T] - \log\E^{\Q}[e^T],
\end{equation}

\noindent and with a slight abuse of notation let, for a family $\F$ of functions $T: \RR^d \to \RR$,

\begin{equation}\label{eq:R-hat-functional}
    \DV(\F) \coloneq \sup_{T \in \F} \DV(T) = \sup_{T \in \F} (\E^{\P}[T] - \log\E^{\Q}[e^T]).
\end{equation}

\noindent The motivation behind these definitions is the central Donsker-Varadhan representation of the relative entropy, here restated in terms of $\DV$.

\begin{proposition}[Donsker-Varadhan]
    Let $\P \ll \Q$, then

    \begin{equation}
        R(\P \parallel \Q) = \sup_{T \in C_b} \DV(T) = \DV({C_b}),
    \end{equation}

    \noindent where $C_b$ is the set of bounded continuous functions.
    The supremum may also be taken over bounded measurable functions.
    The supremum is attained at $T = \log \frac{d\P}{d\Q}$, which may not be bounded or continuous.
\end{proposition}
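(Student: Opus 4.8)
The plan is to establish the chain
\[
  \sup_{T \in C_b}\DV(T)
  \;\le\; \sup_{T\text{ bdd.\ meas.}}\DV(T)
  \;\le\; R(\P \parallel \Q)
  \;=\; \DV(T^\star)
  \;\le\; \sup_{T \in C_b}\DV(T),
\]
with $T^\star \coloneqq \log\frac{d\P}{d\Q}$, so that every term collapses to $R(\P \parallel \Q)$. The leftmost inequality is immediate since $C_b$ sits inside the class of bounded measurable functions, so the real work lies in the middle inequality (a uniform upper bound), the equality (attainment at $T^\star$), and the rightmost inequality (a lower bound obtained by approximation).

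\emph{Upper bound.} For any measurable $T$ with $\E^\Q[e^T]\in(0,\infty)$ --- in particular any bounded $T$ --- let $\Q_T$ be the tilted probability measure $d\Q_T = e^T/\E^\Q[e^T]\,d\Q$, which is equivalent to $\Q$, so $\P \ll \Q_T$. Writing $\log\frac{d\P}{d\Q_T} = \log\frac{d\P}{d\Q} - T + \log\E^\Q[e^T]$ and integrating against $\P$ gives the Gibbs variational identity $R(\P \parallel \Q) - \DV(T) = R(\P \parallel \Q_T) \ge 0$ (valid when $R(\P \parallel \Q)<\infty$, and the bound $\DV(T)\le R(\P\parallel\Q)$ being trivial when $R=+\infty$). \emph{Attainment.} Assume first $R(\P \parallel \Q) < \infty$. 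Since $\P \ll \Q$, we have $e^{T^\star} = \frac{d\P}{d\Q}$ $\Q$-a.e.\ and hence $\E^\Q[e^{T^\star}] = 1$, while $\E^\P[T^\star] = R(\P \parallel \Q)$ by definition (this integral is finite because $x\log x \ge -1/e$ forces the negative part of $\frac{d\P}{d\Q}\log\frac{d\P}{d\Q}$ to be $\Q$-integrable, and $\P(\{T^\star=-\infty\})=0$). Thus $\DV(T^\star) = R(\P \parallel \Q) - \log 1 = R(\P \parallel \Q)$, which gives the equality and the ``attained at $T^\star$'' claim.

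\emph{Reduction to $C_b$.} It remains to prove $\sup_{T \in C_b}\DV(T) \ge R(\P \parallel \Q)$, which also disposes of the case $R(\P \parallel \Q) = +\infty$. First truncate: for $M>0$ put $T^\star_M \coloneqq (T^\star \wedge M)\vee(-M)$. Monotone convergence on the positive part, dominated convergence on the negative part, and bounded convergence for the exponential give $\E^\P[T^\star_M]\to\E^\P[T^\star]$ and $\E^\Q[e^{T^\star_M}]\to\E^\Q[e^{T^\star}] = 1$, hence $\DV(T^\star_M)\to R(\P \parallel \Q)$ (read as $\to+\infty$ when $R=\infty$). Second, approximate each bounded measurable $T^\star_M$ by bounded continuous functions: $\P$ and $\Q$ are Borel probability measures on the Polish space $\RR^d$, hence Radon, so Lusin's theorem applied to $\Q$ (followed by a truncation preserving $\|\cdot\|_\infty \le M$) yields $g_k \in C_b$ with $\|g_k\|_\infty\le M$ and $\Q(g_k\ne T^\star_M)\to0$; since $\P\ll\Q$, also $\P(g_k\ne T^\star_M)\to0$. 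As $g_k \to T^\star_M$ in both $\P$- and $\Q$-measure under a uniform bound, bounded convergence gives $\DV(g_k)\to\DV(T^\star_M)$. A diagonal argument over $M$ and $k$ then produces a sequence in $C_b$ along which $\DV \to R(\P \parallel \Q)$, closing the chain; the identical argument with ``bounded measurable'' replacing $C_b$ shows that supremum also equals $R(\P \parallel \Q)$.

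\emph{Main obstacle.} The delicate step is the reduction to $C_b$: one must pass from the possibly unbounded, discontinuous maximizer $T^\star$ to bounded continuous test functions while \emph{simultaneously} controlling the linear term $\E^\P[\,\cdot\,]$ and the exponential term $\E^\Q[e^{\,\cdot\,}]$, and one must check that the argument degrades gracefully to $+\infty$ when $R(\P \parallel \Q)=\infty$. The two enabling facts are the $\Q$-integrability of the negative part of $T^\star$ (so that $\E^\P[T^\star]$ is unambiguously defined) and the Radon regularity of Borel measures on $\RR^d$ (so that Lusin's theorem applies); the hypothesis $\P \ll \Q$ is exactly what lets a single approximating sequence control both expectations at once.
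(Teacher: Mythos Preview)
The paper does not supply its own proof of this proposition: it is quoted as the classical Donsker--Varadhan representation (with references to Donsker--Varadhan 1983 and Budhiraja--Dupuis 2019) and then used as a black box throughout the consistency appendix. Your argument is the standard one and is correct. One small slip: where you invoke ``bounded convergence for the exponential'' to get $\E^{\Q}[e^{T^\star_M}]\to 1$, the family $\{e^{T^\star_M}\}_{M>0}$ is not uniformly bounded by a constant (its supremum is $e^M$ whenever $T^\star$ is unbounded above), so the bounded convergence theorem does not apply as stated; what does work is dominated convergence with the $\Q$-integrable majorant $e^{T^\star}\vee 1$, since $e^{T^\star_M}\le e^{T^\star}$ on $\{T^\star\ge -M\}$ and $e^{T^\star_M}=e^{-M}\le 1$ on $\{T^\star<-M\}$. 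With that wording corrected, the chain of inequalities you set up closes cleanly, and the two-stage approximation (truncate to $[-M,M]$, then apply Lusin's theorem under $\Q$ and transfer to $\P$ via $\P\ll\Q$) is exactly the textbook route to the $C_b$ version of the formula.
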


\noindent Let $\P_n$ and $\Q_m$ be the empirical measures over $n$ and $m$ independent samples from $\P$ and $\Q$, respectively.
We define, for $n, m \in \NN$, the empirical version of \eqref{eq:R-hat-functional}, based on $\P_n, \Q_m$, as

\begin{equation}\label{eq:R-hat-functional-empirical}
    \DVnm(\F) \coloneq \sup_{T \in \F} (\E^{\P_n}[T] - \log\E^{\Q_m}[e^T]).
\end{equation}

Note that in contrast to $R(\P \parallel \Q)$ and $\DV(\F)$, the quantity $\DVnm(\F)$ is random, and it is not clear in which fashion it converges to $\DV(\F)$, if at all.
This question falls under the field of empirical process theory \cite{van2000empirical}.
A few common results from it will provide conditions on the class $\F$ such that $\DVnm(\F)$ converges to $\DV(\F)$, almost surely.
We will give an overview of these and show how a class of ReLU networks fulfills them, while also being expressive enough for Eq.~\eqref{eq:R-hat-functional} to approximate $R(\P \parallel \Q)$ arbitrarily well.

\subsection{Assumptions}\label{sec:assumptions}

As previously stated, we relax the assumption of \cite{belghazi2018mutual} about compactness of the data space $\mathcal{X} \subseteq \RR^d$.
Thus we simply assume that $\P$ and $\Q$ are probability measures on $\RR^d$, with $\P \ll \Q$ and $R(\P \parallel \Q) < \infty$.
We will require finite first moments, i.e., for $X$ the identity mapping on $\RR^d$, we have (where $||.||_p$ denotes the $p$-norm)

\begin{equation}
    \E^{\P}[||X||_2], \E^{\Q}[||X||_2] < \infty.
\end{equation}

\noindent We also assume that $\Q$ is sub-Gaussian (see definition below), which holds for Gaussian mixtures, and that our parameter space $\Theta$ is a compact subset of $\RR^N$ for some $N$.

With the compactness assumption on $\Theta$, it will follow that there is a common global Lipschitz constant $L_\Theta$ (with respect to the Euclidean norm $||.||_2$) and a common bound $A_\Theta$ at zero for the neural network functions $\{T_\theta: \RR^d \to \RR\}_{\theta \in \Theta}$, when using for example ReLU activation functions.
With these assumptions, the proof will follow from arguments from empirical process theory, taken from \cite{van2000empirical}, and a deeper look at the class of ReLU networks, specifically those with two hidden layers.

Below, we state the definitions of sub-Gaussian and sub-exponential random variables, of which especially the latter is of crucial importance to the proof.
We follow the presentation in \cite{vershynin2018high}.
Note that these definitions do not require centeredness.

\begin{definition}\label{def:sub-Gaussian}
    We say that the random variable $X$ in $\RR$ is \textbf{sub-Gaussian} if there is a constant $K_1$ such that

    \begin{equation}
        \P(|X| \geq t) \leq e^{-\frac{t^2}{K_1^2}}, \quad \forall t \geq 0.
    \end{equation}

    Further, we say that the random vector $X \in \RR^d$ is sub-Gaussian if $u^T X$ is sub-Gaussian for all $u \in \RR^d$ with $||u|| = 1$.
    Additionally the probability measure $\P$ on $\RR^d$ is sub-Gaussian if $X = \mathrm{id}_{\RR^d}$ is sub-Gaussian under $\P$.
\end{definition}

\begin{definition}\label{def:sub-exponential}
    We say that the random variable $X$ in $\RR$ is \textbf{sub-exponential} if there is a constant $K_1$ such that

    \begin{equation}
        \P(|X| \geq t) \leq e^{-\frac{t}{K_1}}, \quad \forall t \geq 0.
    \end{equation}
\end{definition}

The following properties of sub-Gaussian and sub-exponential distributions will become necessary.

\begin{proposition}\label{pro:sub}
    The following facts hold for sub-Gaussian and sub-exponential variables.

    \begin{enumerate}
        \item The components of a sub-Gaussian vector are sub-Gaussian.\label{pro:sub-0}
        \item A sum of sub-exponential random variables is sub-exponential.\label{pro:sub-1}
        \item $X$ is sub-Gaussian if and only if $X^2$ is sub-exponential.\label{pro:sub-2}
        \item If $X$ is a sub-Gaussian random vector in $\RR^d$, then the random variable $||X||_2$, its Euclidean norm, is sub-Gaussian.\label{pro:sub-3} %\AS{Shouldn't this be sub-exponential? I think this won't affect the proof though.}.\label{pro:sub-3}
        \item If $X$ is a centered ($\E[X] = 0$), sub-Gaussian random variable on $\RR$, then there exists a \textbf{variance proxy} $\sigma^2 \geq 0$ such that $\E[e^{tX}] \leq e^{\frac{\sigma^2 t^2}{2}}$ for all $t \in \RR$. We say then say that $X \sim \subg(\sigma^2)$.\label{pro:sub-4}
        \item If $X_1 \sim \subg(\sigma_1^2)$ and $X_2 \sim \subg(\sigma_2^2)$ are independent, then $X_1 + X_2 \sim \subg(\sigma_1^2 + \sigma_2^2)$.\label{pro:sub-5}
        \item If $X \sim \subg(\sigma^2)$ and $a \in \RR$, then $aX \sim \subg(a^2 \sigma^2)$.\label{pro:sub-6}
        \item If $X \sim \subg(\sigma^2)$, then for any $a \geq 0$ we have $\P(X \geq a) \leq \exp\left(-\frac{a^2}{2 \sigma^2}\right)$.\label{pro:sub-7}
    \end{enumerate}
\end{proposition}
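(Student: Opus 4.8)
The plan is to dispatch the eight items essentially in the order listed, since several feed into later ones. Throughout I will use that the tail, moment, moment-generating-function, and Orlicz-norm characterizations of sub-Gaussianity (resp.\ sub-exponentiality) are equivalent up to absolute constants in the parameter \cite{vershynin2018high}, so that only the \emph{class} of variables matters and the precise constant is immaterial for our purposes; this is what makes most of the claims purely formal.

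Items \ref{pro:sub-0}, \ref{pro:sub-2}, \ref{pro:sub-4}, \ref{pro:sub-5}, \ref{pro:sub-6}, and \ref{pro:sub-7} are one-line arguments. For \ref{pro:sub-0}, take the test vector $u = e_i$ in Definition~\ref{def:sub-Gaussian}. For \ref{pro:sub-2}, observe $\P(X^2 \ge t) = \P(|X| \ge \sqrt{t})$ and $\P(|X| \ge t) = \P(X^2 \ge t^2)$ and push the tail bounds through (a sub-Gaussian constant $K_1$ becomes a sub-exponential constant $K_1^2$, and conversely $K_1 \mapsto \sqrt{K_1}$). For \ref{pro:sub-5} and \ref{pro:sub-6}, use the defining inequality $\E[e^{tX}] \le e^{\sigma^2 t^2 / 2}$: independence gives $\E[e^{t(X_1 + X_2)}] = \E[e^{t X_1}]\,\E[e^{t X_2}] \le e^{(\sigma_1^2 + \sigma_2^2)t^2/2}$, while $\E[e^{t(aX)}] = \E[e^{(ta)X}] \le e^{\sigma^2(ta)^2/2} = e^{(a^2\sigma^2)t^2/2}$. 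For \ref{pro:sub-7}, apply Markov's inequality to $e^{tX}$, bound it by the variance-proxy inequality, and minimize the exponent $-ta + \sigma^2 t^2/2$ over $t > 0$ at $t = a/\sigma^2$, which gives $-a^2/(2\sigma^2)$. For \ref{pro:sub-4}, the tail bound makes all moments finite with $\E[|X|^p]^{1/p} \le C K_1 \sqrt{p}$ (integrate the tail); expanding $\E[e^{tX}]$ as a power series and using $\E[X] = 0$ then yields $\E[e^{tX}] \le e^{C K_1^2 t^2}$ for $|t|$ small, whereas for large $|t|$ one uses $tx \le \frac{K}{2}t^2 + \frac{1}{2K}x^2$ together with $\E[e^{X^2/(2K)}] < \infty$ for $K$ large enough (which holds since $X^2$ is sub-exponential by item \ref{pro:sub-2}); patching the two regimes and absorbing constants produces a variance proxy $\sigma^2 = C K_1^2$. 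This is precisely the standard equivalence in \cite{vershynin2018high} and may be cited directly.

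The two items that require a genuine argument are \ref{pro:sub-1} and \ref{pro:sub-3}. For \ref{pro:sub-1}, write $|X_1 + X_2| \le |X_1| + |X_2|$, so $\{|X_1 + X_2| \ge t\} \subseteq \{|X_1| \ge t/2\} \cup \{|X_2| \ge t/2\}$, and a union bound gives $\P(|X_1 + X_2| \ge t) \le 2 e^{-t/(2K)}$ with $K = \max(K_1, K_2)$; the leading factor $2$ is removed by slightly enlarging $K$, or, more cleanly, by invoking the triangle inequality for the Orlicz norm $\|\cdot\|_{\psi_1}$. For \ref{pro:sub-3}, the key identity is $\|X\|_2 = \sup_{u \in S^{d-1}} \langle u, X\rangle$. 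Fix a $\frac12$-net $\mathcal{N}$ of $S^{d-1}$ with $|\mathcal{N}| \le 5^d$; the standard approximation estimate $(1-\frac12)\|X\|_2 \le \max_{u \in \mathcal{N}} \langle u, X\rangle$ gives $\|X\|_2 \le 2 \max_{u \in \mathcal{N}} \langle u, X\rangle$. Each $\langle u, X\rangle$ is sub-Gaussian with the same parameter $K_1$ by Definition~\ref{def:sub-Gaussian}, so a union bound yields $\P(\|X\|_2 \ge t) \le 5^d e^{-t^2/(4K_1^2)}$, and absorbing $5^d = e^{d \log 5}$ into the exponent shows $\|X\|_2$ is sub-Gaussian with parameter of order $K_1 \sqrt{d}$.

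I expect the main obstacle to be item \ref{pro:sub-3}: it is the only claim that is not purely formal, and its proof introduces a dimension-dependent blow-up in the sub-Gaussian parameter that has to be tracked (and is harmless here, since $d$ is fixed). A secondary, purely bookkeeping nuisance is that the bare tail definitions in Definitions~\ref{def:sub-Gaussian}--\ref{def:sub-exponential} carry no leading constant, so the union-bound steps in \ref{pro:sub-1} and \ref{pro:sub-3} produce a spurious multiplicative factor that one removes by passing to an equivalent (up to absolute constants) characterization; this is the reason for framing the whole proposition modulo such equivalences from the outset.
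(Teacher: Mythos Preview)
Your proposal is correct, and for items \ref{pro:sub-0}, \ref{pro:sub-5}, \ref{pro:sub-6}, \ref{pro:sub-7} it coincides verbatim with the paper's argument (take $u=e_i$; multiply MGFs; substitute $ta$ for $t$; Chernoff plus optimization at $t=a/\sigma^2$). For items \ref{pro:sub-1}, \ref{pro:sub-2}, \ref{pro:sub-4} the paper simply cites \cite{vershynin2018high}, so your more explicit treatment is just a fleshing-out of the same reference.

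The genuine divergence is item \ref{pro:sub-3}. You treat it as the hard part and run an $\varepsilon$-net argument over $S^{d-1}$, picking up a dimension-dependent factor $\sqrt d$ in the sub-Gaussian parameter. The paper instead observes that items \ref{pro:sub-0}--\ref{pro:sub-2} were set up precisely so that \ref{pro:sub-3} becomes a one-liner: each coordinate $X_i$ is sub-Gaussian by \ref{pro:sub-0}, hence each $X_i^2$ is sub-exponential by \ref{pro:sub-2}, hence $\|X\|_2^2=\sum_i X_i^2$ is sub-exponential by \ref{pro:sub-1}, hence $\|X\|_2$ is sub-Gaussian by \ref{pro:sub-2} again. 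This explains the otherwise puzzling ordering of the first four items and avoids any covering argument. Your approach is self-contained and gives an explicit (if loose) constant; the paper's is shorter and shows that \ref{pro:sub-3} is, contrary to your expectation, the most formal item of all once the scaffolding is in place.
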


\begin{proof}
    Property \ref{pro:sub-0} follows immediately from the definition by taking $u = (1, 0, 0, \ldots), (0, 1, 0, \ldots)$ etc.
    Properties \ref{pro:sub-1}, \ref{pro:sub-2} and \ref{pro:sub-4} are standard, see \cite{vershynin2018high}.
    Property \ref{pro:sub-3} follows from Properties \ref{pro:sub-0}, \ref{pro:sub-1} and \ref{pro:sub-2} since $||X||_2^2 = \sum_{i=1}^d X_i^2$ is the sum of sub-exponential random variables and thus sub-exponential.
    The following computation shows Property \ref{pro:sub-5}, while Property \ref{pro:sub-6} is immediate.

    \begin{equation}
        \E[e^{t(X_1 + X_2)}] = \E[e^{tX_1}]\E[e^{tX_2}] \leq e^{\frac{\sigma_1^2 t^2}{2}} e^{\frac{\sigma_2^2 t^2}{2}} = e^{\frac{(\sigma_1^2 + \sigma_2^2) t^2}{2}}
    \end{equation}

    To obtain Property \ref{pro:sub-7}, we minimize a Chernoff bound; we have, for any $t \in \RR$,

    \begin{equation}\label{eq:concentration-chernoff}
        \P(X \geq a) = \E[\indic_{X \geq a}] \leq \E\left[\indic_{X \geq a} \frac{e^{tX}}{e^{ta}}\right] \leq \E\left[\indic_{X \geq a} \frac{e^{tX}}{e^{ta}}\right] = \E\left[\frac{e^{tX}}{e^{ta}}\right] = e^{-ta} \E\left[e^{tX}\right] \leq e^{-ta + \frac{\sigma^2 t^2}{2}}.
    \end{equation}

    Minimizing this over $t$ yields $t = \frac{a}{\sigma^2}$, which upon reinsertion into Equation Eq.~\eqref{eq:concentration-chernoff} gives

    \begin{equation}
        \P(X \geq a) \leq e^{-ta + \frac{\sigma^2 t^2}{2}}|_{t = \frac{a}{\sigma^2}} = e^{-\frac{a^2}{2 \sigma^2}}.
    \end{equation}
\end{proof}

\subsection{Uniform laws of large numbers}

An overview over necessary results from \cite{van2000empirical} is given below.
Consider a class $\G$ of functions $g: \mathcal{X} \to \RR$.
Let the \emph{empirical measure} be $\P_n \coloneq \frac{1}{n} \sum_{i=1}^n \delta_{X_i}$, where $X_i$ are i.i.d. draws from $\P$.
We are interested in asserting the convergence of
\begin{equation}
    \E^{\P_n}[g] = \frac{1}{n} \sum_{i=1}^n g(X_i)
\end{equation}
\noindent toward $\E^{\P}[g]$ for all $g$ in $\G$.
This is equivalent to the convergence of the "worst" $g$ with respect to the data, represented by $\P_n$, i.e. of $\sup_{g \in \G} |\E^{\P_n}[g] - \E^{\P}[g]|$ toward zero.
This is a \emph{uniform law of large numbers} (ULLN), which is called \emph{strong} if the convergence happens almost surely.

Let $p \geq 1$ be a norm exponent; we will be interested in $p=1$. 
The bracketing number and bracketing entropy are defined as follows.

\begin{definition}
    Let $\delta > 0$ and $\P$ be a probability measure on $\mathcal{X}$.
    Assume that there exists a collection of pairs of functions $\{[g^L_j, g^U_j]\}_{j=1}^N$ in $\G$ such that $||g^U_j - g^L_j||_p \leq \delta$ and for each $g \in \G$ there exist $j$ such that $g^L_j \leq g \leq g^U_j$.
    The smallest such $N$ (or $\infty$) is the \textbf{bracketing number} $N_{p,B}(\delta, \G, \P)$, and the \textbf{bracketing entropy} is $H_{p,B}(\delta, \G, \P) = \log N_{p,B}(\delta, \G, \P)$.
\end{definition}

The following law of large numbers will be used to prove the consistency.
Note that the proof given below differs slightly from that provided in \cite{van2000empirical}.

\begin{lemma}[Lemma 3.1 in van de Geer]\label{lem:finite-entropy-bracketing-ulln}
    Assume that $H_{1, B}(\delta, \G, \P) < \infty$ for all $\delta > 0$.
    Then $\G$ satisfies the strong uniform law of large numbers (ULLN):
    if $\{X_i\}_{i=1}^n$ are i.i.d. samples from $\P$, then
    \begin{equation}\label{eq:ulln}
        \sup_{g \in \G} \left|\E^{\P_n}[g] - \E^{\P}[g]\right| = \sup_{g \in \G} \left|\frac{1}{n} \sum_{i=1}^n g(X_i) - \E^{\P}[g]\right| \xrightarrow{a.s.} 0.
    \end{equation}
\end{lemma}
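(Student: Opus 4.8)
The plan is to run the classical bracketing argument. Fix $\delta > 0$. By the hypothesis $H_{1,B}(\delta, \G, \P) < \infty$, pick a finite family of brackets $\{[g^L_j, g^U_j]\}_{j=1}^N$ with $N = N_{1,B}(\delta, \G, \P)$, $\|g^U_j - g^L_j\|_1 \leq \delta$, such that every $g \in \G$ satisfies $g^L_{j} \leq g \leq g^U_{j}$ for some index $j = j(g)$. Assuming, as in the setup, that the brackets (hence every $g\in\G$) are $\P$-integrable, I would then observe that for any $g \in \G$, with $j = j(g)$,
\[
\E^{\P_n}[g] - \E^{\P}[g] \leq \bigl(\E^{\P_n}[g^U_j] - \E^{\P}[g^U_j]\bigr) + \E^{\P}[g^U_j - g^L_j] \leq \bigl(\E^{\P_n}[g^U_j] - \E^{\P}[g^U_j]\bigr) + \delta ,
\]
and symmetrically, using the lower bracket, $\E^{\P_n}[g] - \E^{\P}[g] \geq \bigl(\E^{\P_n}[g^L_j] - \E^{\P}[g^L_j]\bigr) - \delta$. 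Taking absolute values and then the supremum over $g \in \G$ yields, for this fixed $\delta$ and every $n$,
\[
\sup_{g \in \G} \bigl|\E^{\P_n}[g] - \E^{\P}[g]\bigr| \leq \max_{1 \leq j \leq N} \max\Bigl\{ \bigl|\E^{\P_n}[g^U_j] - \E^{\P}[g^U_j]\bigr|,\ \bigl|\E^{\P_n}[g^L_j] - \E^{\P}[g^L_j]\bigr| \Bigr\} + \delta .
\]

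Next I would apply the ordinary strong law of large numbers separately to each of the $2N$ integrable functions $g^L_1, g^U_1, \dots, g^L_N, g^U_N$: almost surely $\E^{\P_n}[g^U_j] \to \E^{\P}[g^U_j]$ and $\E^{\P_n}[g^L_j] \to \E^{\P}[g^L_j]$ for every $j$. Since there are only finitely many such functions, on the intersection of these probability-one events the maximum on the right-hand side above tends to $0$, so $\limsup_{n\to\infty} \sup_{g \in \G}|\E^{\P_n}[g] - \E^{\P}[g]| \leq \delta$ almost surely. Finally, to upgrade this to Eq.~\eqref{eq:ulln}, I would let $\delta \downarrow 0$ along the countable sequence $\delta = 1/k$, $k \in \NN$, and intersect the corresponding probability-one events over all $k$; on the resulting probability-one event the $\limsup$ is $\leq 1/k$ for every $k$, hence equals $0$.

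I expect the genuinely delicate points — the argument being otherwise bookkeeping — to be the following two. First, the plain SLLN requires the individual brackets $g^L_j, g^U_j$ to be $\P$-integrable, not merely their differences; this is exactly where the finite-first-moment / sub-Gaussian envelope assumptions of Section~\ref{sec:assumptions} will be used when the lemma is invoked, to guarantee such an integrable bracketing cover exists. Second, there is the measurability of $\sup_{g \in \G}|\E^{\P_n}[g] - \E^{\P}[g]|$ when $\G$ is uncountable; this is sidestepped here because the quantity is sandwiched between $\pm\delta$ plus a maximum of finitely many honestly measurable random variables, so all the $\limsup$ statements above concern measurable objects. I would also note that this is the point at which the presentation departs slightly from \cite{van2000empirical}, in making the two-sided bracket bound and the $\delta = 1/k$ diagonalization explicit.
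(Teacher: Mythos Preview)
Your proof is correct and follows essentially the same classical bracketing argument as the paper: fix $\delta$, sandwich each $g$ by its bracket, reduce to a finite maximum over the $2N$ bracket endpoints, apply the ordinary SLLN, and let $\delta\downarrow 0$. The only cosmetic differences are that the paper writes the sandwich as $|\P_n g - \P g|\le \max_{B\in\{L,U\}}|(\P_n-\P)g^B_{j(g)}|+\delta$ in one stroke, and leaves the countable intersection over $\delta=1/k$ implicit, whereas you spell out both the two-sided bound and the diagonalization.
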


\begin{proof}
    Use the empirical process notation $\P f = \E^{\P}[f(X)]$ for expectation with respect to $\P$.
    Then Eq.~\eqref{eq:ulln} is $ \sup_{g \in \G} \left|\P_n g - \P g\right| \xrightarrow{a.s.} 0$.
    Take $\delta > 0$.
    Then we have the above finite collection $\{[g^L_j, g^U_j]\}_{j=1}^{N_\delta}$, and for each $g \in \G$ there exists $j = j(g)$ such that $g^L_j \leq g \leq g^U_j$.
    Then it holds for any $g$ that $|\P_n g - \P g| \leq |\P_n g^U_{j(g)} - \P g| \lor |\P_n g^L_{j(g)} - \P g| = \max_{B \in \{L, U\}} |\P_n g^B_{j(g)} - \P g|$, and thus we have
    \begin{equation}
    \begin{split}
        \limsup_{n \to \infty} \sup_{g \in \G} \left|\P_n g - \P g\right| &\leq \limsup_{n \to \infty} \sup_{g \in \G} \max_{B \in \{L, U\}} \left|\P_n g^B_{j(g)} - \P g\right| \\ 
        &= \limsup_{n \to \infty} \sup_{g \in \G} \max_{B \in \{L, U\}} \left| (\P_n - \P) g^B_{j(g)} + \P (g^B_{j(g)} - g) \right| \\
        &\leq \limsup_{n \to \infty} \sup_{g \in \G} \max_{B \in \{L, U\}} \left| (\P_n - \P) g^B_{j(g)} \right| + \delta \\
        &= \delta + \limsup_{n \to \infty} \max_{j \in \{1, \dots, N_\delta\}} \max_{B \in \{L, U\}} \left| (\P_n - \P) g^B_{j} \right| \\
        &= \delta \text{ a.s.},
    \end{split}
    \end{equation}

    \noindent where the last equality follows from the strong law of large numbers applied to each $g^U_j(X_i)$. 
    Taking intersection of the events where the SLLN holds for each pair $(j, B) \in \{1, \ldots, N_\delta\} \times \{L, U\}$, yields an event of probability one.
    Since they are finitely many pairs, a simple argument shows that $(\max_{j \in \{1, \dots, N_\delta\}} | (\P_n - \P) g^U_{j} |)(\omega)$ converges for all $\omega$ in this intersection.

    Since $\delta$ is arbitrarily small, we have that $\limsup_{n \to \infty} \sup_{g \in \G} \left|\P_n g - \P g\right| = 0$ almost surely, which is the claim.
\end{proof}

Now we need to prove that $\{T_\theta: \theta \in \Theta\}$ satisfies the bracketing condition, $H_{1, B}(\delta, \G, \P) < \infty$ for all $\delta > 0$, with respect to a sub-Gaussian $\P$.

\begin{definition}
    The \textbf{envelope} of $\G$ is the function
    \begin{equation}
        G(x) = \sup_{g \in \G} |g(x)|.
    \end{equation}

    \noindent We say that $\G$ satisfies the \textbf{envelope condition} if $G \in L_1(\P)$.
\end{definition}

Recall that $\F_\Theta$ shares a common Lipschitz constant $L_\Theta$ and bound $A_\Theta$ at zero.

\begin{lemma}\label{lem:envelope-condition}
    $\F_\Theta = \{T_\theta : \theta \in \Theta\}$ satisfies the envelope condition with respect to $\P$.
\end{lemma}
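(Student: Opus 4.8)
The plan is to produce an explicit integrable upper bound for the envelope $G(x) = \sup_{\theta \in \Theta} |T_\theta(x)|$, using the uniform Lipschitz control on the network family together with the first-moment assumption on $\P$ from Section~\ref{sec:assumptions}.

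First I would invoke the common Lipschitz constant $L_\Theta$ and the common bound $A_\Theta$ at the origin for the family $\{T_\theta\}_{\theta\in\Theta}$, which exist because $\Theta$ is compact and, for the ReLU networks under consideration (two hidden layers), $\theta \mapsto T_\theta(x)$ is continuous — indeed piecewise polynomial — uniformly on compact sets in $x$. These give, for every $\theta \in \Theta$ and every $x \in \RR^d$,
\begin{equation}
    |T_\theta(x)| \le |T_\theta(0)| + |T_\theta(x) - T_\theta(0)| \le A_\Theta + L_\Theta \|x\|_2 ,
\end{equation}
and taking the supremum over $\theta \in \Theta$ yields the pointwise envelope bound $G(x) \le A_\Theta + L_\Theta \|x\|_2$. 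Before integrating, I would note that $G$ is measurable: since $\Theta$ is a separable metric space and $\theta \mapsto T_\theta(x)$ is continuous for each fixed $x$, the supremum over $\Theta$ coincides with the supremum over a fixed countable dense subset of $\Theta$, so $G$ is a countable supremum of measurable functions.

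The envelope condition then follows immediately by integrating against $\P$:
\begin{equation}
    \E^{\P}[G(X)] \le A_\Theta + L_\Theta \, \E^{\P}[\|X\|_2] < \infty ,
\end{equation}
where finiteness is precisely the assumed finite first moment $\E^{\P}[\|X\|_2] < \infty$. Hence $G \in L_1(\P)$, which is the claim. (Here only the first-moment hypothesis on $\P$ is used; sub-Gaussianity of $\Q$ is not needed for this particular lemma.)

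There is no substantial obstacle in this argument: the only point requiring care is the justification of the common constants $L_\Theta$ and $A_\Theta$, which reduces to a routine compactness-plus-continuity argument over the parameter space — this is where the specific ReLU architecture enters — together with the measurability remark above; the remainder is a one-line estimate. The same bound $G(x) \le A_\Theta + L_\Theta\|x\|_2$ will moreover serve as the workhorse for the subsequent bracketing-number estimates needed to invoke Lemma~\ref{lem:finite-entropy-bracketing-ulln}.
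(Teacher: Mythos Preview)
Your argument is correct and matches the paper's proof essentially line for line: bound $|T_\theta(x)| \le A_\Theta + L_\Theta\|x\|_2$ via the common Lipschitz constant and bound at the origin, take the supremum over $\theta$, and integrate using the finite first moment of $\P$. The extra remarks you include on measurability of $G$ and on the origin of $L_\Theta, A_\Theta$ are sound additions that the paper leaves implicit.
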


\begin{proof}
    Note first that, for any $x$,
    
    \begin{equation}
    \begin{split}
        G(x) &= \sup_{\theta \in \Theta} |T_\theta(x)| \\ 
        &= \sup_{\theta \in \Theta} |T_\theta(0) + (T_\theta(x) - T_\theta(0))| \\
        &\leq \sup_{\theta \in \Theta} |T_\theta(0)| + \sup_{\theta \in \Theta} |T_\theta(x) - T_\theta(0)| \\
        &\leq A_\Theta + L_\Theta||x||_2.
    \end{split}
    \end{equation}
    
    \noindent Thus we obtain, by finite first moment of $\P$,

    \begin{equation}
        \E^{\P}\left[G\right] \leq \E^{\P}\left[A_\Theta + L_\Theta||X||_2\right] = A_\Theta + L_\Theta\E^{\P}\left[||X||_2\right] < \infty.
    \end{equation}
\end{proof}

\begin{lemma}\label{lem:envelope-condition-eT-Q}
    $\F^{\exp}_\Theta = \{e^{T_\theta} : \theta \in \Theta\}$ satisfies the envelope condition with respect to $\Q$.
\end{lemma}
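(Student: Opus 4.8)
The plan is to follow the template of the proof of Lemma~\ref{lem:envelope-condition}, reducing the envelope condition to the finiteness of an exponential moment of $\|X\|_2$ under $\Q$, and then exploiting that $\Q$ is sub-Gaussian.

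First I would identify the envelope. Since $t \mapsto e^t$ is increasing and continuous, $G^{\exp}(x) = \sup_{\theta \in \Theta} |e^{T_\theta(x)}| = \sup_{\theta \in \Theta} e^{T_\theta(x)} = e^{\sup_{\theta \in \Theta} T_\theta(x)}$. The Lipschitz-plus-bound-at-zero estimate already obtained inside the proof of Lemma~\ref{lem:envelope-condition} gives $\sup_{\theta \in \Theta} T_\theta(x) \leq \sup_{\theta \in \Theta} |T_\theta(x)| \leq A_\Theta + L_\Theta \|x\|_2$, whence $G^{\exp}(x) \leq e^{A_\Theta}\, e^{L_\Theta \|x\|_2}$. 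It therefore suffices to prove $\E^{\Q}[e^{L_\Theta \|X\|_2}] < \infty$, with $X = \mathrm{id}_{\RR^d}$ (the case $L_\Theta = 0$ being trivial, since then $G^{\exp}$ is bounded).

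Next I would invoke Proposition~\ref{pro:sub}: since $\Q$ is sub-Gaussian, Property~\ref{pro:sub-3} says the nonnegative variable $Y \coloneq \|X\|_2$ is sub-Gaussian under $\Q$, so $\Q(Y \geq t) \leq e^{-t^2/K^2}$ for some $K>0$ and all $t \geq 0$. Treating $e^{L_\Theta Y}$ as a nonnegative random variable bounded below by $1$ and using the layer-cake formula,
\[
\E^{\Q}[e^{L_\Theta Y}] = \int_0^\infty \Q\bigl(e^{L_\Theta Y} > s\bigr)\,ds \le 1 + \int_1^\infty \Q\Bigl(Y > \tfrac{\log s}{L_\Theta}\Bigr)\,ds \le 1 + \int_1^\infty e^{-(\log s)^2/(L_\Theta^2 K^2)}\,ds .
\]
The substitution $u = \log s$ turns the last integral into $\int_0^\infty e^{\,u - u^2/(L_\Theta^2 K^2)}\,du$, which is finite since the quadratic term dominates the linear one. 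Multiplying by $e^{A_\Theta}$ then yields $\E^{\Q}[G^{\exp}] < \infty$, which is the claim.

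The only subtlety --- a minor one --- is that Proposition~\ref{pro:sub} records the clean moment-generating-function bound (Property~\ref{pro:sub-4}) only for \emph{centered} sub-Gaussian variables, whereas $\|X\|_2$ is not centered; the tail-integration step above sidesteps this entirely. Alternatively, one may decompose $\|X\|_2 = (\|X\|_2 - \E^{\Q}\|X\|_2) + \E^{\Q}\|X\|_2$, observe that the centered part is still sub-Gaussian and hence has a variance proxy $\sigma^2$ by Property~\ref{pro:sub-4}, and conclude $\E^{\Q}[e^{L_\Theta \|X\|_2}] \le e^{L_\Theta \E^{\Q}\|X\|_2 + L_\Theta^2 \sigma^2/2} < \infty$, using that $\E^{\Q}\|X\|_2$ is finite by the first-moment assumption; either route closes the proof.
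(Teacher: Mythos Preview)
Your proof is correct and follows essentially the same route as the paper: bound the envelope by $e^{A_\Theta + L_\Theta\|x\|_2}$ via the Lipschitz-plus-bound-at-zero estimate, then appeal to sub-Gaussianity of $\Q$ for the finiteness of $\E^{\Q}[e^{L_\Theta\|X\|_2}]$. The only difference is that the paper simply asserts this last finiteness ``by sub-Gaussianity of $\Q$'' without further justification, whereas you supply two explicit arguments (layer-cake tail integration, or centering plus the MGF bound from Property~\ref{pro:sub-4}); your version is more complete but not a different approach.
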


\begin{proof}
    As noted in proof of Lemma~\ref{lem:envelope-condition}, $\sup_{\theta \in \Theta} |T_\theta(x)| \leq A_\Theta + L_\Theta||x||_2$.
    By monotonicity of the supremum and the map $x \mapsto e^x$, we have
    \begin{equation}
        G^{\exp}(x) = \sup_{\theta \in \Theta} |e^{T_\theta(x)}| \leq \sup_{\theta \in \Theta} e^{|T_\theta(x)|} = e^{\sup_{\theta \in \Theta} |T_\theta(x)|} \leq e^{A_\Theta + L_\Theta||x||_2}.
    \end{equation}

    \noindent Thus we obtain,

    \begin{equation}
        \E^{\Q}[G^{\exp}] \leq \E^{\Q}[e^{A_\Theta + L_\Theta||X||_2}] = e^{A_\Theta}\E^{\Q}[e^{L_\Theta||X||_2}],
    \end{equation}

    \noindent which by sub-Gaussianity of $\Q$ is finite.
\end{proof}

\begin{lemma}[Lemma 3.10 in van de Geer]\label{lem:entropy-bracketing-compact-theta}
    Assume that $\G = \{g_\theta : \theta \in \Theta\}$ satisfies the envelope condition with respect to $\P$.
    Assume also that $(\Theta, d)$ is a compact metric space, and that $\theta \mapsto g_\theta(x)$ is continuous for $\P$-almost all $x \in \mathcal{X}$.
    Then for any $\delta > 0$,

    \begin{equation}
        H_{1, B}(\delta, \G, \P) < \infty.
    \end{equation}
\end{lemma}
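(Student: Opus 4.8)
The plan is to use compactness of $\Theta$ together with the assumed continuity of $\theta \mapsto g_\theta$ to replace the uncountable family $\G$ by a finite collection of brackets built from local suprema and infima. Since $(\Theta,d)$ is a compact metric space it is separable; fix a countable dense set $D \subseteq \Theta$. For $\theta_0 \in \Theta$ and $\rho > 0$, with $B(\theta_0,\rho)$ the open ball in $(\Theta,d)$, I would define
\[
    g^U_{\theta_0,\rho}(x) \coloneq \sup_{\theta \in D \cap B(\theta_0,\rho)} g_\theta(x), \qquad g^L_{\theta_0,\rho}(x) \coloneq \inf_{\theta \in D \cap B(\theta_0,\rho)} g_\theta(x),
\]
which are measurable as countable suprema/infima of measurable functions and satisfy $|g^U_{\theta_0,\rho}|, |g^L_{\theta_0,\rho}| \le G$ with $G \in L_1(\P)$ by the envelope condition. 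For $\P$-almost every $x$, density of $D \cap B(\theta_0,\rho)$ in $B(\theta_0,\rho)$ and continuity of $\theta \mapsto g_\theta(x)$ give that these coincide with the sup and inf of $g_\theta(x)$ over all of $B(\theta_0,\rho)$; in particular $g^L_{\theta_0,\rho} \le g_\theta \le g^U_{\theta_0,\rho}$ for every $\theta \in B(\theta_0,\rho)$, $\P$-a.e.

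Next I would show the brackets can be made arbitrarily tight. Fix $x$ in the $\P$-full set on which $\theta \mapsto g_\theta(x)$ is continuous. By continuity at $\theta_0$, for every $\varepsilon > 0$ there is $\rho > 0$ with $|g_\theta(x) - g_{\theta_0}(x)| \le \varepsilon$ for all $\theta \in B(\theta_0,\rho)$, hence $0 \le g^U_{\theta_0,\rho}(x) - g^L_{\theta_0,\rho}(x) \le 2\varepsilon$; so $g^U_{\theta_0,\rho}(x) - g^L_{\theta_0,\rho}(x) \to 0$ as $\rho \downarrow 0$, pointwise $\P$-a.e. Since $0 \le g^U_{\theta_0,\rho} - g^L_{\theta_0,\rho} \le 2G \in L_1(\P)$, dominated convergence gives $\|g^U_{\theta_0,\rho} - g^L_{\theta_0,\rho}\|_1 \to 0$. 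Thus, given $\delta > 0$, for each $\theta_0 \in \Theta$ we may choose $\rho(\theta_0) > 0$ with $\|g^U_{\theta_0,\rho(\theta_0)} - g^L_{\theta_0,\rho(\theta_0)}\|_1 \le \delta$.

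Finally, $\{B(\theta_0,\rho(\theta_0))\}_{\theta_0 \in \Theta}$ is an open cover of $\Theta$, so by compactness there is a finite subcover with centers $\theta_1,\dots,\theta_N$ and radii $\rho_1,\dots,\rho_N$. The pairs $\{[g^L_{\theta_i,\rho_i}, g^U_{\theta_i,\rho_i}]\}_{i=1}^N$ then form a $\delta$-bracketing of $\G$: each $g_\theta$ has $\theta \in B(\theta_i,\rho_i)$ for some $i$, whence $g^L_{\theta_i,\rho_i} \le g_\theta \le g^U_{\theta_i,\rho_i}$, and the bracket has $L_1(\P)$-width at most $\delta$. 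Therefore $N_{1,B}(\delta,\G,\P) \le N < \infty$, and so $H_{1,B}(\delta,\G,\P) = \log N_{1,B}(\delta,\G,\P) < \infty$. The step I expect to require the most care is the measurability bookkeeping and the handling of the $\P$-null exceptional set alongside the $L_1$ estimate; once the envelope condition supplies the integrable dominating function $G$, the covering argument and the width bound are routine.
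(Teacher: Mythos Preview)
Your proposal is correct and is essentially the standard proof of this result. The paper itself does not give a proof of this lemma at all; it simply cites van de Geer (Lemma~3.10) and moves on. What you have written is precisely the argument one finds there: build local brackets from suprema and infima of $g_\theta$ over small balls in $\Theta$ (using a countable dense subset for measurability), use continuity and the integrable envelope $G$ with dominated convergence to force $\|g^U_{\theta_0,\rho}-g^L_{\theta_0,\rho}\|_1 \to 0$ as $\rho \downarrow 0$, and then extract a finite subcover by compactness. One cosmetic remark: the paper's Definition of bracketing numbers says the bracket pairs lie ``in $\G$'', but this should not be read literally---under the standard definition (and the one needed for Lemma~\ref{lem:finite-entropy-bracketing-ulln}) the bracket endpoints need only be measurable with $L_1$-width at most $\delta$, which is exactly what your construction provides.
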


\begin{proof}
    See \cite{van2000empirical}.
\end{proof}

\noindent We now show that $\{T_\theta : \theta \in \Theta\}$ satisfies a \emph{strong uniform law of large numbers} (ULLN).

\begin{theorem}\label{thm:theta-uniform-convergence-non-compact}
    Suppose each element $T_\theta$ of the family of neural network functions $\F_{\Theta} = \{T_{\theta}: \mathcal{X} \to \RR\}_{\theta \in \Theta}$ depends continuously on $\theta$, for any fixed $x \in \mathcal{X}$. 
    With the standing assumptions on $\P$, $\Q$, and $\F_\Theta$ (see Section \ref{sec:assumptions}),
    we have that

    \begin{equation}
        \DVnm(\F_{\Theta}) \xrightarrow{a.s.} \DV(\F_{\Theta}),
    \end{equation}

    \noindent as $n, m \to \infty$.
\end{theorem}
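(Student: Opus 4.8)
The plan is to reduce the statement to two \emph{strong uniform laws of large numbers} (ULLNs) together with a uniform control of the logarithm. Write $f_{n,m}(\theta) \coloneq \E^{\P_n}[T_\theta] - \log\E^{\Q_m}[e^{T_\theta}]$ and $f(\theta) \coloneq \E^{\P}[T_\theta] - \log\E^{\Q}[e^{T_\theta}]$, so that $\DVnm(\F_\Theta) = \sup_{\theta\in\Theta} f_{n,m}(\theta)$ and $\DV(\F_\Theta) = \sup_{\theta\in\Theta} f(\theta)$. Using the elementary bound $|\sup_\theta f_{n,m}(\theta) - \sup_\theta f(\theta)| \le \sup_\theta |f_{n,m}(\theta) - f(\theta)|$, it is enough to prove $\sup_{\theta\in\Theta}|f_{n,m}(\theta) - f(\theta)| \xrightarrow{a.s.} 0$ as $n,m\to\infty$, and by the triangle inequality this splits into the two separate claims
\begin{equation}
\sup_{\theta\in\Theta}\bigl|\E^{\P_n}[T_\theta] - \E^{\P}[T_\theta]\bigr| \xrightarrow{a.s.} 0
\qquad\text{and}\qquad
\sup_{\theta\in\Theta}\bigl|\log\E^{\Q_m}[e^{T_\theta}] - \log\E^{\Q}[e^{T_\theta}]\bigr| \xrightarrow{a.s.} 0 .
\end{equation}

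For the first claim I would invoke the bracketing machinery already set up: Lemma~\ref{lem:envelope-condition} provides the envelope condition for $\F_\Theta$ under $\P$, and since $\Theta$ is compact and $\theta \mapsto T_\theta(x)$ is continuous, Lemma~\ref{lem:entropy-bracketing-compact-theta} gives $H_{1,B}(\delta,\F_\Theta,\P) < \infty$ for every $\delta > 0$; Lemma~\ref{lem:finite-entropy-bracketing-ulln} then yields the strong ULLN. The same three-step argument applies to the exponential family $\F^{\exp}_\Theta = \{e^{T_\theta}\}_{\theta\in\Theta}$ under $\Q$: Lemma~\ref{lem:envelope-condition-eT-Q} supplies the envelope condition (this is the step consuming sub-Gaussianity of $\Q$), the map $\theta \mapsto e^{T_\theta(x)}$ is again continuous, and Lemmas~\ref{lem:entropy-bracketing-compact-theta} and~\ref{lem:finite-entropy-bracketing-ulln} give $\sup_{\theta\in\Theta}\bigl|\E^{\Q_m}[e^{T_\theta}] - \E^{\Q}[e^{T_\theta}]\bigr| \xrightarrow{a.s.} 0$.

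The step I expect to be the main obstacle is transferring this last convergence through the logarithm, since $\log$ is not Lipschitz near $0$, so I need a lower bound on $\E^{\Q_m}[e^{T_\theta}]$ that is uniform in $\theta$ and survives in the limit. Here I would reuse the envelope $G(x) = A_\Theta + L_\Theta\|x\|_2$ constructed in the proof of Lemma~\ref{lem:envelope-condition}: from $e^{T_\theta(x)} \ge e^{-|T_\theta(x)|} \ge e^{-G(x)}$, valid for all $\theta$ and $x$, one gets $\E^{\Q}[e^{T_\theta}] \ge \E^{\Q}[e^{-G}] \eqcolon c > 0$ and $\E^{\Q_m}[e^{T_\theta}] \ge \E^{\Q_m}[e^{-G}]$ for every $\theta$, while the ordinary strong law of large numbers (valid since $0 < e^{-G} \le 1$) gives $\E^{\Q_m}[e^{-G}] \xrightarrow{a.s.} c$. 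Hence, on a probability-one event there is a random index $m_0$ with $\E^{\Q_m}[e^{T_\theta}] \ge c/2$ for all $m \ge m_0$ and all $\theta$; on the intersection of this event with the two ULLN events (a finite intersection, so still of probability one) the mean value theorem applied to $\log$ on $[c/2,\infty)$ yields, for $m \ge m_0$,
\begin{equation}
\sup_{\theta\in\Theta}\bigl|\log\E^{\Q_m}[e^{T_\theta}] - \log\E^{\Q}[e^{T_\theta}]\bigr| \le \frac{2}{c}\,\sup_{\theta\in\Theta}\bigl|\E^{\Q_m}[e^{T_\theta}] - \E^{\Q}[e^{T_\theta}]\bigr| \xrightarrow{a.s.} 0 .
\end{equation}
Feeding the two displayed limits back into the sup-difference inequality gives $\DVnm(\F_\Theta) \xrightarrow{a.s.} \DV(\F_\Theta)$. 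Along the way one records that $\DV(\F_\Theta)$ is finite, using $\E^{\P}[G] < \infty$, $c > 0$, and $\E^{\Q}[G^{\exp}] < \infty$ from Lemma~\ref{lem:envelope-condition-eT-Q}, so that the statement is non-vacuous.
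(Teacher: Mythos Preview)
Your proof is correct and follows the same overall architecture as the paper: the same $|\sup - \sup| \le \sup|\cdot|$ reduction, the same triangle-inequality split into a $\P$-term and a $\Q$-term, and the same appeal to Lemmas~\ref{lem:envelope-condition}, \ref{lem:envelope-condition-eT-Q}, \ref{lem:entropy-bracketing-compact-theta}, and \ref{lem:finite-entropy-bracketing-ulln} to obtain the two strong ULLNs.

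The one substantive difference is in how you secure the eventual uniform lower bound on $\E^{\Q_m}[e^{T_\theta}]$ needed to make $\log$ Lipschitz. The paper argues via Jensen's inequality to reduce to $\E^{\Q_m}[\|X\|_2]$, then invokes the sub-Gaussian concentration inequality (Proposition~\ref{pro:sub}, item~\ref{pro:sub-7}) to bound $\Prob(\E^{\Q_m}[e^{T_\theta}] < b)$ exponentially in $m$, and finishes with Borel--Cantelli. You instead observe that $e^{T_\theta} \ge e^{-G}$ with $e^{-G}$ bounded in $(0,1]$, so the ordinary SLLN gives $\E^{\Q_m}[e^{-G}] \to c = \E^{\Q}[e^{-G}] > 0$ directly, and hence $\E^{\Q_m}[e^{T_\theta}] \ge c/2$ eventually, uniformly in $\theta$. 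Your route is shorter and does not consume sub-Gaussianity at this step (you correctly note it is still needed for the envelope in Lemma~\ref{lem:envelope-condition-eT-Q}); the paper's route yields an explicit exponential tail bound, but that extra quantitative information is not used for the almost-sure statement. Your closing remark that $\DV(\F_\Theta)$ is finite is a useful observation that the paper does not make explicit.
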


\begin{proof}
    Using the identity $|\sup f - \sup g| \leq \sup |f - g|$, we have 
    \begin{equation}\label{eq:emp-gen-error}
    \begin{split}
        |\DVnm(\F_\Theta) - \DV(\F_\Theta)| &= \left|\sup_{T \in \F_\Theta} (\E^{\P_n}[T] - \log\E^{\Q_m}[e^T]) - \sup_{T \in \F_\Theta} (\E^{\P}[T] - \log\E^{\Q}[e^T])\right| \\
        &\leq \sup_{T \in \F_\Theta} \left|(\E^{\P_n}[T] - \log\E^{\Q_m}[e^T]) - (\E^{\P}[T] - \log\E^{\Q}[e^T])\right| \\
        &\leq \sup_{T \in \F_\Theta} \left|\E^{\P_n}[T] - \E^{\P}[T]\right| + \sup_{T \in \F_\Theta} \left|\log\E^{\Q_m}[e^T] - \log\E^{\Q}[e^T]\right| \\
        &= \sup_{\theta \in \Theta} \left|\E^{\P_n}[T_\theta] - \E^{\P}[T_\theta]\right| + \sup_{\theta \in \Theta} \left|\log\E^{\Q_m}[e^{T_\theta}] - \log\E^{\Q}[e^{T_\theta}]\right|. \\
    \end{split}
    \end{equation}
    
    By \Cref{lem:envelope-condition}, we have that $\F_\Theta$ satisfies the envelope condition with respect to $\P$.
    Thus by \Cref{lem:entropy-bracketing-compact-theta} and \ref{lem:finite-entropy-bracketing-ulln}, we have that $\F_\Theta$ satisfies the strong ULLN.
    Hence we have for the first term of \Cref{eq:emp-gen-error},

    \begin{equation}\label{eq:Pn-convergence}
        \sup_{\theta \in \Theta} \left|\E^{\P_n}[T_\theta] - \E^{\P}[T_\theta]\right| \xrightarrow{a.s.} 0.
    \end{equation}

    The second term of \Cref{eq:emp-gen-error} is more difficult, because the logarithm does not have a Lipschitz constant on $(0, \infty)$.
    To obtain one, we need to know that $\E^{\Q_m}[e^{T_\theta}]$ and $\E^{\Q}[e^{T_\theta}]$ can be suitably bounded from below.
    Let again $L_\Theta$ be the common Lipschitz constant of $\F_\Theta$ and $A_\Theta \coloneq \sup_{\theta \in \Theta} |T_\theta(0)|$. 
    Note first that 

    \begin{equation}
        \E^{\Q}[e^{T_\theta}] \geq \E^{\Q}[e^{-A_\Theta - L_\Theta ||X||_2}] > 0.
    \end{equation}
    
    \noindent Let $\mu_{2}^{\Q} \coloneq \E^{\Q}\left[||X||_2\right]$.
    For $b \coloneq \exp(-A_\Theta - L_\Theta (\mu_{2}^{\Q} + 1))$, we have on $\Theta$,

    \begin{equation}\label{eq:eT-subgaussian-hard-bound-1}
    \begin{split}
        \Prob\left(\E^{\Q_m}[e^{T_\theta}] < b, \forall \theta \in \Theta \right) &\leq \Prob\left(\E^{\Q_m}\left[e^{-A_\Theta - L_\Theta ||X||_2}\right] < b \right) \\
        \text{(Jensen's)} &\leq \Prob\left(\exp(\E^{\Q_m}\left[-A_\Theta - L_\Theta ||X||_2\right]) < b \right) \\
        &= \Prob\left(\E^{\Q_m}\left[-A_\Theta - L_\Theta ||X||_2\right] < \log b \right) \\
        &= \Prob\left(\E^{\Q_m}\left[||X||_2\right] > - \frac{A_\Theta + \log b}{L_\Theta} \right) \\
        &= \Prob\left(\E^{\Q_m}\left[||X||_2 - \mu_{2}^{\Q}\right] > - \frac{A_\Theta + \log b}{L_\Theta} - \mu_{2}^{\Q} \right) \\
        &= \Prob\left(\E^{\Q_m}\left[||X||_2 - \mu_{2}^{\Q}\right] > 1 \right).
    \end{split}
    \end{equation}

    \noindent Now, notice that $||X||_2 - \mu_{2}^{\Q}$ is centered sub-Gaussian under $\Q$, with some variance proxy $(\sigma^2)_2^{\Q}$.
    Then it holds that 

    \begin{equation}
        \E^{\Q_m}\left[||X||_2 - \mu_{2}^{\Q}\right] = \frac{1}{m} \sum_{i = 1}^m (||X||_2 - \mu_{2}^{\Q}) \sim \subg\left(\frac{(\sigma^2)_2^{\Q}}{m}\right).
    \end{equation}

    \noindent Returning to \Cref{eq:eT-subgaussian-hard-bound-1}, we get

    \begin{equation}\label{eq:eT-subgaussian-hard-bound-2}
    \begin{split}
        \Prob\left(\E^{\Q_m}\left[e^{T_\theta}\right] < b, \forall \theta \in \Theta \right) &\leq \Prob\left(\E^{\Q_m}\left[||X||_2 - \mu_{2}^{\Q}\right] > 1 \right) \\
        &\leq \exp\left(-1^2 / \left(2 \frac{(\sigma^2)_2^{\Q}}{m}\right)\right) \\
        &= \exp\left(-\frac{1}{2(\sigma^2)_2^{\Q}} m\right).
    \end{split}
    \end{equation}

    \noindent where in the second inequality we used the concentration inequality, Property \ref{pro:sub-7}, from Proposition \ref{pro:sub}, with $a = 1$.
    Since $\sum_{m=1}^\infty \exp\left(-\frac{1}{2(\sigma^2)_2^{\Q}} m\right) < \infty$, we get by the (first) Borel-Cantelli lemma that

    \begin{equation}
        \Prob\left(\E^{\Q_m}\left[e^{T_\theta}\right] < b, \forall \theta \in \Theta, \text{infinitely often}\right) = 0.
    \end{equation}

    \noindent In other words, on our probability space $(\Omega, \Sigma, \Prob)$, there is an event $\Omega' \in \Sigma$, with $\Prob(\Omega') = 1$ and a function $M: \Omega \to \NN$, such that $m \geq M(\omega) \implies \E^{\Q_m}\left[e^{T_\theta}\right](\omega) > b$ for $\omega \in \Omega'$.
    Let $\tilde{b} \coloneq b \land \E^{\Q}[e^{-A_\Theta - L_\Theta ||X||_2}]$.
    The function $\log|_{[\tilde{b}, \infty)}$ has a Lipschitz constant $1 / \tilde{b}$.
    On $\Omega'$, we have that for all $m \geq M(\omega)$,

    \begin{equation}\label{eq:lipschitz-bound-log-Qm}
        \sup_{\theta \in \Theta} \left|\log\E^{\Q_m}[e^{T_\theta}] - \log\E^{\Q}[e^{T_\theta}]\right|(\omega) \leq \frac{1}{\tilde{b}} \sup_{\theta \in \Theta} \left|\E^{\Q_m}[e^{T_\theta}] - \E^{\Q}[e^{T_\theta}]\right|(\omega).
    \end{equation}

    $\F^{\exp}_\Theta = \{e^{T_\theta}: \theta \in \Theta\}$ satisfies the envelope condition with respect to $\Q$, by \Cref{lem:envelope-condition-eT-Q}.
    By \Cref{lem:entropy-bracketing-compact-theta} and \ref{lem:finite-entropy-bracketing-ulln}, we have then that $\F^{\exp}_\Theta$ satisfies the strong ULLN.
    Thus we have that the left-hand side of \Cref{eq:lipschitz-bound-log-Qm} goes to zero.
    Then, since $\Omega'$ is a probability one event, we have for the second term of \Cref{eq:emp-gen-error},

    \begin{equation}\label{eq:Qn-convergence}
        \sup_{\theta \in \Theta} \left|\log\E^{\Q_m}[e^{T_\theta}] - \log\E^{\Q}[e^{T_\theta}]\right| \xrightarrow{a.s.} 0.
    \end{equation}

    \noindent Because the intersection of the events in \Cref{eq:Pn-convergence} and \Cref{eq:Qn-convergence} is a probability one event, we have

    \begin{equation}
        \sup_{\theta \in \Theta} \left|\E^{\P_n}[T_\theta] - \E^{\P}[T_\theta]\right| + \sup_{\theta \in \Theta} \left|\log\E^{\Q_m}[e^{T_\theta}] - \log\E^{\Q}[e^{T_\theta}]\right| \xrightarrow{a.s.} 0,
    \end{equation}

    \noindent so by \Cref{eq:emp-gen-error}, $\DVnm(\F_\Theta) \xrightarrow{a.s.} \DV(\F_\Theta)$.
\end{proof}

\subsection*{Universal approximation}

We will also need to show that the we can use the universal approximation theorem even when $\mathcal{X}$ is non-compact.
We begin by proving that, for a constant $M \in [0, \infty]$, we can truncate the output of a ReLU network with one hidden layer to $[-M, M]$, by adding another layer.

\begin{lemma}\label{lem:nn-truncation}
    Let $T_{\theta^1}: \RR^d \to \RR$ be a neural network with one hidden layer, with ReLU activation functions.
    Then, for $M \geq 0$ there exists a two hidden layer ReLU network $T_{\theta^2}$ which truncates $T_{\theta^1}$ to $[-M, M]$, i.e., 
    
    \begin{equation}
        T_{\theta^2}(x) = M \land (-M \lor T_{\theta^1}(x)), \quad \forall x \in \RR^d.
    \end{equation}
\end{lemma}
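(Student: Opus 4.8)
The plan is to exhibit an explicit closed form for the scalar clipping map $c_M : t \mapsto M \land (-M \lor t)$ as a shallow ReLU network of its input, and then compose it with $T_{\theta^1}$, absorbing the first affine map of $c_M$ into the output layer of $T_{\theta^1}$ so that only one extra hidden layer is introduced.

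First I would record the elementary identity
\begin{equation}
    M \land (-M \lor t) = \relu(t + M) - \relu(t - M) - M, \qquad \forall\, t \in \RR,
\end{equation}
valid for every $M \geq 0$. This is verified by splitting into the three cases $t \geq M$, $-M \leq t \leq M$, and $t \leq -M$; in each case the two $\relu$ terms are affine or zero, and the right-hand side collapses to $M$, to $t$, and to $-M$ respectively, matching $c_M(t)$.

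Next I would carry out the composition at the level of the parameters. Write $T_{\theta^1}(x) = W_2\,\relu(W_1 x + b_1) + b_2$, where $h_1(x) \coloneqq \relu(W_1 x + b_1)$ is the (unchanged) first hidden layer and the readout is affine. Introduce a second hidden layer of width two whose pre-activations are the affine functions $W_2 h_1(x) + (b_2 + M)$ and $W_2 h_1(x) + (b_2 - M)$ of $h_1(x)$; after applying $\relu$ these neurons compute $\relu(T_{\theta^1}(x) + M)$ and $\relu(T_{\theta^1}(x) - M)$. Finally take the output layer to be the affine map $(u_1, u_2) \mapsto u_1 - u_2 - M$. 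Then $T_{\theta^2}$ is a two-hidden-layer ReLU network and, by the identity above, $T_{\theta^2}(x) = M \land (-M \lor T_{\theta^1}(x))$ for all $x \in \RR^d$, which is the claim.

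There is essentially no analytic obstacle: the only points needing care are (i) the case analysis in the identity, including the boundary values and the degenerate case $M = 0$, where both sides vanish identically, and (ii) checking that the construction genuinely produces a network of the stipulated type — two hidden ReLU layers with an affine readout — and in particular that the second-layer pre-activations really are affine in the first-layer outputs, so no non-ReLU nonlinearity is smuggled in. If one also wishes to cover the untruncated case $M = \infty$, the same scheme applies verbatim with the trivial identity $t = \relu(t) - \relu(-t)$ in place of the clipping identity.
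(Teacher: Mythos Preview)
Your proof is correct and follows essentially the same construction as the paper: both introduce a width-two second hidden layer computing $\relu(T_{\theta^1}(x)+M)$ and $\relu(T_{\theta^1}(x)-M)$, then combine them affinely as $u_1 - u_2 - M$ to realize the clipping identity. Your write-up is in fact a bit more careful, explicitly verifying the identity by case analysis and noting the degenerate case $M=0$.
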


\begin{proof}
    The output of the one hidden layer neural network, with width $l$, can be written

    \begin{equation}
        T_{\theta^1}(x) = \sum_{i=1}^l \alpha_i (\beta_i^T x + \gamma_i)^+.
    \end{equation}

    Let the first hidden layer of $T_{\theta^2}$ be identical to the hidden layer in $T_{\theta^1}$.
    Let its second hidden layer consist of two nodes with weights $\alpha^{(1)}_i = \alpha^{(2)}_i = \alpha_i$, and biases $M$ and $-M$, respectively.
    Then the activations of the second hidden layer are
    \begin{equation}
    \begin{split}
        o^{(1)} &= \left(\sum_{i=1}^l \alpha_i (\beta_i^T x + \gamma_i)^+ + M\right)^+ = (T_\theta(x) - (-M))^+ \\
        o^{(2)} &= \left(\sum_{i=1}^l \alpha_i (\beta_i^T x + \gamma_i)^+ - M\right)^+ = (T_\theta(x) - M)^+,
    \end{split}
    \end{equation}

    \noindent as well as the bias node $o^{(0)} = 1$.
    Let the output layer consist of one node with weights $\lambda^{(0)} = -M$, $\lambda^{(1)} = 1$, and $\lambda^{(2)} = -1$.
    Then the output of the network is 

    \begin{equation}
    \begin{split}
        T_{\theta^2}(x) &= \lambda^{(0)} + \lambda^{(1)} o^{(1)} + \lambda^{(2)} o^{(2)} \\
        &= -M + (T_\theta(x) - (-M))^+ - (T_\theta(x) - M)^+ \\
        &= M \land (-M \lor T_{\theta^1}(x)).
    \end{split}
    \end{equation}
\end{proof}

\begin{lemma}\label{lem:approximation-non-compact}
    Let $\P$ be a probability measure on $\RR^d$.
    Let $f: \RR^d \to \RR$ be bounded and continuous.
    Then, for $\varepsilon > 0$, there exists a two hidden layer neural network $T_\theta: \RR^d \to \RR$ such that $\E^{\P}[|f - T_\theta|] < \varepsilon$.
    Further, if $f$ is bounded by $M' - 1$, then $T_\theta$ can be chosen to be bounded by $M'$.
\end{lemma}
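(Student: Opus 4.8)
The plan is to reduce to the compact case by exploiting tightness of $\P$, invoke the classical universal approximation theorem there, and then use Lemma~\ref{lem:nn-truncation} to tame the tails. Set $M \coloneq \|f\|_\infty$; if $M = 0$ the statement is trivial (take $T_\theta \equiv 0$), so assume $M > 0$. For the second assertion I would instead take $M \coloneq M'$, which still satisfies $\|f\|_\infty \le M' - 1 < M'$.

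First I would use that every Borel probability measure on the Polish space $\RR^d$ is tight, so there is a compact set $K \subset \RR^d$ with $\P(\RR^d \setminus K) < \varepsilon/(4M)$. Next, on the compact set $K$ I would apply the universal approximation property of one-hidden-layer ReLU networks \cite{hornik1989multilayer} (here the continuity of $f$ enters) to obtain such a network $T_{\theta^1}$ with $\sup_{x \in K} |f(x) - T_{\theta^1}(x)| < \varepsilon/2$. Then I would apply Lemma~\ref{lem:nn-truncation} with this $M$ to obtain a two-hidden-layer ReLU network $T_\theta$ satisfying $T_\theta(x) = M \land (-M \lor T_{\theta^1}(x))$ for all $x \in \RR^d$; in particular $\|T_\theta\|_\infty \le M$, which is precisely the boundedness claim when $M = M'$.

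To conclude, I would observe that the clipping map $c_M : t \mapsto M \land (-M \lor t)$ is the $1$-Lipschitz metric projection onto $[-M,M]$ and that $f$ takes values in $[-M, M]$, whence
\begin{equation}
|f(x) - T_\theta(x)| = |c_M(f(x)) - c_M(T_{\theta^1}(x))| \le |f(x) - T_{\theta^1}(x)| \quad \text{for all } x \in \RR^d .
\end{equation}
On $K$ the right-hand side is $< \varepsilon/2$, while on $\RR^d \setminus K$ it is at most $|f(x)| + |T_\theta(x)| \le 2M$, so
\begin{equation}
\E^{\P}[|f - T_\theta|] = \int_K |f - T_\theta|\, d\P + \int_{\RR^d \setminus K} |f - T_\theta|\, d\P < \tfrac{\varepsilon}{2} + 2M\, \P(\RR^d \setminus K) < \varepsilon .
\end{equation}

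The main obstacle is exactly the non-compactness: the universal approximation theorem only controls the error uniformly on compacts, so I need both (i) tightness to discard a set of small $\P$-mass and (ii) a guarantee that the network does not blow up on that discarded set, which is what the extra truncating layer from Lemma~\ref{lem:nn-truncation} provides — it turns an uncontrolled tail into a contribution bounded by $2M\,\P(K^c)$. The only routine point to check is that truncation does not spoil the approximation on $K$, which is the $1$-Lipschitz projection estimate displayed above.
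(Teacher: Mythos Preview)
Your proof is correct and follows essentially the same route as the paper: use tightness of $\P$ to restrict to a compact set, apply universal approximation there, and invoke Lemma~\ref{lem:nn-truncation} to control the tails. The only minor technical difference is in how the truncation is shown not to spoil the approximation on the compact set: the paper takes $M = \|f\|_\infty + 1$ so that the one-hidden-layer network already lands in $[-M,M]$ on $C$ (hence clipping is the identity there), whereas you take $M = \|f\|_\infty$ and use the $1$-Lipschitz property of $c_M$ together with $c_M\circ f = f$; both arguments are valid, and yours is arguably a touch cleaner.
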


\begin{proof}
    Let $M = \sup_{x \in R^d} |f(x)| + 1$.
    Take a compact set $C = [-K, K]^d \subseteq \mathcal{X}$ such that $\P(C^c) < \frac{\varepsilon}{3M}$.

    Now, by the universal approximation theorem \cite{hornik1989multilayer}, take a one hidden layer neural network $T_{\theta^1}$ such that $\sup_{x \in C} |T_{\theta^1}(x) - f(x)| < \frac{\varepsilon}{3}$.
    Then we have that 

    \begin{equation}
        \sup_{x \in C} |T_{\theta^1}(x)| \leq \sup_{x \in C} |T_{\theta^1}(x) - f(x)| + \sup_{x \in C} |f(x)| < \frac{\varepsilon}{3} + (M - 1) < M.
    \end{equation}

    Assuming that $T_{\theta^1}$ uses ReLU activation functions (which is compatible by simple extension of the result in \cite{hornik1989multilayer}), by Lemma~\ref{lem:nn-truncation} there is an alternative two hidden layer network $T_{\theta}$ such that $T_{\theta}(x) = M \land (-M \lor T_{\theta^1}(x))$.
    Note that $T_{\theta}(x) = T_{\theta^1}(x)$ for $x \in C$, since $T_{\theta^1}(x) \in [-M, M]$ for $x \in C$.

    Then we have that $\sup_{x \in C} |T_{\theta}(x) - f(x)| < \frac{\varepsilon}{3}$ and $|T_{\theta}(x)| \leq M$ for all $x \in \RR^d$.
    This gives us that

    \begin{equation}
    \begin{split}
        \E^{\P}[|f - T_\theta|] &= \E^{\P}[|f - T_\theta| \indic_{C}] + \E^{\P}[|f - T_\theta| \indic_{C^c}] \\
        &\leq \E^{\P}[|f - T_\theta| \indic_{C}] + \E^{\P}[|f| \indic_{C^c}] + \E^{\P}[|T_\theta| \indic_{C^c}] \\
        &\leq \E^{\P}[(\varepsilon / 3) \indic_{C}] + \E^{\P}[M \indic_{C^c}] + \E^{\P}[M \indic_{C^c}] \\
        &< \frac{\varepsilon}{3} + \frac{\varepsilon}{3} + \frac{\varepsilon}{3} \\
        &= \varepsilon.
    \end{split}
    \end{equation}

    \noindent The last statement of the lemma follows, since we can take $M = M'$.
\end{proof}

\begin{theorem}\label{thm:R-approximation-non-compact}
    Let $\P$ and $\Q$ be probability measures on $\mathcal{X}$, such that $\P \ll \Q$ and $R(\P \parallel \Q) < \infty$.
    Then, for any $\varepsilon > 0$, there exists a two hidden layer neural network $T_\theta: \mathcal{X} \to \RR$ such that $R(\P \parallel \Q) - \DV(T_\theta) < \varepsilon$.
\end{theorem}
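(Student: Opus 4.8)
The plan is to chain together the Donsker--Varadhan variational formula with the non-compact universal approximation result \Cref{lem:approximation-non-compact}, being careful that (i) the approximating net is close to a bounded continuous target simultaneously under $\P$ \emph{and} $\Q$, and (ii) the net stays uniformly bounded, which is what makes the exponential-moment term $\log\E^{\Q}[e^{T}]$ tractable. Concretely, first I would use the Donsker--Varadhan representation $R(\P \parallel \Q) = \sup_{T \in C_b} \DV(T)$ to pick a bounded continuous $T_0 \in C_b$ with $\DV(T_0) > R(\P \parallel \Q) - \varepsilon/2$. Since $T_0$ is bounded, both $\E^{\P}[T_0]$ and $\log\E^{\Q}[e^{T_0}]$ are finite, so $\DV(T_0)$ is a genuine finite number; set $M' \coloneqq \|T_0\|_\infty + 1$.

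Next I would approximate $T_0$ by a bounded two-hidden-layer ReLU network. Applying \Cref{lem:approximation-non-compact} (whose proof fixes a single compact cube $C$ on which a one-hidden-layer net matches $T_0$ uniformly and then truncates to $[-M',M']$ via \Cref{lem:nn-truncation}) to the probability measure $\tfrac12(\P + \Q)$ in place of $\P$, and to the target $T_0$ (which is bounded by $M' - 1 = \|T_0\|_\infty$), yields for any prescribed $\delta > 0$ a two-hidden-layer ReLU network $T_\theta$ with $|T_\theta| \le M'$ everywhere and $\E^{\frac12(\P+\Q)}[|T_0 - T_\theta|] < \delta/2$. Since $\E^{\P}[\,\cdot\,] \le 2\,\E^{\frac12(\P+\Q)}[\,\cdot\,]$ and likewise for $\Q$, this gives $\E^{\P}[|T_0 - T_\theta|] < \delta$ and $\E^{\Q}[|T_0 - T_\theta|] < \delta$ at once.

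Finally I would transfer this to $\DV$ by estimating $|\DV(T_0) - \DV(T_\theta)|$ termwise. The linear term is immediate: $|\E^{\P}[T_0] - \E^{\P}[T_\theta]| \le \E^{\P}[|T_0 - T_\theta|] < \delta$. For the exponential term, boundedness is used twice: since $|T_0|, |T_\theta| \le M'$, the map $s \mapsto e^{s}$ is $e^{M'}$-Lipschitz on $[-M', M']$, so $\E^{\Q}[|e^{T_0} - e^{T_\theta}|] \le e^{M'} \E^{\Q}[|T_0 - T_\theta|] < e^{M'}\delta$; and $\E^{\Q}[e^{T_0}], \E^{\Q}[e^{T_\theta}] \ge e^{-M'} > 0$, so $\log$ is $e^{M'}$-Lipschitz on the relevant range $[e^{-M'}, \infty)$, giving $|\log\E^{\Q}[e^{T_0}] - \log\E^{\Q}[e^{T_\theta}]| \le e^{2M'}\delta$. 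Hence $|\DV(T_0) - \DV(T_\theta)| \le (1 + e^{2M'})\delta$, and choosing $\delta < \frac{\varepsilon/2}{1 + e^{2M'}}$ yields $\DV(T_\theta) > \DV(T_0) - \varepsilon/2 > R(\P \parallel \Q) - \varepsilon$, as required.

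I expect the only real obstacle to be the $\log\E^{\Q}[e^{T}]$ summand, precisely because $s \mapsto e^{s}$ and $\log$ are not globally Lipschitz: this is what forces the use of a uniformly bounded approximant (so \Cref{lem:nn-truncation} is essential, not cosmetic) and what makes $L^1$-closeness under $\Q$ — not merely under $\P$ — a necessary ingredient; with those two points secured, everything else is routine.
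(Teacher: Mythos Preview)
Your proposal is correct and follows essentially the same route as the paper: pick a bounded continuous near-optimizer from the Donsker--Varadhan representation, approximate it in $L^1$ under $\tfrac12(\P+\Q)$ via \Cref{lem:approximation-non-compact} (so the approximation holds simultaneously under $\P$ and $\Q$), and use the uniform bound $M'$ to control the exponential-moment term. The only cosmetic difference is in handling $\log\E^{\Q}[e^{T}]$: the paper first normalizes $T$ so that $\E^{\Q}[e^T]=1$ and then applies $\log x \le x-1$, which avoids invoking a Lipschitz constant for $\log$; you instead lower-bound $\E^{\Q}[e^{T_0}],\E^{\Q}[e^{T_\theta}] \ge e^{-M'}$ and use that $\log$ is $e^{M'}$-Lipschitz on $[e^{-M'},\infty)$, picking up an extra $e^{M'}$ factor that is harmlessly absorbed into the choice of $\delta$.
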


\begin{proof}
    By the Donsker-Varadhan representation theorem, there exists a continuous, bounded (say by $M' - 1$) function $T: \mathcal{X} \to \RR$ such that $R(\P \parallel \Q) - \DV(T) < \varepsilon / 2$.
    Let it be normalized such that $\E^{\Q}[e^T] = 1$ (note that $\DV(T + c) = \DV(T)$ for $c \in \RR$).
    By \Cref{lem:approximation-non-compact}, with the probability measure $(\P + \Q) / 2$, there exists a two hidden layer neural network $T_\theta$, bounded by $M'$, such that $\E^{\P}[|T - T_\theta|] < \varepsilon / 4$ and $\E^{\Q}[|T - T_\theta|] < e^{-M'} \varepsilon / 4$.
    Then we have that
    
    \begin{equation}\label{eq:R-approximation-non-compact-T-T-theta}
        \begin{split}
            \DV(T) - \DV(T_\theta) &= \E^{\P}[T - T_\theta] + (\log \E^{\Q}[e^{T_\theta}] - \underbrace{\log \E^{\Q}[e^{T}]}_{0}) \\
            &\leq \E^{\P}[T - T_\theta] + (\E^{\Q}[e^{T_\theta}] - \underbrace{\E^{\Q}[e^{T}]}_{1}) \\
            &\leq \E^{\P}[|T - T_\theta|] + \E^{\Q}[|e^{T_\theta} - e^{T}|] \\
            &\leq \E^{\P}[|T - T_\theta|] + \E^{\Q}[e^{M'}|T_\theta - T|] \\
            &< \frac{\varepsilon}{4} + e^{M'} e^{-M'} \frac{\varepsilon}{4} \\
            &= \frac{\varepsilon}{2},
        \end{split}
    \end{equation}

    \noindent where the first inequality follows from the identity $\log x \leq x - 1$, and the third from the Lipschitz constant $e^{M'}$ of the exponential function restricted to $[-M', M']$.
    Combining the two inequalities, we have 

    \begin{equation}
        R(\P || \Q) - \DV(T_\theta) = \left(R(\P || \Q) - \DV(T_\theta)\right) + \left(\DV(T) - \DV(T_\theta)\right) < \frac{\varepsilon}{2} + \frac{\varepsilon}{2} = \varepsilon.
    \end{equation}
\end{proof}

\subsection*{The result}

We are now able to state and prove our main theorem.

\begin{theorem}\label{thm:remedi-consistency-non-compact}
    REMEDI is strongly consistent, up to an arbitrarily small precision $\varepsilon > 0$.
\end{theorem}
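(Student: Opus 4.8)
The plan is to assemble the two ingredients already established in this appendix. Fix $\varepsilon > 0$. The target quantity is $\hat{\mathcal{L}}_{\texttt{REMEDI}}$ from Eq.~\eqref{eq:emploss}, which decomposes as $\hat{\mathcal{L}}_{\mathrm{KNIFE}} - \hat{\mathcal{L}}_{\mathrm{DV}}$, where the second piece is precisely $\DVnm(\F_\Theta)$. I want to show that, almost surely, $\hat{\mathcal{L}}_{\texttt{REMEDI}}$ converges (as $n, m \to \infty$) to within $\varepsilon$ of the true entropy $H(\P)$. First I would handle the cross-entropy term: since $n^{-1}\sum_{i=1}^n -\log q(x_i)$ is an average of i.i.d.\ copies of $-\log q(X)$ under $\P$, the strong law of large numbers gives $\hat{\mathcal{L}}_{\mathrm{KNIFE}} \xrightarrow{a.s.} C(\P \parallel \Q) = -\E^{\P}[\log q]$, provided this expectation is finite — which holds because $q$ is a Gaussian (mixture) density so $-\log q$ grows at most quadratically, and $\P$ has a finite second moment (a consequence of the sub-Gaussian/moment assumptions, or at least this is the natural additional hypothesis to state).

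Next I would invoke Theorem~\ref{thm:theta-uniform-convergence-non-compact} to get $\DVnm(\F_\Theta) \xrightarrow{a.s.} \DV(\F_\Theta)$, and Theorem~\ref{thm:R-approximation-non-compact} to get a network $T_{\theta_0} \in \F_\Theta$ (enlarging $\Theta$ if necessary so that it contains this two-hidden-layer network) with $R(\P \parallel \Q) - \DV(T_{\theta_0}) < \varepsilon$. Since $\DV(T_{\theta_0}) \leq \DV(\F_\Theta) \leq R(\P \parallel \Q)$ — the upper bound being the Donsker–Varadhan inequality and the lower bound being a supremum over a subfamily — we obtain $R(\P \parallel \Q) - \DV(\F_\Theta) < \varepsilon$. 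Then on the intersection of the two probability-one events,
\begin{equation}
    \lim_{n,m\to\infty} \hat{\mathcal{L}}_{\texttt{REMEDI}} = C(\P \parallel \Q) - \DV(\F_\Theta) = -\E^{\P}[\log q] - \DV(\F_\Theta),
\end{equation}
and using $H(\P) = -\E^{\P}[\log q] - R(\P \parallel \Q)$ from Eq.~\eqref{eq:entropy-est-DV}, the limit differs from $H(\P)$ by $R(\P \parallel \Q) - \DV(\F_\Theta) \in [0, \varepsilon)$. This is the claimed consistency up to precision $\varepsilon$.

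The main obstacle is bookkeeping rather than a deep difficulty: one must be careful that the \emph{same} compact parameter space $\Theta$ supports both the universal-approximation network from Theorem~\ref{thm:R-approximation-non-compact} and the ULLN from Theorem~\ref{thm:theta-uniform-convergence-non-compact}. The cleanest route is to first pick $\varepsilon$, then pick the approximating architecture and a bounded box of parameters containing $\theta_0$, declare that to be $\Theta$, verify it inherits the common Lipschitz constant $L_\Theta$ and bound $A_\Theta$ at zero (automatic for ReLU networks on a compact parameter set, with continuous dependence on $\theta$), and only then run the convergence theorem on that fixed $\Theta$. A secondary point worth flagging explicitly is the finiteness of $\E^{\P}[\log q]$, which should be recorded as part of the standing assumptions (it follows from $\Q$ being a non-degenerate Gaussian mixture together with $\P$ having a finite second moment) so that the SLLN application for $\hat{\mathcal{L}}_{\mathrm{KNIFE}}$ is legitimate.
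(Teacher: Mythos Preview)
Your proposal is correct and follows essentially the same route as the paper: fix $\varepsilon$, choose $\Theta$ large enough via \Cref{thm:R-approximation-non-compact} so that $R(\P\parallel\Q)-\DV(\F_\Theta)<\varepsilon$, then apply \Cref{thm:theta-uniform-convergence-non-compact} to pass from $\DVnm(\F_\Theta)$ to $\DV(\F_\Theta)$ almost surely. The only cosmetic difference is that the paper's formal proof treats solely the Donsker--Varadhan piece (the cross-entropy term having already been dispatched by the SLLN in the preceding discussion), whereas you fold both terms into one argument and additionally flag the integrability condition on $\log q$ --- a welcome bit of extra care.
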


\begin{proof}
    Choose the number of parameters $N$, (appropriately assigned to the first and second layer) and a compact set $\Theta \subset \RR^N$, both large enough such that $|R(\P \parallel \Q) - \DV(\F_\Theta)| < \varepsilon / 2$, by \Cref{thm:R-approximation-non-compact}.
    By the triangle inequality,

    \begin{equation}
    \begin{split}
        |R(\P \parallel \Q) - \DVnm(\F_\Theta)| &= |R(\P \parallel \Q) - \DV(\F_\Theta) + \DV(\F_\Theta) - \DVnm(\F_\Theta)| \\
        &\leq |R(\P \parallel \Q) - \DV(\F_\Theta)| + |\DVnm(\F_\Theta) - \DV(\F_\Theta)| \\
        &< \frac{\varepsilon}{2} + |\DVnm(\F_\Theta) - \DV(\F_\Theta)|.
    \end{split}
    \end{equation}

    \noindent The second term converges to zero almost surely by \Cref{thm:theta-uniform-convergence-non-compact}, and therefore the error converges to strictly less than $\varepsilon$ almost surely.
\end{proof}

\section{Justification of the loss function}

In this section, we present the proof of Proposition~\ref{Thm:DV_loss} which provides insights into the \texttt{REMEDI} loss function and it's connection to density estimation. We restate the following notations for their use in the proof. Let $\P$ and $\Q$ be the target and base distribution, respectively, defined on the sample space $\mathcal{X}$, with densities $p, q$, let $X$ be a random variable with distribution $\P$, and $T$ be a function from the class of continuous bounded functions, or the class of Borel-measurable functions from $\mathcal{X}$ to $\RR$. Using the function $T$ we define the Gibbs distribution $\mathbb{G}$ on the sample space of $\mathcal{X}$, which has the density $\tilde{p}(x) = \frac{q(x)e^{T(x)}}{\E^{\Q} [e^T]}$. 
\begin{proposition}\label{Thm:DV_loss}
Given a base distribution $\Q$, assuming $\E^{\Q} [e^T]$ exists, consider the following density defined by $T$,
\begin{align*}
\tilde{p}(x) = \frac{q(x)e^{T(x)}}{\E^{\Q} [e^T]}.
\end{align*}
Then, 
\begin{itemize}
    \item[(i)] the right-hand side in Eq.~\eqref{eq:entropy-est-DV-2} is equal to $-\E^{\P} \log \tilde{p}$
    \item[(ii)] $T^*$ is the solution of 
\begin{align*}
\sup_{T: \mathcal{X} \to \RR} \left(\E^{\P}[T] - \log\E^{\Q}[e^T]\right),
\end{align*}
if and only if the associated density $\tilde{p}^*$ satisfies $\tilde{p}^* = p$ for any base distribution $\Q$.
\end{itemize}
\end{proposition}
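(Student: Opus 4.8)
The plan is to reduce both parts to one algebraic identity combined with the Gibbs (information) inequality. For part~(i), I would simply take logarithms in the definition of $\tilde p$, obtaining $\log\tilde p(x) = \log q(x) + T(x) - \log\E^{\Q}[e^T]$, and then apply $-\E^{\P}[\cdot]$ and rearrange:
\begin{equation*}
-\E^{\P}[\log\tilde p] = -\E^{\P}[\log q] - \bigl(\E^{\P}[T] - \log\E^{\Q}[e^T]\bigr),
\end{equation*}
which is exactly the right-hand side of Eq.~\eqref{eq:entropy-est-DV-2}. The only thing to verify is that the three expectations are well-defined (finite, or at worst in $[-\infty,\infty)$) so that the split is legitimate; this is where the hypothesis that $\E^{\Q}[e^T]$ exists, together with $\P\ll\Q$, enters.

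For part~(ii), write $J(T):=\E^{\P}[T]-\log\E^{\Q}[e^T]$. Combining part~(i) with the cross-entropy decompositions $-\E^{\P}[\log q] = H(\P) + R(\P\parallel\Q)$ and $-\E^{\P}[\log\tilde p] = H(\P) + R(\P\parallel\mathbb{G})$, I would derive, for every admissible $T$,
\begin{equation*}
J(T) = R(\P\parallel\Q) - R(\P\parallel\mathbb{G}),
\end{equation*}
where $\mathbb{G}$ is the Gibbs distribution with density $\tilde p$; here one checks $\P\ll\mathbb{G}$, which holds because $\{\tilde p>0\}=\{q>0\}$ and $\P\ll\Q$, so $R(\P\parallel\mathbb{G})$ makes sense. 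Since $R(\P\parallel\Q)$ is independent of $T$ and, by Gibbs' inequality, $R(\P\parallel\mathbb{G})\ge 0$ with equality if and only if $\mathbb{G}=\P$, i.e. $\tilde p = p$ ($\lambda$-a.e.), it follows that $J(T)\le R(\P\parallel\Q)$ with equality precisely when $\tilde p = p$. To close the equivalence I would exhibit the candidate $T^{\star}=\log(p/q)$ (on $\{p>0\}$): it gives $\E^{\Q}[e^{T^{\star}}]=\int_{\{q>0\}}p\,d\lambda=1$ and $\tilde p^{\star}=p$ $\lambda$-a.e., so $J(T^{\star})=R(\P\parallel\Q)=\sup_T J(T)$. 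Then for any $T^{*}$: $T^{*}$ attains the supremum $\iff J(T^{*})=R(\P\parallel\Q) \iff R(\P\parallel\mathbb{G}^{*})=0 \iff \tilde p^{*}=p$, and this characterization does not involve the choice of $\Q$.

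There is no deep idea here; the main obstacle is the measure-theoretic bookkeeping: making sure $-\E^{\P}[\log\tilde p]$ and $R(\P\parallel\mathbb{G})$ are well-defined and that $\sup_T J(T)=R(\P\parallel\Q)<\infty$ (a standing assumption one should make explicit, since if $R(\P\parallel\Q)=\infty$ the "iff" can fail), and handling the set $\{p=0,\,q>0\}$, on which the natural optimizer $\log(p/q)$ is $-\infty$. When $\Q(\{p=0\})>0$, no real-valued $T$ can give $\tilde p = p$ (that would force $e^{T}=0$ there), the supremum is approached but not attained, and the equivalence in~(ii) holds vacuously; otherwise a genuine maximizer exists. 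I would either spell out this case distinction or, more cleanly, add the standing hypothesis $\P\sim\Q$ under which the maximizer exists.
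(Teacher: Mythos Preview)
Your proposal is correct and follows essentially the same route as the paper: both hinge on the identity $J(T)=R(\P\parallel\Q)-R(\P\parallel\mathbb{G})$ and the nonnegativity of relative entropy, with the paper invoking the Donsker--Varadhan representation to identify $\sup_T J(T)=R(\P\parallel\Q)$ while you obtain the same bound directly from Gibbs' inequality and exhibit the optimizer $T^\star=\log(p/q)$. Your added measure-theoretic bookkeeping (finiteness of $R(\P\parallel\Q)$, the vacuous case when $\Q(\{p=0\})>0$, the suggestion to assume $\P\sim\Q$) goes beyond what the paper's proof spells out and is a welcome clarification rather than a different argument.
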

\begin{proof}[Proof of Proposition 3.1] We prove Proposition 3.1 in the following two parts.
\vspace{0.5cm}

\noindent 1. \underline{\textit{Showing} $\mathcal{L}_{\texttt{REMEDI}} = -\E^{\P} \log \tilde{p}$ :}
\noindent Using the form of $\tilde{p}$ we can write,
\begin{flalign}
    -\E^{\P} \log \tilde{p}(X) &= - \E^{\P} \log \frac{q(X)e^{T(X)}}{\E^{\Q} [e^T(X)]} \nonumber \\
    &= - \E^{\P} \log q(X) - \left(\E^{\P} T(X) - \log \E_{\mathbb{Q}} e^{T(X)}\right) \nonumber \\
    &= \mathcal{L}_{\texttt{REMEDI}} \nonumber
\end{flalign}
%Hence the proof follows.\todo{Remove this?}
\noindent 2. \underline{\textit{Showing that optimal} $T^*$ \textit{is achieved if and only if} $\tilde{p}^* = p$ \textit{for any} $\Q$ :}
\noindent For any function $T: \mathcal{X} \to \RR$, and corresponding, previously defined, distribution $\mathbb{G}$, we expand the following relative entropy,
\begin{flalign}\label{eq:expandrelentr}
    R(\P || \mathbb{G}) &= \E^{\P} \log \frac{p(X)}{\tilde{p}(X)} \nonumber \\
    &= \E^{\P} \log \frac{p(X)}{\frac{q(X)e^{T(X)}}{\E^{\Q} [e^T(X)]}} \nonumber \\
    &= \E^{\P} \log \frac{p(X)}{q(X)} - \E^{\P} T(X) + \log \E_{\mathbb{Q}} e^{T(X)}
\end{flalign}
\noindent \textbf{If part :}
We show that, for any $T^*$, if the associated Gibbs density $\tilde{p}^*$ is equal to $p$ then $T^*$ is the solution of $ \sup_{T: \mathcal{X} \to \RR} \left(\E^{\P}[T] - \log\E^{\Q}[e^T]\right)$. 

Let, $\mathbb{G}^*$ be the distribution with density $\tilde{p}^*$. Then we have, 
\begin{flalign}
    \tilde{p}^* = p \implies R(\P || \mathbb{G}^*) = 0 \implies \E^{\P} \log \frac{p(X)}{q(X)} = \E^{\P} T^*(X) - \log \E_{\mathbb{Q}} e^{T^*(X)} \nonumber
\end{flalign}
The last equality is due to Eq.~\eqref{eq:expandrelentr}. Since the LHS of the last equation is $R(\P || \Q)$ following Donsker-Varadhan representation \cite{donsker1983asymptotic} we have,
\begin{flalign}
    R(\P || \Q) = \E^{\P} T^*(X) - \log \E_{\mathbb{Q}} e^{T^*(X)} \leq \sup_{T: \mathcal{X} \to \RR} \left(\E^{\P}[T] - \log\E^{\Q}[e^T]\right) =  R(\P || \Q) \nonumber
\end{flalign}
Therefore, $T^*$ is the maximizer of $\left(\E^{\P}[T] - \log\E^{\Q}[e^T]\right)$.

\noindent \textbf{Only If part :} We show that, if $T^*$ is the solution of $\sup_{T: \mathcal{X} \to \RR} \left(\E^{\P}[T] - \log\E^{\Q}[e^T]\right)$ then $\tilde{p}^* = p$.

Let, $\mathbb{G}^*$ be the distribution with density $\tilde{p}^*$. From the Donsker-Varadhan representation, we have,
\begin{flalign}\label{eq:DVrepr}
    R(\P || \Q) = \sup_{T: \mathcal{X} \to \RR} \left(\E^{\P}[T] - \log\E^{\Q}[e^T]\right) = \E^{\P} T^*(X) - \log \E_{\mathbb{Q}} e^{T^*(X)}
\end{flalign}
Therefore, we have,
\begin{flalign}
    R(\P || \mathbb{G}^*) = \E^{\P} \log \frac{p(X)}{q(X)} - \E^{\P} T^*(X) + \log \E_{\mathbb{Q}} e^{T^*(X)} = 0 \nonumber
\end{flalign}
The first equality is due to Eq.~\eqref{eq:expandrelentr} and the second is from Eq.~\eqref{eq:DVrepr}. Hence the proof follows.

\end{proof}

\section{Additional synthetic benchmarks}\label{sec:additional-synthetic-benchmarks}

To benchmark how REMEDI copes with increasing dimensionality, we provide two additional benchmarks, the $d$-dimensional ball and hypercube, see Section \ref{sec:datasets}.
Since the investigation concerns the high-dimensional performance on the REMEDI correction and not Gaussian mixture models (see Section \ref{sec:inefficiency-of-knife} for that analysis), we stick to a single component KNIFE base model, which also trains quickly
For comparison, we juxtapose the results with the validation set cross-entropy estimate of the $256$-component KNIFE model, which performs the best on almost all benchmarks in Section \ref{sec:inefficiency-of-knife}.

From Figures \ref{fig:loss_ball} and \ref{fig:loss_cube}, we note that the \texttt{REMEDI} entropy estimates are consistently considerably closer to the true entropy than what the KNIFE framework can achieve.
Further, although some overfitting can be seen, it is nowhere near as bad as for KNIFE (see \Cref{sec:inefficiency-of-knife}). This means that there is likely more for \texttt{REMEDI} to claim with additional hyperparameter tuning.

\begin{figure}[H]
	\centering
	\begin{subfigure}{0.48\textwidth}
    	\centering
    	\includegraphics[width=\linewidth]{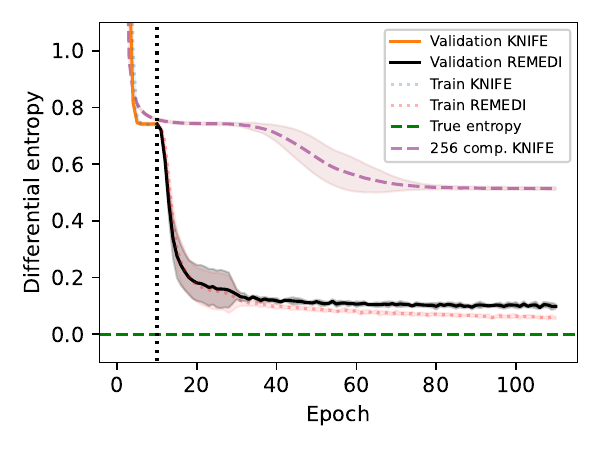}
        \label{fig:loss_ball8d}
	\end{subfigure}
	\begin{subfigure}{0.48\textwidth}
    	\centering
    	\includegraphics[width=\linewidth]{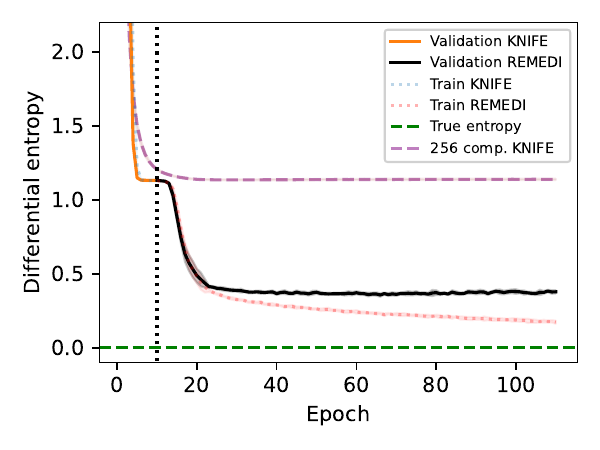}
        \label{fig:loss_ball20d}
    \end{subfigure}
    \caption{Training curves on ball dataset. The $8$-dimensional dataset is shown to the left and the $20$-dimensional to the right. The 256-component KNIFE model is also shown for reference.}
    \label{fig:loss_ball}
\end{figure}

\begin{figure}[H]
	\centering
	\begin{subfigure}{0.48\textwidth}
    	\centering
    	\includegraphics[width=\linewidth]{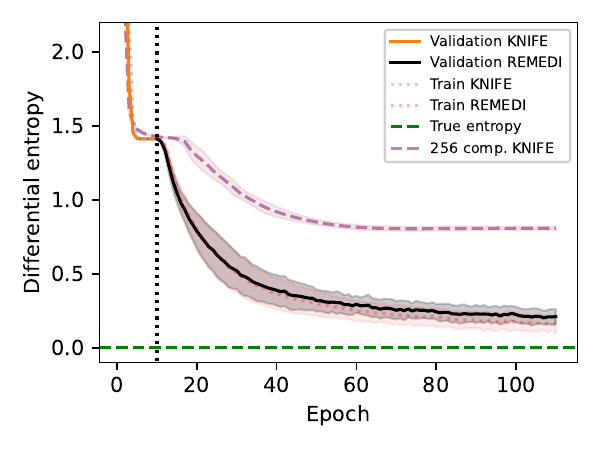}
        \label{fig:loss_cube8d}
	\end{subfigure}
	\begin{subfigure}{0.48\textwidth}
    	\centering
    	\includegraphics[width=\linewidth]{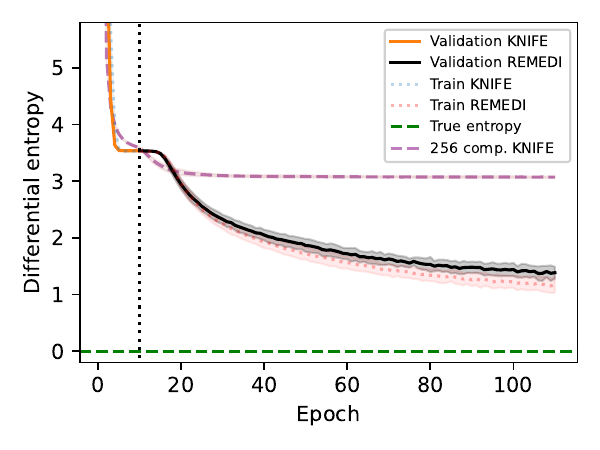}
        \label{fig:loss_cube20d}
    \end{subfigure}
    \caption{Training curves on cube dataset. The $8$-dimensional dataset is shown to the left and the $20$-dimensional to the right. The 256-component KNIFE model is also shown for reference.}
    \label{fig:loss_cube}
\end{figure}

\subsection{Inefficiency of KNIFE}\label{sec:inefficiency-of-knife}

An often-cited folklore result in statistics is that Gaussian mixture models are dense in the space of probability measures equipped with the weak-* topology.
Parzen-windowing is a subset of Gaussian mixture models with the same property.
As a consequence, both model families can be considered for a wide range of tasks such as density and entropy estimation.
\cite{ahmad1976nonparametric} give conditions for when a certain Parzen-windowing-based estimator converges to the differential entropy.
It is however well known that standard kernel density estimation suffers heavily from the curse of dimensionality, see e.g. the table on page 319 of \cite{wasserman2004all}, which shows that an unreasonable amount of data points is required to obtain low error on a multi-variate normal target $\P$.
Since Gaussian mixture models such as KNIFE have more flexibility, they require less components to achieve the estimate.
The question then remains whether they manage to be sufficiently data efficient in growing dimension.
We show empirically that the KNIFE approach suffers from similar sample inefficiency problems, even in a moderate dimension.

Unlike \cite{wasserman2004all} we cannot simply use a multi-variate normal as the target, since this is in the model class of KNIFE for any amount of components.
Instead, we opt for the $8$-dimensional targets comprised of uniforms over a hypercube/ball, as well as the $8$-dimensional triangle dataset, with $50000$ training and validation samples each, see Section \ref{sec:datasets}.

Fig.~\ref{fig:8d_varcomps_losses} and Fig.~\ref{fig:cube_8d_varcomps_losses} show that KNIFE hits a barrier on all datasets when increasing the number of components.
\begin{wrapfigure}{r}{0.5\textwidth}
    \centering
    \includegraphics[width=0.4\textwidth]{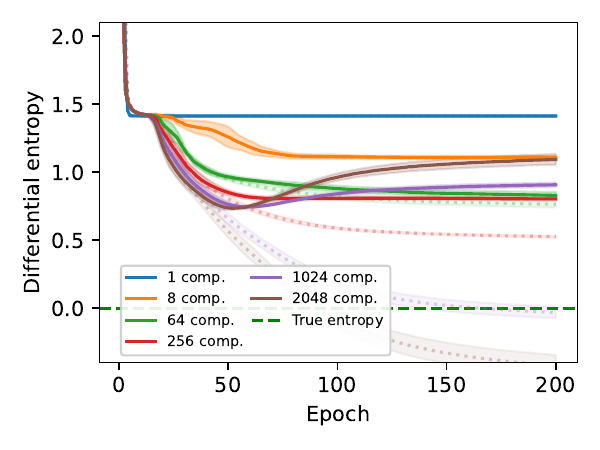}
    \caption{$8$-dimensional cube}
    \label{fig:cube_8d_varcomps_losses}
\end{wrapfigure}
Increasing the number of components further leads to overfitting, implying that KNIFE is data-inefficient already in this dimension, especially on the cube and ball datasets where the best estimates are comparable with using just one component.

To give additional verification to this, we perform the experiment again in $20$ dimensions, shown in 
Fig.~\ref{fig:20d_varcomps_losses}.
We note that additional components at best give an almost negligible improvement to all three datasets; in fact, we see no gain at all on the ball dataset.

It becomes clear that in both of these dimensions, that \texttt{REMEDI} can provide much more valuable entropy estimates, at least outside of gigantic sample sizes.
Still, \texttt{REMEDI} like most algorithms will of course also suffer from problems like overfitting and the curse of dimensionality.
However, the prior provided by the class of shallow neural networks trained with the Donsker-Varadhan target scales better to moderately high dimensions that Parzen-windowing and KNIFE struggle with.

\begin{figure}[H]
	\centering
    \begin{subfigure}{0.33\textwidth}
    	\centering
    	\includegraphics[width=\linewidth]{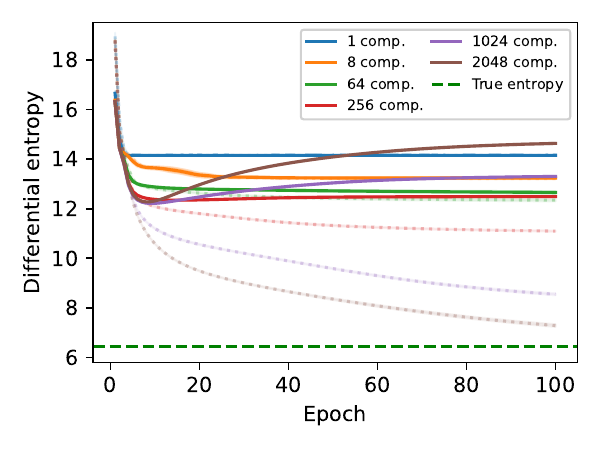}
    	\caption{Triangle}
        \label{fig:triangle_20d_varcomps}
	\end{subfigure}
	\begin{subfigure}{0.33\textwidth}
    	\centering
    	\includegraphics[width=\linewidth]{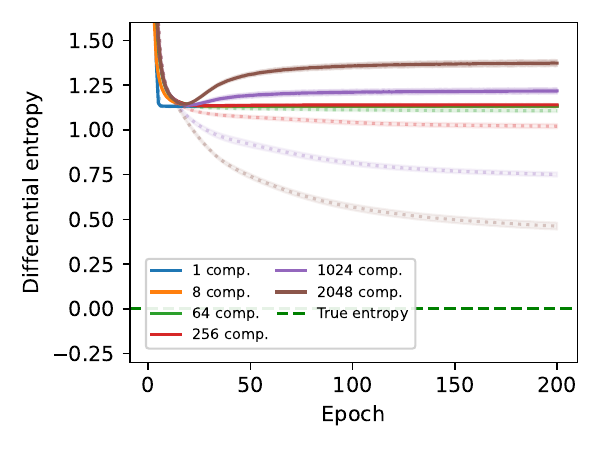}
    	\caption{Ball}
        \label{fig:ball_20d_varcomps_losses}
	\end{subfigure}
	\begin{subfigure}{0.33\textwidth}
    	\centering
    	\includegraphics[width=\linewidth]{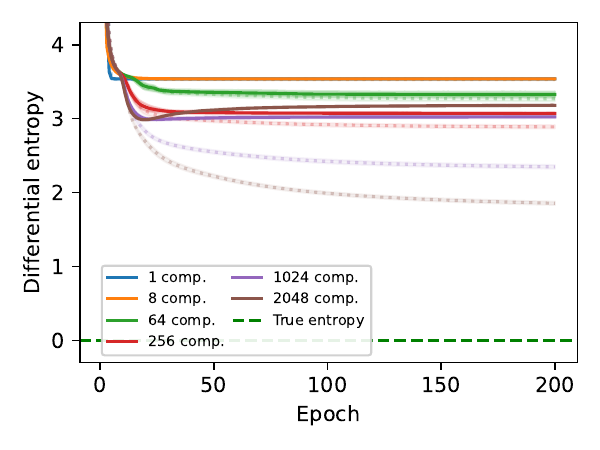}
    	\caption{Cube}
        \label{fig:cube_20d_varcomps_losses}
    \end{subfigure}
    \caption{KNIFE training curves on three $20$-dimensional datasets.}
    \label{fig:20d_varcomps_losses}
\end{figure}

\subsection{Impact of the base distribution}\label{sec:impact-of-the-base-distribution}

One of the primary innovations of \texttt{REMEDI} compared to MINE or DDDE is the adaptive base distribution.
The base distribution is optimized for cross-entropy, or equivalently in the large sample regime, to minimize the relative entropy

\begin{equation}
    R(\P || \Q) = \E^{\P}\left[\log \left(\frac{d\P}{d\Q}\right)\right].
\end{equation}

\noindent This means that, on average, the samples from $\Q$ are closer to the support $\P$, and that regions where $\P$ is stronger than $\Q$, i.e. having large $\frac{d\P}{d\Q}$, are less pronounced.
As \cite{mcallester2020formal} points out, these regions are hard to learn and account for in the Donsker-Varadhan estimate.

\begin{figure}[H]
	\centering
	\begin{subfigure}{0.48\textwidth}
    	\centering
    	\includegraphics[width=\linewidth]{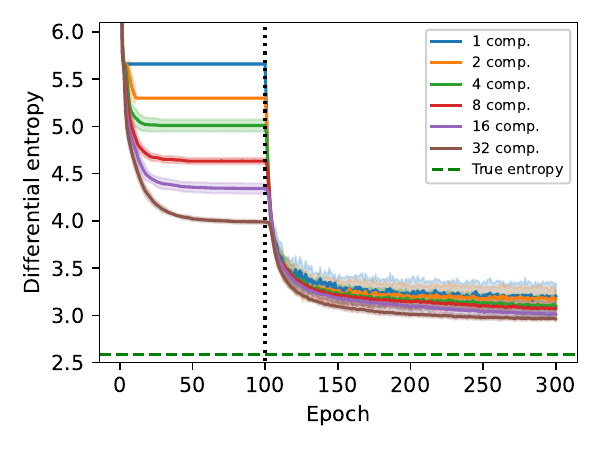}
        \label{fig:triangle_8d_varcomp_unzoomed}
	\end{subfigure}
	\begin{subfigure}{0.48\textwidth}
    	\centering
    	\includegraphics[width=\linewidth]{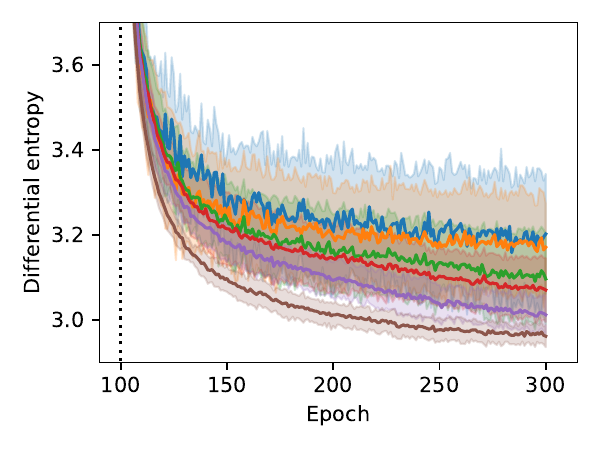}
        \label{fig:triangle_8d_varcomp_zoomed}
    \end{subfigure}
    \caption{Training curves on 8-dimensional triangle dataset for different amounts of components in the base distribution, averaged over 10 runs. The right plot is zoomed into the \texttt{REMEDI} phase.}
    \label{fig:triangle_8d_varcomp}
\end{figure}

\begin{table}[H]
    \centering
    \begin{tabular}{|l|l|l|}
    \hline
        Components & KNIFE & \texttt{REMEDI} \\
        \hline
        1 & $5.6612 \pm 0.0035$ & $3.2025 \pm 0.1414$ \\
        \hline
        2 & $5.2999 \pm 0.0045$ & $3.1713 \pm 0.1039$ \\
        \hline
        4 & $5.0095 \pm 0.0632$ & $3.0968 \pm 0.1068$ \\
        \hline
        8 & $4.6331 \pm 0.0331$ & $3.0708 \pm 0.0750$ \\
        \hline
        16 & $4.3413 \pm 0.0541$ & $3.0124 \pm 0.0449$ \\
        \hline
        32 & $3.9894 \pm 0.0214$ & $2.9621 \pm 0.0026$ \\
        \hline
    \end{tabular}
    \caption{Entropy estimates and standard deviations on 8-dimensional triangle dataset, based on 10 runs.}
    \label{tab:triangle_8d_varcomp}
\end{table}

To investigate what impact a good base distribution has, we rerun the experiments on the 8-dimensional triangle dataset.
In order to get a better understanding of the long-term learning behavior, we double the number of epochs for both the base distribution and the \texttt{REMEDI} to $100$ and $200$, respectively.
In Fig.~\ref{fig:triangle_8d_varcomp} and Table~\ref{tab:triangle_8d_varcomp} we see that learning with a higher amount of components is indeed easier.
The estimates given by using 16 and 32 components clearly beat the others, and seem to be learning at a faster pace.
Surprisingly, the difference is not as pronounced between 1 and 2 components, but we do note that the training stabilizes for more components, also there.

\subsection{Comparison to normalizing flows}

Other than mixture models such as KNIFE, more flexible methods of density estimation, such as normalizing flows, can be used for entropy computation.
This allows for better estimates, at the cost of computational efficiency.
Therefore, we provide a comparison between the performance of KNIFE/\texttt{REMEDI} and RealNVP \cite{dinh2016density}, a popular family of normalizing flows, at different computational budgets, measured in wall-clock time.
For RealNVP, we use an out-of-the-box implementation provided by the 'normflows' \cite{stimper2023normflows} package.
In Fig.~\ref{fig:pareto-triangle_8d}, the methods are compared on the multi-modal 8-dimension triangle dataset, for different depths (i.e. amount of flow steps) of RealNVP, while the \texttt{REMEDI} settings are untouched from earlier experiments.
Taking the lower convex hull of all wall time-entropy estimate pairs, we see that KNIFE/\texttt{REMEDI} occupies practically the entire Pareto front.
Note that there are other families of normalizing flows, continuous normalizing flows (CNF) \cite{grathwohl2018ffjord} such as diffusion models \cite{song2020score, song2020denoising, kingma2021variational} that may be able to better cope with this multi-modality, but since these require integrating the divergence of a vector field along the ODE solution \cite{chen2018neural}, they have high complexity and we consider them out of scope.

\begin{figure}[H]
    \centering
    \includegraphics[width=0.5\textwidth]{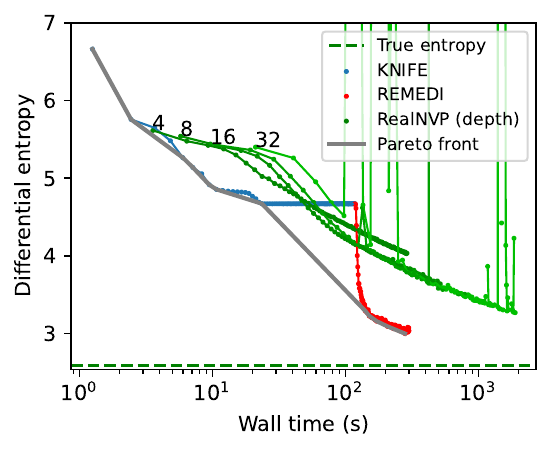}
    \caption{Performance of REMEDI and KNIFE given computational budget, compared to RealNVP.}
    \label{fig:pareto-triangle_8d}
\end{figure}

\section{Experimental details}\label{sec:experimental-details}

\subsection{Entropy estimation experiments}

 For the experiments on the triangular and two moons datasets, we opted for a neural network that outputs $f_\theta(x) = e^{T(x)}$ in the Donsker-Varadhan formula, letting $T(x)$ be obtained by taking the logarithm.
Following \cite{park2021deep}, this is implemented using the $\mathrm{ELU}$ activation function \cite{clevert2015fast} with $\alpha = 1$, by applying the transformation

\begin{equation}\label{eq:ELU-final-transformation}
    \mathrm{ELU}(x) + 1 + \epsilon
\end{equation}

\noindent to the final output of the model, where $\epsilon$ is a small scalar.

The architecture here exploits the structure of the KNIFE base distribution.
For each of the $M$ components $i$, with weight $\alpha_i$, mean $\mu_i$ and covariance $\Sigma_i$ (and precision $\Lambda_i = \Sigma_i^{-1}$), the input $x$ is projected onto the uncorrelated components, with respect to $\Sigma_i$, by premultiplying by the lower triangular Cholesky factor $L_i$, of $\Lambda_i$.
Hence, we obtain $M$ component-wise decorrelated offset vectors

\begin{equation*}
    y_i = L_i (x - \mu_i) \in \RR^d.
\end{equation*}

\noindent These are then transformed via learnable matrices $A_i \in \RR^{d \times d}$, since the features produced by $L_i$ are not consistent between dimensions, even when performing singular value decomposition.
These are then propagated through a fully connected intermediate network with two (three for the 20-dimensional hypercube) shared linear layers %, of width $500$, 
followed by ReLU activation functions, reaching the penultimate layer with dimension $d_p$ (in most experiments $500$).
They are then scalar multiplied with vectors $b_i \in \RR^{d_p}$ to produce scalars, which are then combined via weighting by the component relevances

\begin{equation}
    p(i | x) = \frac{p(i)p(x|i)}{p(x)} \propto w_i \exp\left(-\frac{1}{2} ||L_i(x - \mu_i)||^2\right).
\end{equation}

\noindent This weighted sum is finally input to the transformation in Eq.~\eqref{eq:ELU-final-transformation}.
In Table~\ref{tab:hyperparams-entropy-estimation}, the specific setups for each of the entropy estimation tasks are shown.

\begin{table}[h!]
\center
\resizebox{0.5\columnwidth}{!}{
\begin{tabular}{|c|c|cc|cc|c|c|}
\hline
\textbf{Hyperparameter} & \textbf{Two moons} & \multicolumn{2}{c|}{\textbf{Triangle}} & \multicolumn{2}{c|}{\textbf{Ball}} & \multicolumn{2}{c|}{\textbf{Hypercube}}  \\%\textbf{Hypercube} &  \\
\hline
Dimension & 2 & 1 & 8 & 8 & 20 & 8 & 20 \\
Train set size & 50,000 & 50,000 & 50,000 & 50,000 & 50,000 & 50,000 & 50,000 \\

Validation set size & 50,000 & 50,000 & 50,000 & 50,000 & 50,000 & 50,000 & 50,000 \\

\# KNIFE components & 8 & 16 & 16 & 1 & 1 & 1 & 1 \\

Intermediate network layer widths & (500, 500) & (500, 500) & (500, 500) & (200, 200) & (200, 200) & (1000, 1000, 500) & (1000, 1000, 500) \\

\# epochs KNIFE & 50 & 50 & 50 & 10 & 10 & 10 & 10 \\

\# epochs \texttt{REMEDI} & 100 & 100 & 100 & 100 & 100 & 100 & 100 \\

Training batch size & 1000 & 1000 & 1000 & 1000 & 1000 & 1000 & 1000 \\

Optimizer & Adam & Adam & Adam & Adam & Adam & Adam & Adam \\

Learning rate & 1e-3 & 1e-3 & 1e-3 & 1e-3 & 1e-3 & 1e-4 & 1e-4 \\

Weight decay & 1e-4 & 1e-4 & 1e-4 & 1e-4 & 1e-4 & 1e-4 & 1e-4 \\
\hline
\end{tabular}
}
\vspace{0.2cm}
\caption{Hyperparameter settings used for the entropy estimation experiments.}
\label{tab:hyperparams-entropy-estimation}
\end{table}

\subsection{Information Bottleneck experiments}
\label{sec:additionaldetailsexperiments}
For all the Information Bottleneck (IB) experiments, we have used the encoder-decoder architecture from \cite{pmlr-v206-samaddar23a}. On MNIST, we used an MLP encoder with three fully connected layers. The first two layers each contain 800 nodes with ReLU activations and the last layer has $2K$ nodes predicting the $\mu(X)$ and diagonal of $\Sigma(X)$ of the encoder distribution. On CIFAR-10, we used a VGG16 encoder. For both datasets, we used a single-layer neural network as the decoder. We chose the latent space dimension $K=32$ for both datasets. Note that, we apply the square transformation of $\operatorname{MI}(X;Z)$ term in the IB objective for all methods. This transformation makes the solution to IB objective function identifiable with respect to $\beta$ \cite{Galvez2020}. For evaluation, we chose the model at the final epoch on MNIST and the model with the best validation loss on CIFAR-10 and ImageNet. 

For the CIFAR-10 data following \cite{pmlr-v206-samaddar23a}, we perform a data augmentation step before training where we augmented the training data using random transformations. We use the padding of each training data point by 4 pixels on all sides and crop at a random location to return an original-sized image. We  perform a flip of each training set image horizontally with a probability of 0.5. Furthermore, we transform the training and validation set image with mean = $(0.4914, 0.4822, 0.4465)$ and standard deviation = $(0.2023, 0.1994, 0.2010)$.

For ImageNet, we resize the input images to 299 $\times$ 299 pixels by cropping them at their center. Subsequently, we normalize the images to achieve a mean of (0.5, 0.5, 0.5) and a standard deviation of (0.5, 0.5, 0.5). Our approach aligns with the implementation of \cite{alemi2016deep}, where we apply a transformation to the ImageNet data using a pre-trained Inception Resnet V2 \cite{szegedy16} network, excluding the output layer. This transformation results in the original ImageNet images being reduced to a 1534-dimensional representation, which serves as the basis for all our obtained results. In accordance with \cite{alemi2016deep}, we employ an encoder featuring two fully connected layers, each comprising 1024 hidden units, along with a single-layer decoder architecture. We chose the latent space dimension $K=100$ for ImageNet.

The implementations of MINE and \texttt{REMEDI} require us to specify a neural network to approximate the function $T$. For MINE, following the implementation of \cite{belghazi2018mutual} we chose a two-layer MLP with 512 nodes each and additive Gaussian noise and ELU activations. For \texttt{REMEDI}, we chose a two-layer MLP network with 100 nodes per layer and ReLU activations. 

Implementing the mutual information estimators of KNIFE, MINE, and \texttt{REMEDI} requires us to perform sub-optimization of their parameters inside the main optimization of the encoder-decoder parameters. We follow the IB implementations of \cite{pichler2022differential} to freeze the parameters of the encoder-decoder before performing the sub-optimization of the mutual information estimators. For KNIFE and MINE, we run the sub-optimization for 5 epochs. We train the mutual information estimators and the encoder-decoder parameters using the same mini-batch.

For \texttt{REMEDI} with KNIFE base distribution, following Algorithm 1, we first train the KNIFE parameters for 5 epochs and then fix the KNIFE parameter to train the parameters of the \texttt{REMEDI} network for 5 epochs. Note that, although ImageNet has 1000 classes, we keep the choice of 10-component KNIFE consistent due to the heavy computational burden of fitting a KNIFE with many components in high dimensions. We use a constant learning rate of 0.001 for training the mutual information estimators.  

For \texttt{REMEDI} with standard Gaussian base distribution, since we are facing a more challenging learning problem we train the parameters of the \texttt{REMEDI} network for 30 epochs with a reduced learning rate of 0.0001. In addition, during training, we consider an initial burn-in period of 5 epochs where we don't train the \texttt{REMEDI} network and introduce it after epoch 5. In our experiments, this improves the stability of the algorithm. Additional hyperparameter details regarding the encoder-decoder training are described in the below Table~\ref{tab:hyperparams-IB}. 

\begin{table}[h!]
\center
\resizebox{0.5\columnwidth}{!}{
\begin{tabular}{|c|c|c|c|}
\hline
\textbf{Hyperparameters} & \textbf{MNIST} & \textbf{CIFAR-10} & \textbf{ImageNet} \\
\hline
Train set size & 60,000 & 50,000 & 128,1167 \\

Validation set size & 10,000 & 10,000 & 50,000 \\

\# epochs & 100 & 400 & 200 \\

Training batch size & 200 & 200 & 2000  \\

Optimizer & Adam & SGD & Adam  \\

Learning rate & 1e-4 & 0.1 & 1e-4 \\

Learning rate drop & 0.6 & 0.1 & 0.97 \\

Learning rate drop steps & 10 epochs & 100 epochs & 2 epochs \\

Weight decay & Not used & 5e-4 & Not used\\
\hline
\end{tabular}
}
\vspace{0.2cm}
\caption{Hyperparameter settings used for the IB experiments.}
\label{tab:hyperparams-IB}
\end{table}

\subsection{Datasets}\label{sec:datasets}

\subsubsection{Triangular}

The triangular dataset is structurally the same as the one appearing in \cite{pichler2022differential}. 
It is defined for any dimension $d > 1$ as the $d$-fold product distribution of bimodal the distribution pictured in Fig.~\ref{fig:triangle-marginal-supplemental} with itself, making it a $2^d$-modal distribution.
In one dimension, we use the 10-modal distribution in Fig.~\ref{fig:triangle-1d-supplemental}, to match \cite{pichler2022differential}.

\begin{figure}[H]
	\centering
	\begin{subfigure}{0.48\textwidth}
    	\centering
    	\includegraphics[width=\linewidth]{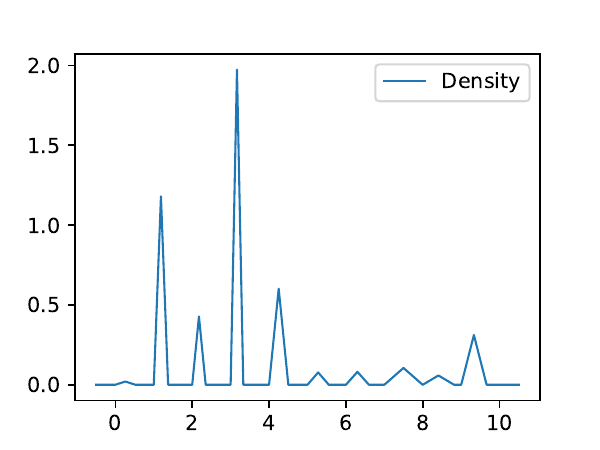}
    	\caption{One-dimensional data distribution.}
        \label{fig:triangle-1d-supplemental}
	\end{subfigure}
	\begin{subfigure}{0.48\textwidth}
    	\centering
    	\includegraphics[width=\linewidth]{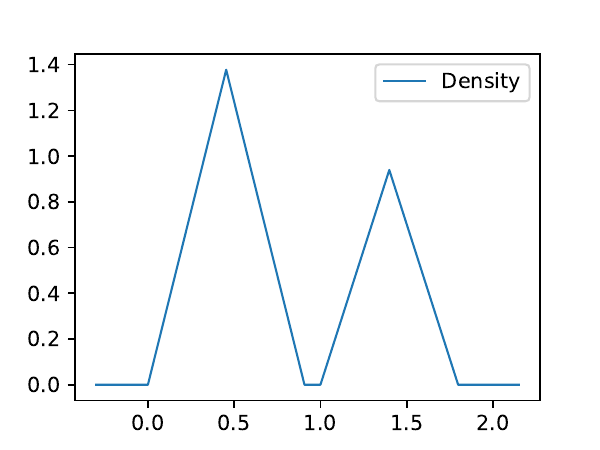}
    	\caption{Marginal data distribution, for $d > 1$.}
        \label{fig:triangle-marginal-supplemental}
    \end{subfigure}
    \caption{Triangle dataset.}
    \label{fig:triangle-supplemental}
\end{figure}

\subsubsection{Two moons}\label{sec:two-moons-dataset}

The two moons dataset consists of samples from $\mathtt{sklearn.datasets.make\_moons()}$, from Scikit-learn \cite{scikit-learn}, with a noise level of $0.05$.
$5000$ samples are plotted in Fig.~\ref{fig:two-moons-samples-supplemental}.
The entropy of this dataset does not offer a closed-form expression.
To have an oracle baseline, we thus take a million samples from the dataset and run a kernel density estimator, with bandwidth $0.01$, tested against one hundred thousand independent samples.
This yields an entropy estimate of $0.2893$, with a standard error of $0.0022$.
As an additional check, we run a Kozachenko-Leonenko $k$-nearest neighbor estimator \cite{kozachenko1987sample, gao2018demystifying} on one hundred million samples, setting $k = 10$ and using the Euclidean distance, which yields a value of $0.2892$.
Given the consistency properties of these estimators, we can confidently say that the true entropy is close to $0.29$.

\begin{figure}[H]
    \centering
    \includegraphics[width=0.5\textwidth]{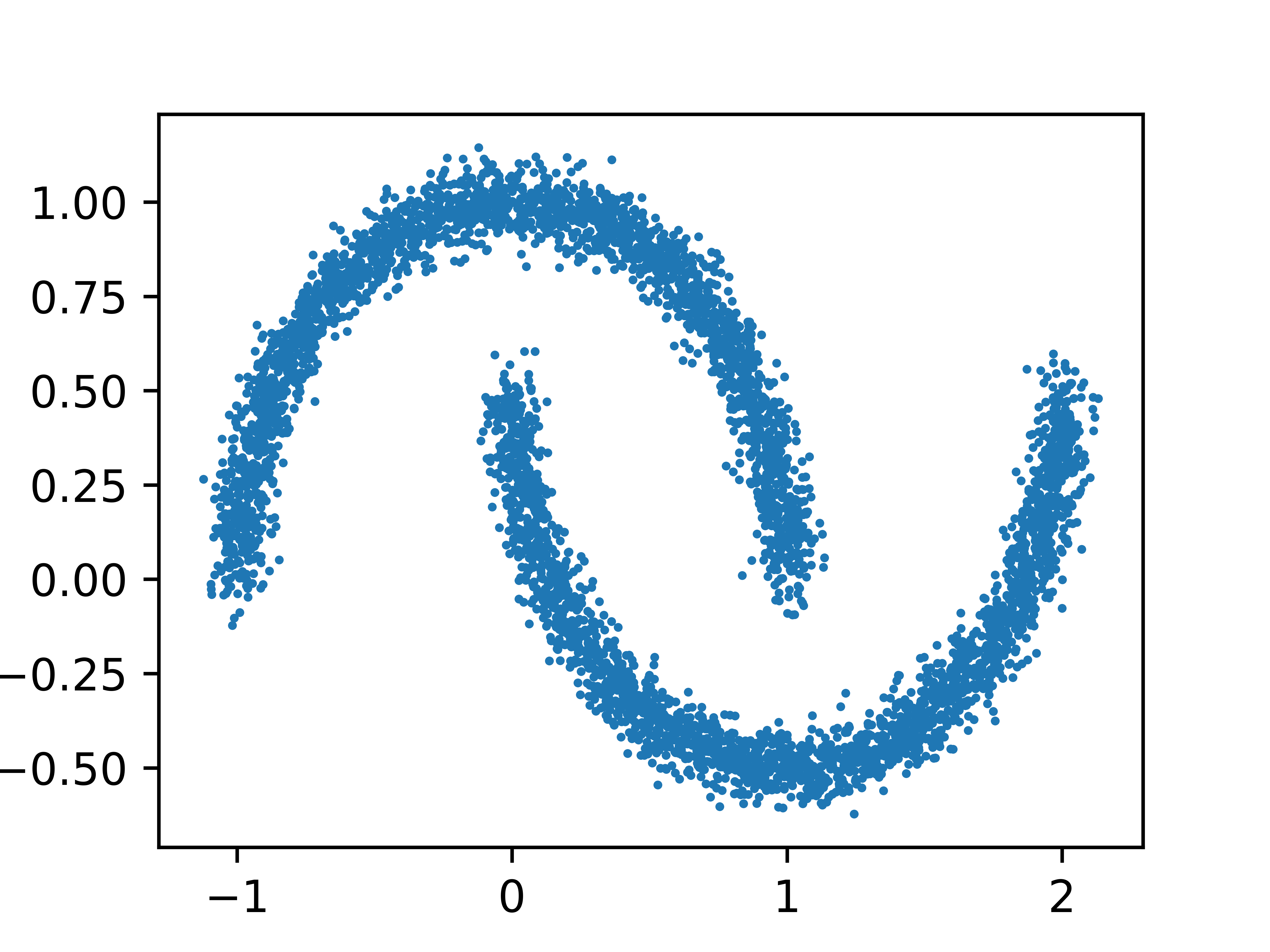}
    \caption{Samples from the two moons dataset.}
    \label{fig:two-moons-samples-supplemental}
\end{figure}

\subsubsection{Uniform hypercube/ball}

The datasets used to compare dimensional scaling between KNIFE and REMEDI are from the uniform distribution over a centered unit volume $d$-dimensional ball $B^d$ and cube $H^d$, respectively.
This is easily scaled to any dimension $d$.
By the unit volume, the true differential entropy of each dataset is $0$, for all dimensions.

\section{Code}

The code needed to replicate the experiments of this paper is found at \url{https://github.com/viktor765/REMEDI}. The project also uses code from the KNIFE repository, with the authors' permission, found at \url{https://github.com/g-pichler/knife/}.

\section{Computational cost}
The suite of pure entropy estimation tasks, including the two moons, triangle, ball, and hypercube datasets were run on NVIDIA A100 and finished in approximately 29 hours.
All IB experiments were run also using NVIDIA A100 GPUs. One replication of IB with \texttt{REMEDI} run for a single $\beta$ value took close to 2.5 hours for MNIST, 15 hours for CIFAR-10, and 35 hours for ImageNet.  

\section{Additional results on Information Bottleneck}
In this section, we provide additional results from applying different mutual information estimation approaches to the Information Bottleneck.

\subsection{Results based on log-likelihood}
\label{sec:results_loglike}

In this section, we evaluate the IB methods based on the log-likelihood metric on the three datasets. Fig~\ref{fig:loglike}, shows the plot of the log-likelihood vs the Lagrange multiplier $\beta$ for the IB methods. We observe that similar to classification accuracy results in the main paper \texttt{REMEDI} exhibits the highest log-likelihood on MNIST and ImageNet for most $\beta$ values especially those around the region where the log-likelihood starts to decrease. On CIFAR-10, although all the methods perform similarly in terms of log-likelihood \texttt{REMEDI} with Gaussian base distribution produces the highest log-likelihood.  

\begin{figure*}
    \centering
	\centering
	\begin{subfigure}{0.28\textwidth}
	\centering
	\includegraphics[width=\linewidth]{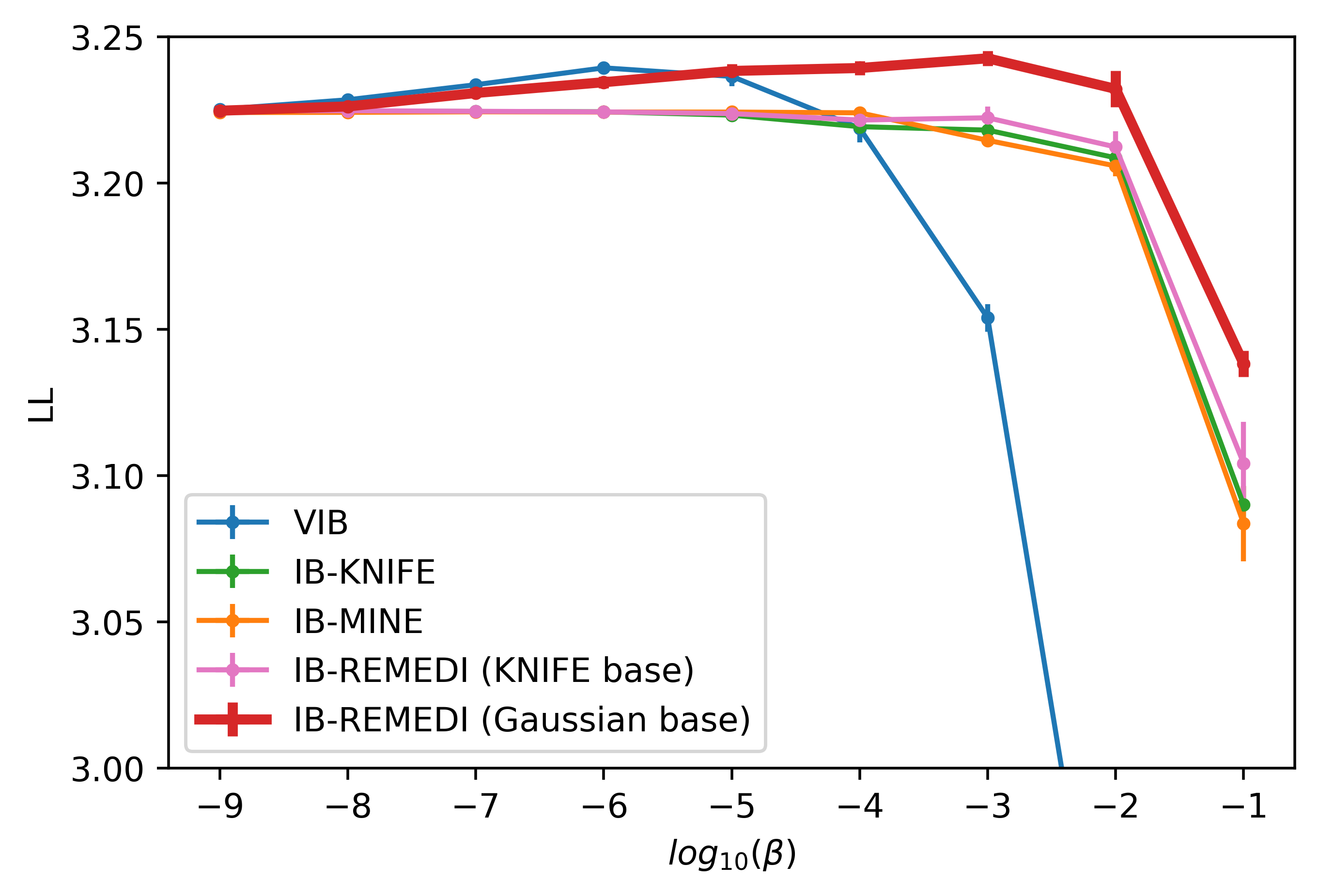}
	\caption{MNIST}
	\end{subfigure}
        \hspace{1cm}
	\begin{subfigure}{0.28\textwidth}
	\centering
	\includegraphics[width=\linewidth]{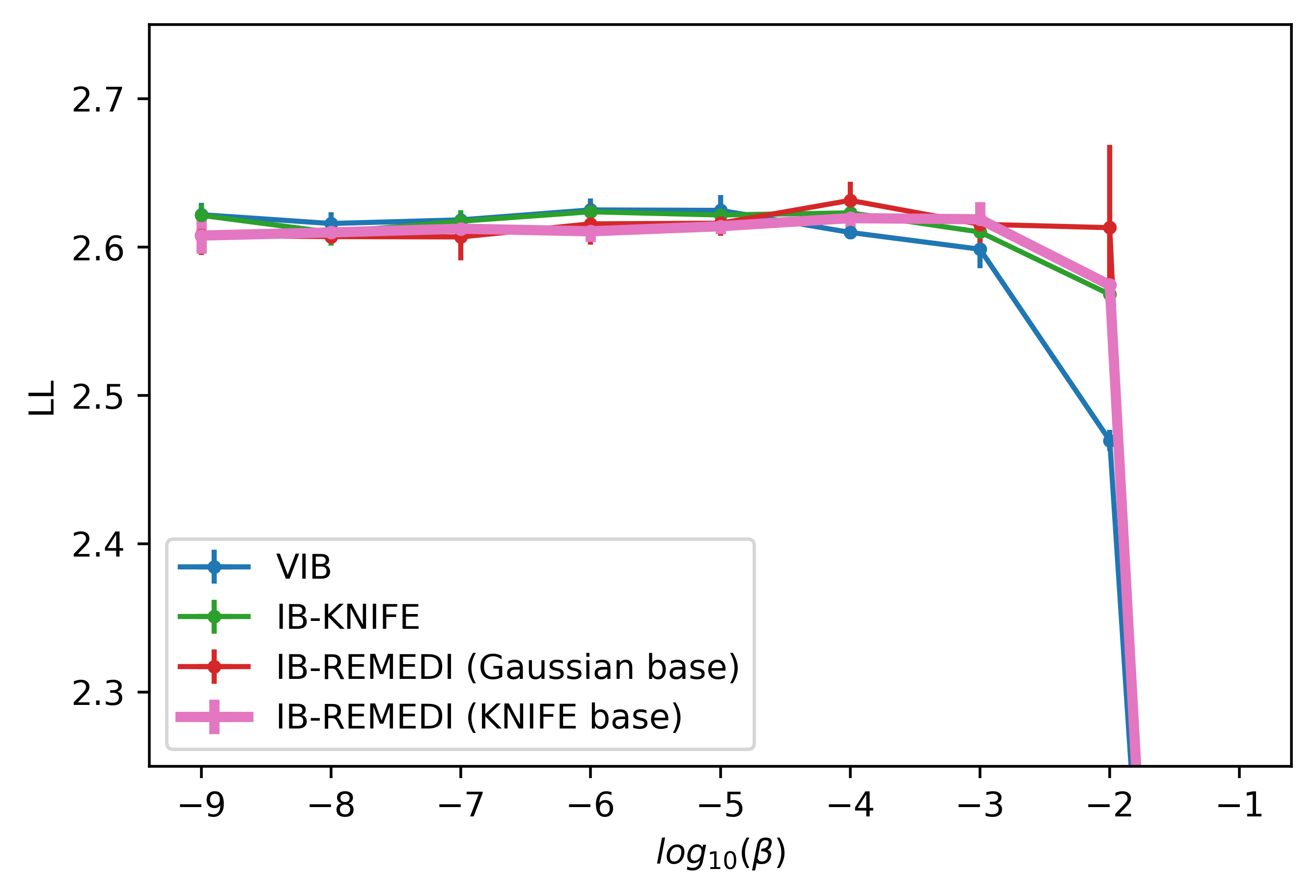}
	\caption{CIFAR10}
	\end{subfigure}
        \hspace{1cm}
	\begin{subfigure}{0.28\textwidth}
	\centering
	\includegraphics[width=\linewidth]{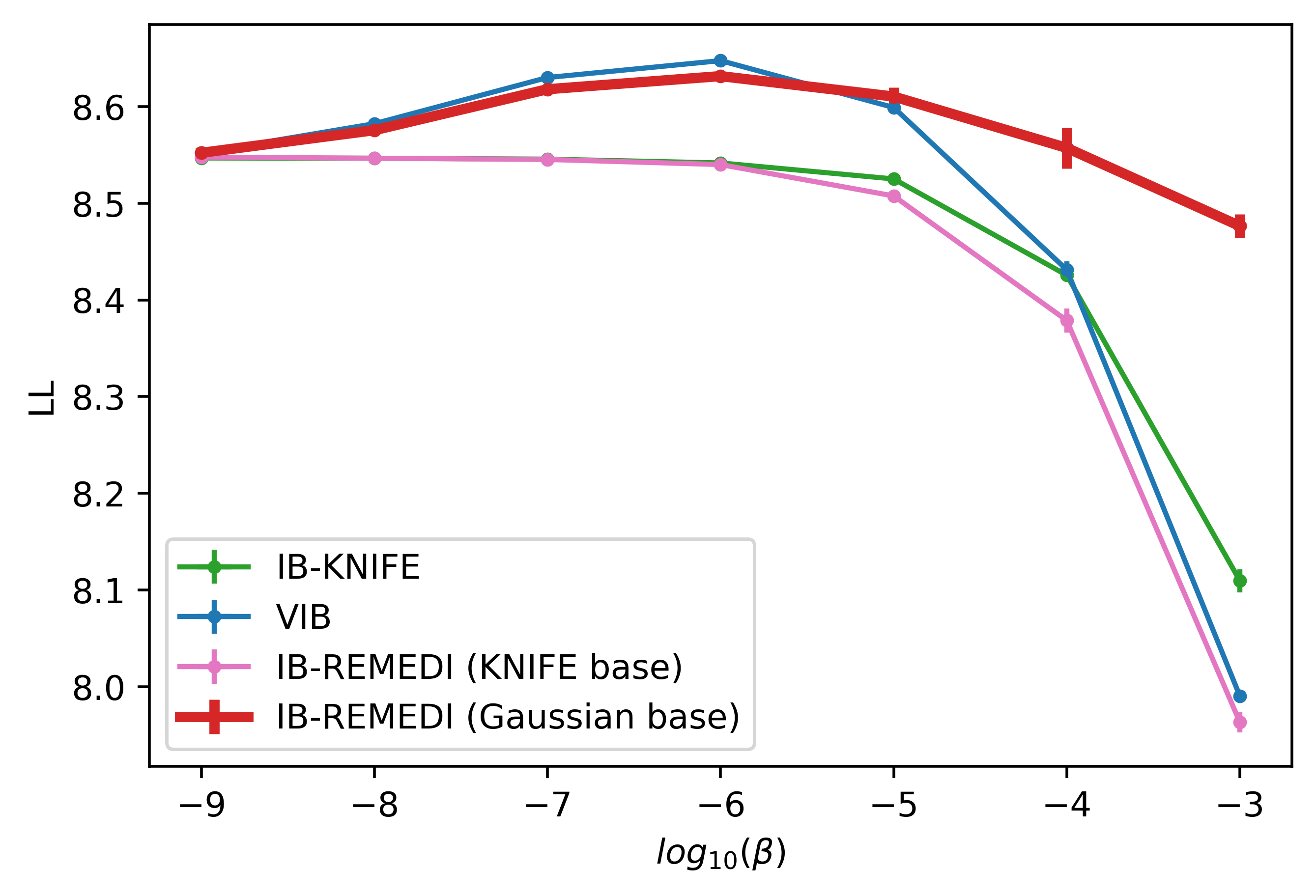}
	\caption{ImageNet}
	\end{subfigure}
    \caption{Plot showing log-likelihoods of the Information Bottleneck methods vs $\beta$ on benchmark image classification datasets (error bars represent standard deviations). For most $\beta$ values, consistently \texttt{REMDI} performs better than other methods on MNIST and ImageNet. On CIFAR10, the classification errors are similar for all the methods. However, \texttt{REMEDI} exhibits the highest log-likelihood across the $\beta$ values. }
    \label{fig:loglike}
\end{figure*}

\subsection{Latent space evolution on MNIST}
\label{sec:latentspace_evolution}
\begin{table}
  \centering
  \begin{tabular}{|M{1cm}|M{4cm}|M{4cm}|M{4cm}|}
    \hline
    \textbf{Epoch} & \textbf{Latent space} & \textbf{KNIFE} & \textbf{\texttt{REMEDI}} \\
    \hline
    0 & \includegraphics[width=0.25\textwidth]{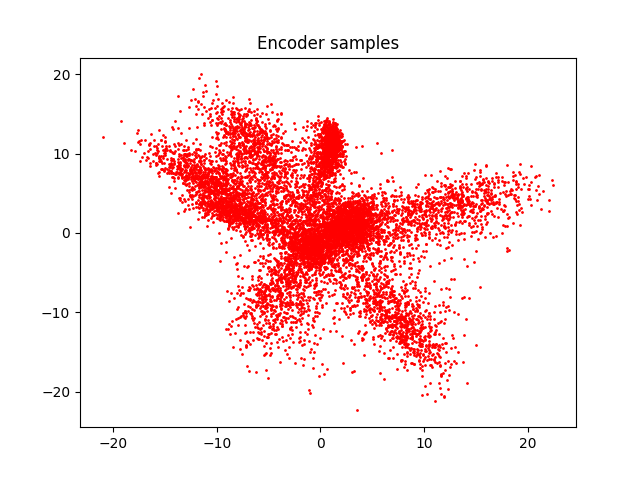} & \includegraphics[width=0.25\textwidth]{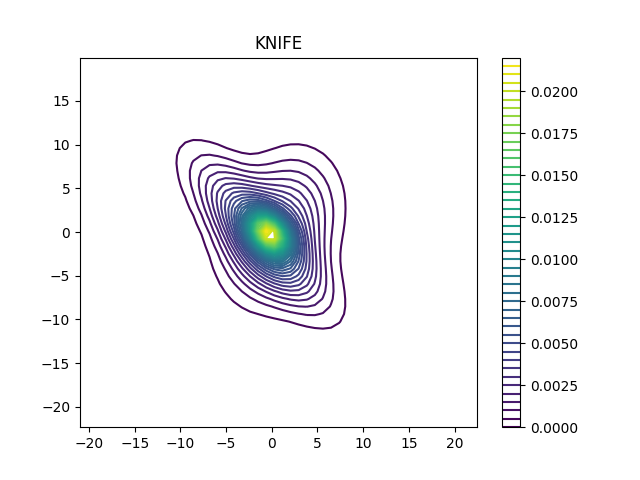} & \includegraphics[width=0.25\textwidth]{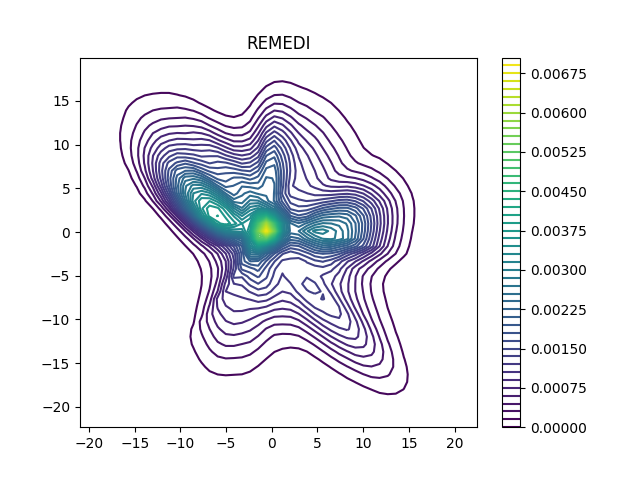} \\
    \hline
    20 & \includegraphics[width=0.25\textwidth]{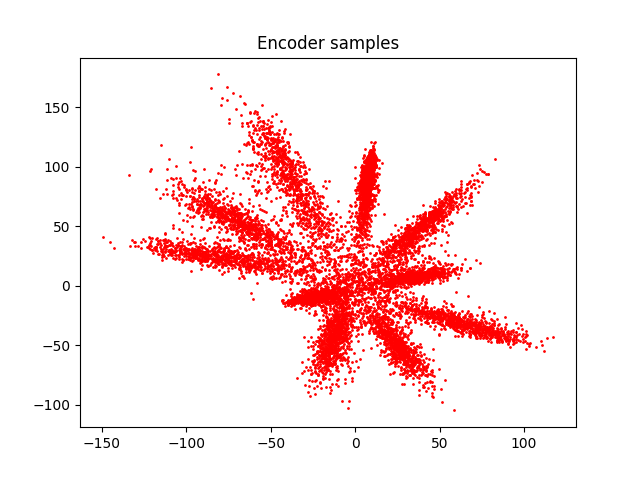} & \includegraphics[width=0.25\textwidth]{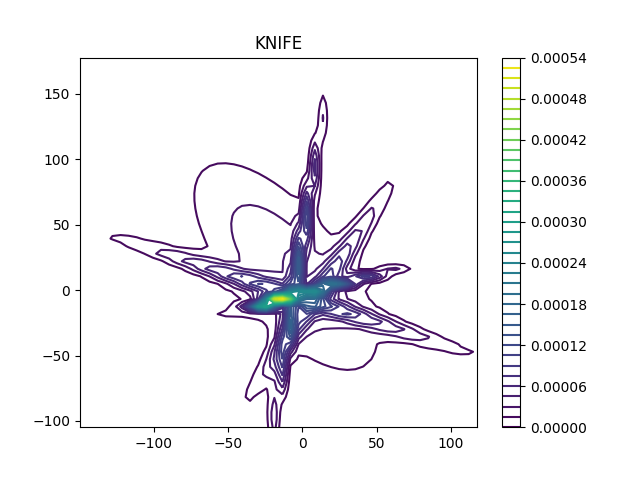} & \includegraphics[width=0.25\textwidth]{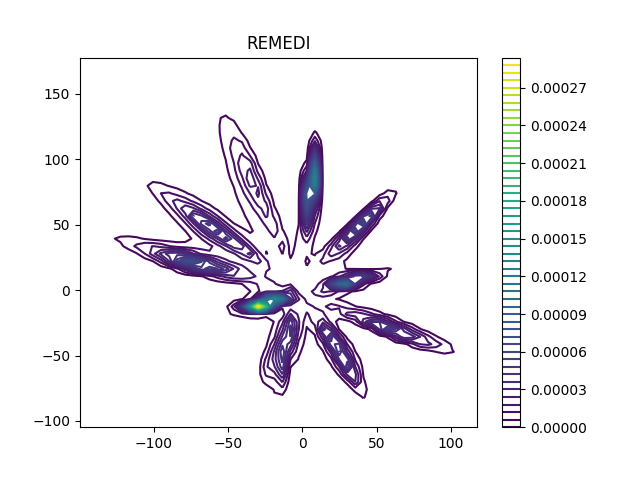} \\
    \hline
    40 & \includegraphics[width=0.25\textwidth]{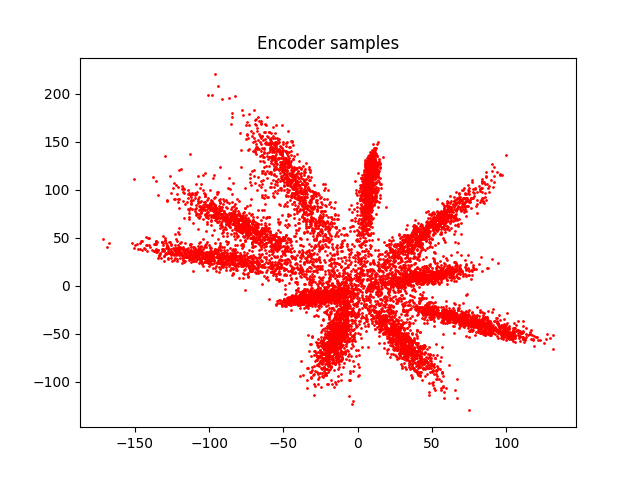} & \includegraphics[width=0.25\textwidth]{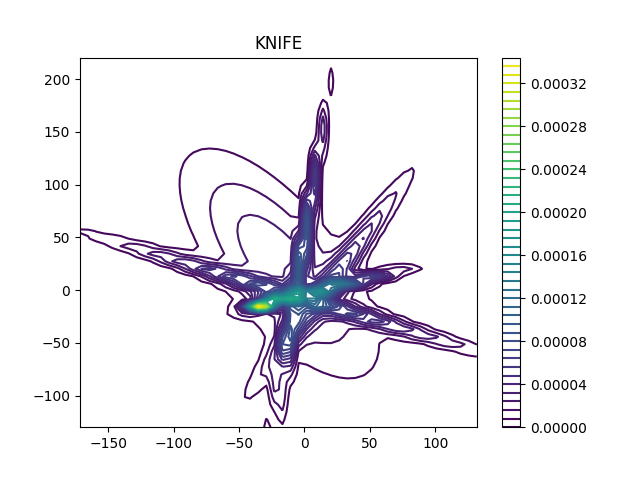} & \includegraphics[width=0.25\textwidth]{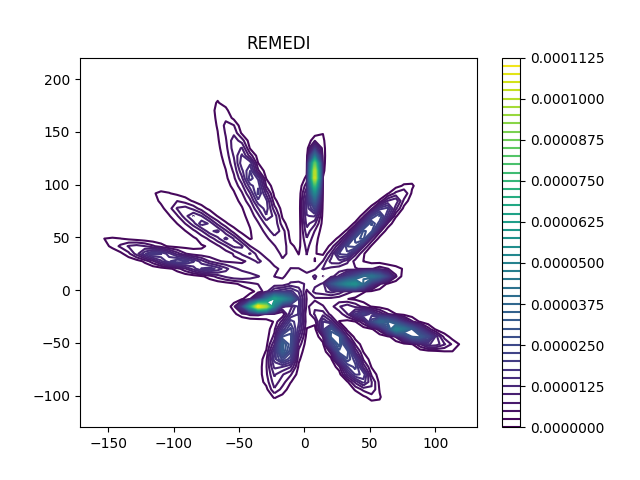} \\
    \hline
    60 & \includegraphics[width=0.25\textwidth]{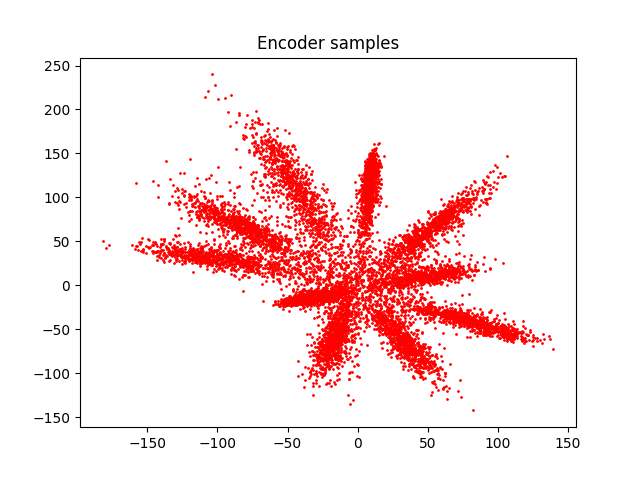} & \includegraphics[width=0.25\textwidth]{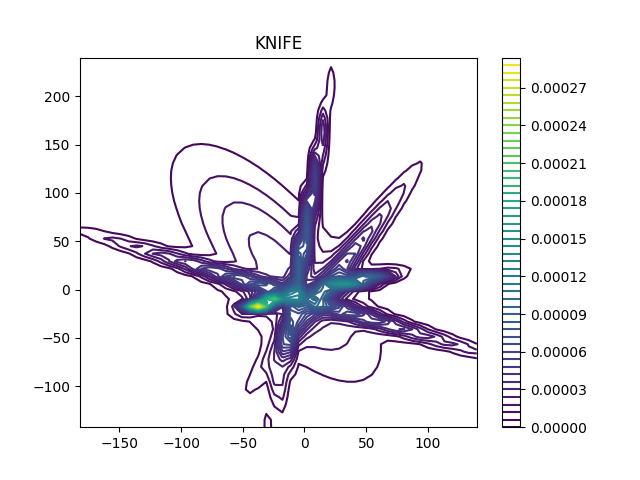} & \includegraphics[width=0.25\textwidth]{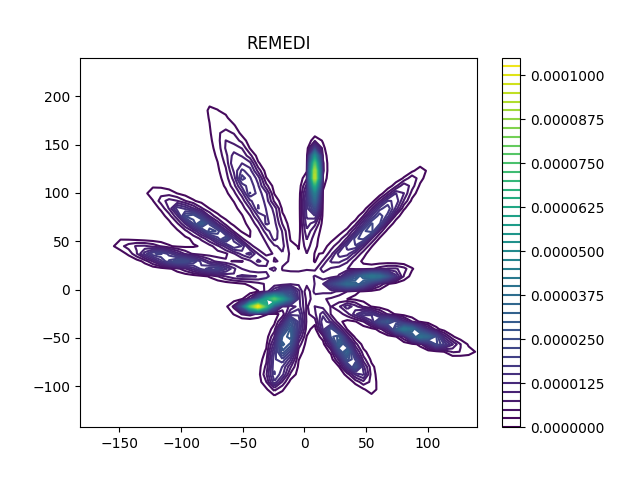} \\
    \hline
    90 & \includegraphics[width=0.25\textwidth]{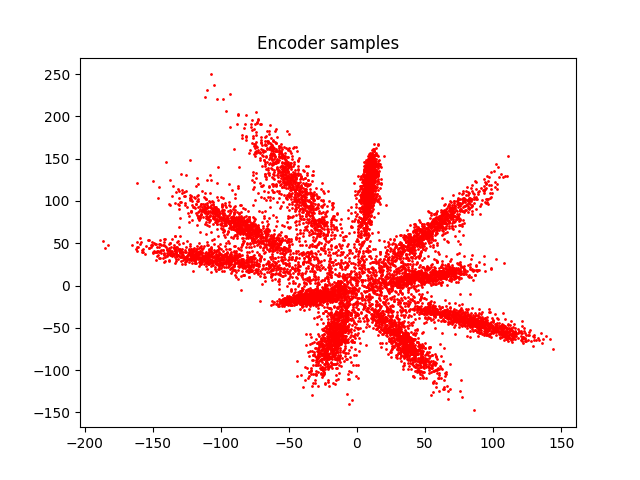} & \includegraphics[width=0.25\textwidth]{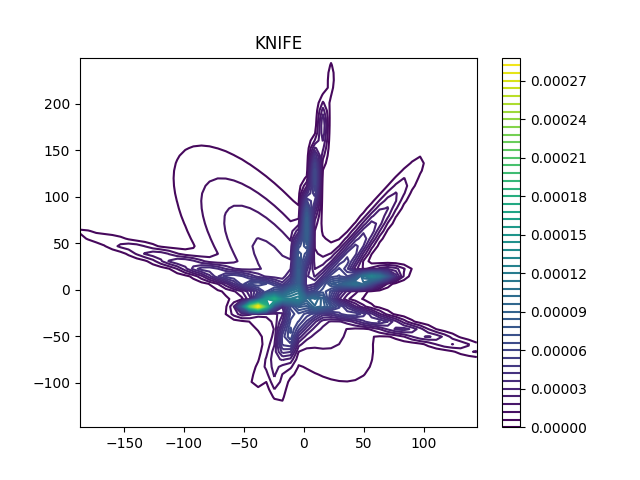} & \includegraphics[width=0.25\textwidth]{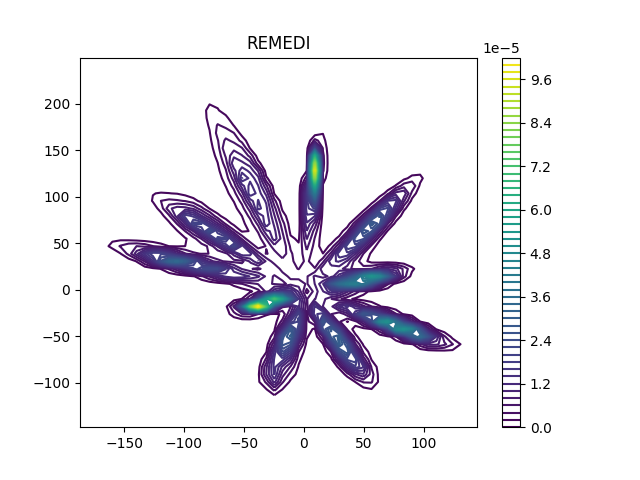} \\
    \hline
  \end{tabular}
  \caption{Sequence of latent space plots showing the evolution of the encoder samples and KNIFE and \texttt{REMEDI} density contours during training IB-REMEDI (KNIFE base) ($\beta=1e-09$) on MNIST.}
  \label{tab:latent_evol}
\end{table}

In Table~\ref{tab:latent_evol}, we plot different components of the 2-d latent space learned by IB-REMEDI with KNIFE base distribution throughout training on the MNIST dataset. The first column shows the latent space samples and the second and third columns show the KNIFE (10 components) and REMEDI contours as the training progresses. As the epoch increases, we observe that the latent space evolves into 10 clusters. We highlight that KNIFE struggles to learn the clusters especially when overlapping (e.g. epoch 0). To this end, REMEDI corrects the trained KNIFE and can locate the mass correctly around the clusters.

\subsection{Analysis of the latent space on CIFAR-10}
\label{sec:latentcifar10}
In this section, we present the 2-d latent space analysis on CIFAR-10. Similar to MNIST, we observe the samples from the latent space show clusters corresponding to the classes in Fig.~\ref{fig:encsamples_cifar}. In Fig.~\ref{fig:knifedensity_cifar}, we observe that the KNIFE fits the density well which is further improved by \texttt{REMEDI} corrections in Fig.~\ref{fig:REMEDI_cifar}. However, the gain from applying \texttt{REMEDI} from the KNIFE step is less in CIFAR-10 than it is on MNIST. 

\begin{figure}
    \centering
	\centering
	\begin{subfigure}{0.32\textwidth}
	\centering
	\includegraphics[width=\linewidth]{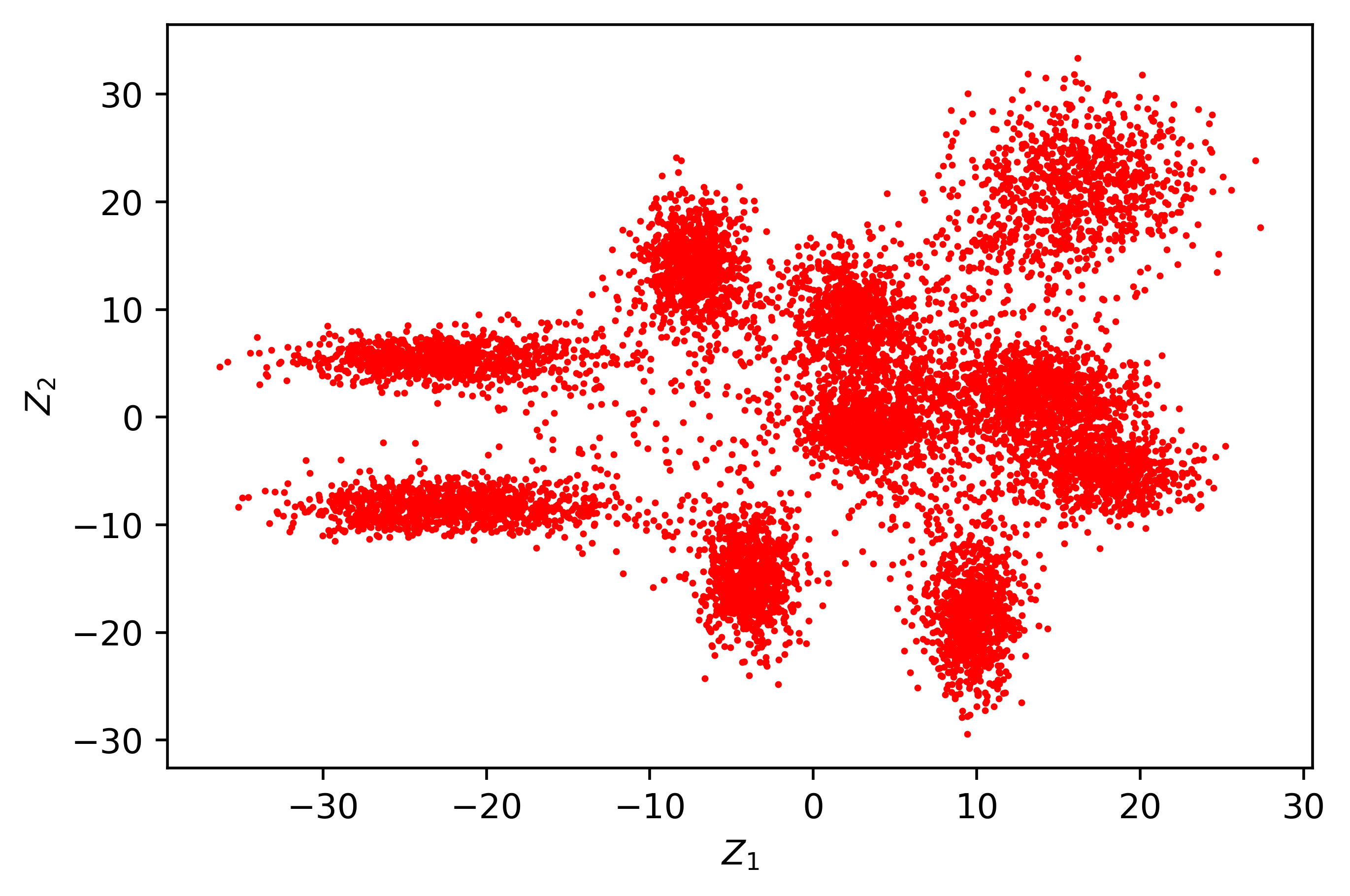}
	\caption{Encoder samples}
        \label{fig:encsamples_cifar}
	\end{subfigure}
	\begin{subfigure}{0.33\textwidth}
	\centering
	\includegraphics[width=\linewidth]{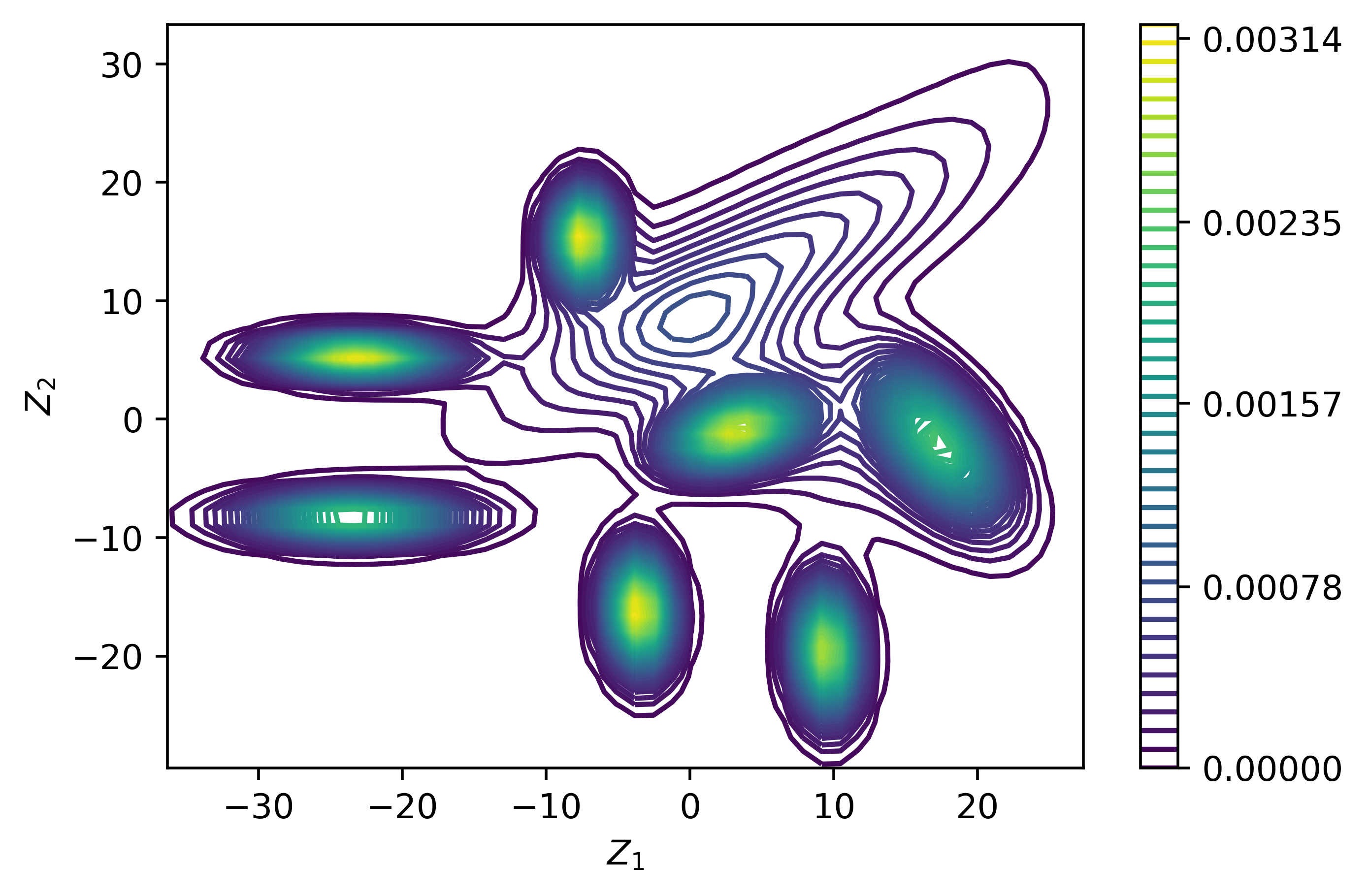}
	\caption{KNIFE contours}
        \label{fig:knifedensity_cifar}
	\end{subfigure}
	\begin{subfigure}{0.33\textwidth}
	\centering
	\includegraphics[width=\linewidth]{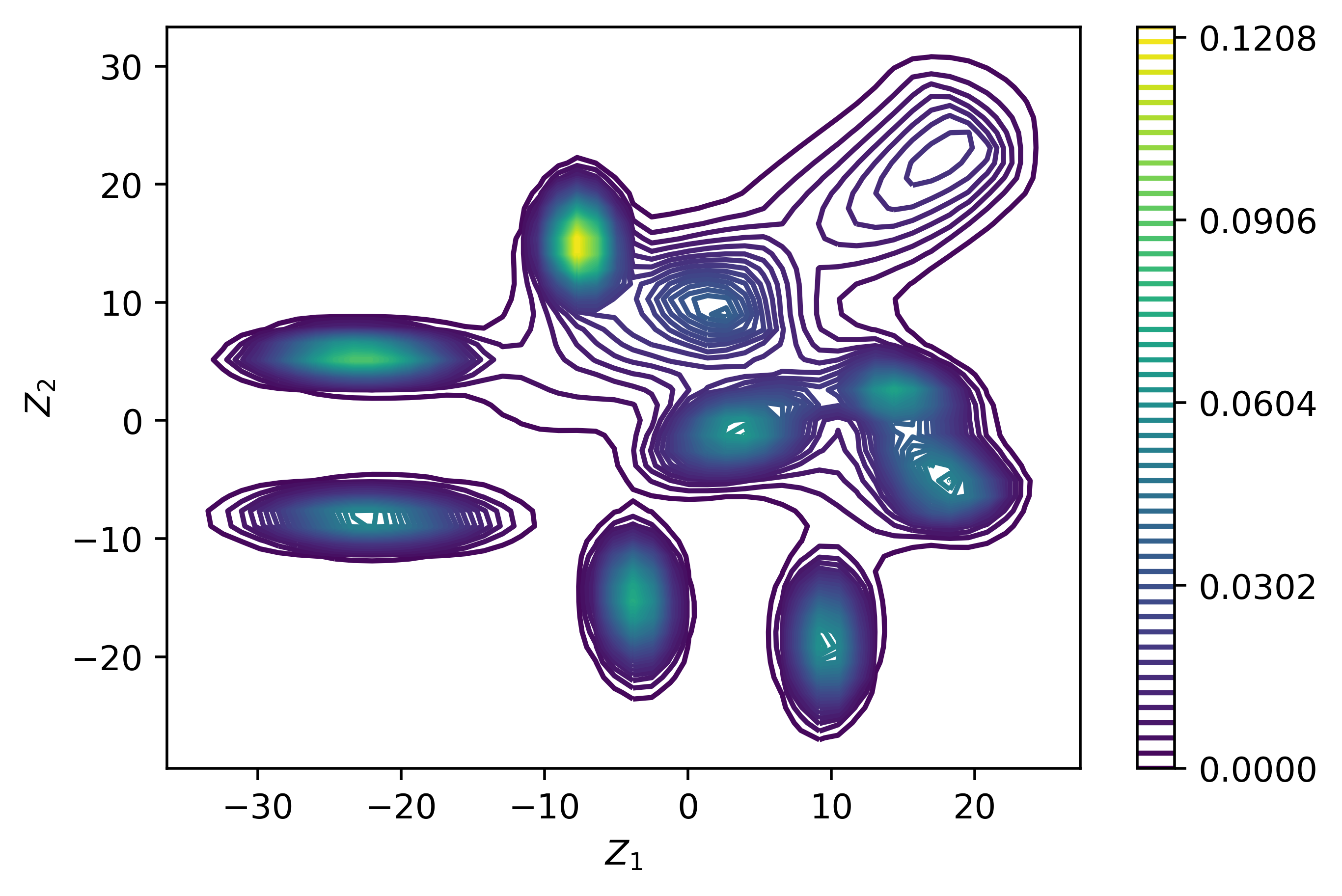}
	\caption{\texttt{REMEDI} contours}
        \label{fig:REMEDI_cifar}
	\end{subfigure}
    \caption{\texttt{REMEDI} marginal distribution of 2-d latent space on CIFAR-10.}
    \label{fig:latentspace_cifar}
\end{figure}

\end{document}